\newtheorem{thm}{Theorem}
\newtheorem{lem}[thm]{Lemma}
\newtheorem{prop}[thm]{Proposition}
\title{On  mixing Times of Metropolized Algorithm with Optimization Step (MAO): A New Framework}
\author{ EL Mahdi Khribch \\
	Department of Statistics\\
	Oxford University\\
	Oxford, OX1 3LB \\
	\texttt{el.khribch@kellogg.ox.ac.uk} \\
	\And
    George Deligiannidis \\
	Department of Statistics\\
	Oxford University\\
	Oxford, OX1 3LB \\
	\texttt{george.deligiannidis@stats.ox.ac.uk} \\
	\And
   Daniel Paulin \\
 School of Mathematics \\
	University of Edinburgh.\\
	 Edinburgh EH9 3FD \\
	\texttt{dpaulin@ed.ac.uk} \\
}
\begin{document}
\setcounter{page}{1}
\maketitle
\begin{abstract}
In this paper, we take a closer look at the problem of sampling from a class of distributions with thin tails supported on $\mathbb{R}^d$ and make two primary contributions. First, we propose a new algorithm capable of working in regimes where the Metropolis-adjusted Langevin algorithm (MALA) is not converging or lacking in theoretical guarantees~\cite{tweediemala}. In addition, we derive upper bounds on the mixing time of the Markov chain generated by the said algorithm. The details of the proof follow the same technical methods used in previous works~\cite{dwivedi2018log}, and~\cite{chewi2020}. We also leverage the framework developed by~\cite{chenhmc} to extend the dependency on the warmness leveraging log-isoperimetric inequality results.
\end{abstract}

\keywords{MCMC  algorithms \and Metropolis-Hastings algorithms \and Log-concave  sampling}

\section{Introduction}
The ability to draw samples from a distribution is at the heart of many applications within the Bayesian paradigm and, more generally, in computational statistics. Markov Chain Monte Carlo pioneered by ~\cite{metropolis1953equation}, is often considered among practitioners as the default method for obtaining samples from distributions in a high-dimensional setting. 
In practice, variants of the Metropolis-Hastings enjoy tremendous success, notably in posterior exploration within a Bayesian setting  ~\cite{carpenter2017stan,smith2014mamba}. In addition, Monte Carlo methods are commonly deployed in several applications: estimating the posterior mean, computing expectations of quantities of interest, and volumes of particular sets.
Recently the research community has been interested in a noticeable manner in sampling methods and their interplay with the more established field of optimization~\cite{Ma}. More specifically, due to the asymptotic nature of MCMC methods, a more tractable characterization of the dimension dependency of the convergence is an essential step in order to develop a better understanding of the convergence of this class of algorithms and to practical guidelines for practitioners.
\subsection{Problem set up and Related work }
The standard formulation of non asymptotic bounds on sampling methods considers the task of drawing samples from a target distribution $\Pi$  with with density  $\pi$ supported on $\mathbb{R}^d$  such that 
 \begin{equation}
 \label{EqnDefnTarget}
     \pi(x)=\frac{e^{-f(x)}}{\displaystyle \int_{\mathbb{R}^d} e^{-f(y)} dy }  \forall x\in \mathbb {R}^d
 \end{equation}
~\cite{Roberts97}  was the first to study the dimension dependence of Random Walk Metropolis algorithm (RWM) for a class of potentials $f:\mathbb{R}^d\rightarrow \mathbb{R}$ in the case of a chain of i.i.d samples from a strongly convex and smooth potential, this resulted in the establishment of an asymptotic framework leading to the scaling limit for RWM when the dimension tends to infinity with a  step size $h\approx d^{-1}$, where d is the dimension of the problem. Subsequently \cite{roberts2001optimal} extended the same approach  to MALA, concluding that the dimension dependence when considering  MALA is of $d^{-\frac{1}{3}}$, for a step size of order $h\approx d^{-\frac{1}{3}}$. Recent years have witnessed a surge of interest as to the non-asymptotic performance of sampling algorithms- (mainly MALA) - over the class of smooth and strongly convex potentials. We note the work of~\cite{dwivedi2018log} and ~\cite{chenhmc}  that shows that RMW can achieve a mixing time of order $\mathcal{O}(d\log(\frac{1}{\epsilon}))$  within $\epsilon$ error in $\chi^2$-distance, thus providing an explicit  non-asymptotic bound on the scaling limit of \cite{roberts2001optimal}, while ~\cite{chewi2020}  proved that the optimal mixing time of MALA over said class of potentials with a warm start is of order $\tilde{\mathcal{O}}(\sqrt{d})$. However, there are instances in which MALA does not satisfy geometric ergodicity as established by ~\cite{tweediemala}, and in such regimes, we expect  MALA to fail, more particularly in this work, we take a closer look at the class of thin tailed distributions, that is distributions with tails that decay with a faster rate than a Gaussian.
\subsection{ On mixing times and initialization}
The recent work by ~\cite{chenhmc} adapts several key techniques for establishing the convergence of continuous-state Markov chains inspired by the large body of literature on discrete-state Markov(~\cite{lovasz1993randomsurvey,aldous2002reversible}). Their results show an improvement in the dependency of the mixing time on the initial distribution. More specifically, the framework developed proves   that the logarithmic dependency of the mixing time of a
Markov chain on the warmness parameter (see section~\ref{par:initial_distribution_} for a formal definition) of the initial distribution can be enhanced if certain assumptions hold to doubly logarithmic. The novelty in driving this improvement is the use of log-Sobolev inequalities instead of the usual isoperimetric inequality, mainly log-isoperimetric inequality; this result has improved mixing times in their treatment of Metropolized Hamiltonian Monte Carlo. In this work, we will make use of the improved dependence on warmness under the framework detailed in the work of ~\cite{chenhmc} in the case of thin-tailed potential targets.

\subsection{Contributions of our current work}
In this paper, we propose a new MCMC method called Metropolis algorithm with optimization step (MAO), and prove two main results. First, we derive a non-asymptotic upper bound on the mixing time of our new Algorithm for the class of thin tailed densities, and we improve the dependency on the warmness using the framework developed by ~\cite{chenhmc}. In particular, we recover the rate derived by ~\cite{dwivedi2018log} for MALA.
Our second contribution consists of deriving improved mixing times for potentially thin tailed distributions that verify tighter concentration on a ball of radius $\propto d^{1/ \alpha}$ where $\alpha \geq 2$, with $\alpha=2$ corresponding to Gaussian concentration. These contributions show that unlike MALA, the MAO algorithm is well suited for exploring distributions with thin tails (as the acceptance rate stays high everywhere in the state space). Finally, we also include extensive numerical simulations that are consistent with our theoretical results and show orders of magnitude improvements in sampling efficiency for thin tailed potentials.
\section{Background}
\label{sec:problem_set_up}
In this section, we detail some background on Markov chain Monte Carlo methods. Then we describe the set of assumptions that will be considered on the target distribution along with a set of new terminology. In addition, we describe some of the most notable algorithms belonging to the class of Metropolis-Hastings. Most notably, we describe the Metropolized random walk (MRW) and the Metropolis-adjusted Langevin algorithm (MALA). We also restate the rates of convergence of existing random walks when targeting log-concave distributions.

\subsection{Background on Markov chain Monte Carlo}
\label{sub:monte_carlo_markov_chain_methods}
When considering a target distribution $\Pi$  with a density $\pi$, we are interested in the task of drawing samples from $\Pi$. In practice, the samples obtained can be used to approximate expectations of random quantities.Indeed, for a given function  \mbox{$h: \mathcal{X} \rightarrow \mathbb{R}$} we seek to 
approximate $\mathbb{E}_{\pi}[{h(X)}] =
\int_{\mathcal{X}} h(x) \pi(x) dx$.  This particular task is untractable analytically due to the well-known curse of dimensionality.
The intuition behind Monte Carlo simulation methods is to generate  i.i.d. random samples $Z_i \sim
\Pi$ for $i = 1, \ldots, N$, the
random variable $ \hat{Z}_{n} := \frac{1}{N} \sum_{i=1}^N h(Z_i)$ is an unbiased estimator of the expectation. In practice it is often a challenging task to draw i.i.d. samples $Z_i$, in particular this problem is notoriously difficult in a high dimensional setting.

Markov chain Monte Carlo (MCMC)  is rooted in the principle of constructing an irreducible, aperiodic discrete-time Markov chain with an initial distribution $\mu_0$ and whose stationary distribution is the desired target distribution $\Pi$.

When studying MCMC methods, we are interested in studying the design of such chains and in the properties of their convergence, mainly the number of steps it takes the chain to converge to the stationary distribution. Such questions have been investigated throughout the years, culminating in a considerable body of research. We refer to the reviews by  ~\cite{tierney1994markov,smith1993bayesian,roberts2004general} for a comprehensive review. In this work, we are interested
in comparing the performance of the acclaimed Metropolis-Hastings adjusted Markov chain algorithm (MALA) to our proposed Algorithm (MAO).  Our main goal is to tackle the second question in the case of our proposed Algorithm, particularly via establishing an explicit non-asymptotic mixing-time bound on a specific class of target distributions and thereby characterizing how it converges in regimes where MALA can potentially fail.
\subsection{Terminology and Assumptions.}
\label{sub:assumptions}
From now on, we assume the familiarity of the reader with elementary notions on Markov chains.

We study the task of sampling from a distribution $\Pi$ supported on $\mathbb{R}^d$  with density $\pi$, where the density $\pi$ is of the form $\pi(x)\propto e^{-f(x)}$,  $f$ is referred to as  potential throughout this work. We will make different assumptions on the potential $f$. Finally ,we make the assumption that  $f(0)=\textrm{arg}\min f(x) = 0$ so that $\nabla f(0)=0$.

Throughout this work, we will make different types of assumptions on the potential $f$. However, we first introduce notations and certain regularity assumptions 
\paragraph{Regularity Assumptions.}
\label{par:regularity_conditions_}
\begin{subequations}
A potential $f$ is said to be :
  \begin{align}
    \label{eq:assumption_smoothness}
    L\text{-smooth}: \quad\ \  f(y)-f(x)-\nabla f(x)^\top(y-x)\leq\frac{L}{2}\left\|x-y\right\|_{2}^2,\quad \forall  x, y \in
  \mathbb{R}^d
    \\
    \label{eq:assumption_scparam}
    m\text{-strongly convex}: \quad\ \ f(y)-f(x)-\nabla f(x)^\top(y-x)\geq\frac{m}{2}\left\|x-y\right\|_{2}^2,\quad \forall  x, y \in
  \mathbb{R}^d
  \end{align}
\end{subequations}
\\

\paragraph{Thin tailed potentials.}
\label{THin}
For $\alpha\geq 2$, we define a  a class of potentially thin tailed potentials $\mathcal{E}(\alpha)$ where we have for a given function $f\in\mathcal{E}(\alpha)$ we have $f:
\mathbb{R}^d\rightarrow \mathbb{R}$ continuously differentiable, and
\begin{subequations}
\begin{align}
    \label{eq:alpha_f}
    \textrm{(i)}
    \quad\ \ 
    \left\|\nabla f(x)\right\|\leq K_1(1+\left\|x-x^*\right\|^{\alpha-1}),\\
     \label{eq:nabla_f}
     \textrm{(ii)} \quad\ \ \left\|\nabla^2 f(x)\right\|\leq K_2(1+\left\|x\right\|^{\alpha-2}),
\end{align}
\end{subequations}
where  $K_1,K_2>0$ and the minimizer $x^*$ depend on $f$. We also define the subclass $\mathcal{E}(\alpha,m)\subset \mathcal{E}(\alpha)$ of $m$-Strongly convex potentials within the $\mathcal{E}(\alpha)$ set.

Note that assumption (ii) is redundant but we state it in the following form for practical considerations.
\paragraph{Assumptions on $\Pi$.}
\label{par:assumptions_on_the_target_distribution_}
We provide here after the two sets of sets of assumptions that we impose on the target distribution:
\begin{enumerate}[label=(\Alph*)]
  \item \label{itm:assumptionA} 
  A target distribution $\Pi$ is said to be $(\alpha,m)$-\textit{strongly log-concave} if its potential $f$ is
     $m$-strongly convex ~\eqref{eq:assumption_scparam} and 
     $f \in\mathcal{E}(\alpha)$. We call the set of such potential functions $\mathcal{E}(\alpha,m)$.
  \item \label{itm:assumptionB} 
  A target distribution $\Pi$ is said to be $(\alpha,\gamma,m)$-\textit{strongly log-concave} if its potential $f$ satisfies the following
  \begin{enumerate}
     \item $f$ is $m$-strongly convex ~\eqref{eq:assumption_scparam}, 
     \item 
     $f \in\mathcal{E}(\alpha)$,
     \item There is a polylogarithmic function \footnote{polylogarithmic means that $\tau(s)\le p(\log(1/s))$ for some polynomial $p$}
     $\tau:\mathbb{R}_+\to \mathbb{R}_+$
     (depending on $f$) such that for every $s>0$, there is a convex set $\Omega\in \mathbb{R}^d$ containing $x^*$ such that $\Pi(\Omega)\ge 1-s$ and $\textrm{diam}(\Omega)\le \tau(s) d^{\gamma}$. 
     \end{enumerate}
     We call the set of such potential functions $\mathcal{E}(\alpha,\gamma,m)$.
\end{enumerate}

Assumption~\ref{itm:assumptionA}  is a relaxation of assumptions in several past papers on Langevin algorithms where the potential $f$ was considered to be both  $m$-strongly convex ~\eqref{eq:assumption_scparam} and $L$-smooth~\eqref{eq:assumption_smoothness}(~\cite{dalalyan2016theoretical,dwivedi2018log,cheng2017convergence}). We note that for the class of smooth potentials the target distribution satisfies the conditions of Assumption~\ref{itm:assumptionA}. In addition, it is worth noting that Assumption~\ref{itm:assumptionB} demands tighter concentration on a region with high probability\footnote{ $\propto$ here signifies that the diameter is proportional to the dimension  up to a polylogarithmic constant independent of the dimension}, it is not difficult to prove that Assumption~\ref{itm:assumptionA} is implied by Assumption~\ref{itm:assumptionB}.


\subsection{Metropolized Random Walk Algorithms}
~\label{sub:related_sampling_algorithms}
We describe here the general structure of said Markov chains given an initial density $\mu_0$ over $\mathbb{R}^d$, which is simulated in two steps: 

\begin{enumerate}
    \item \textbf{Proposal Step}: Given a \textit{proposal function} $Q: \mathbb{R}^d\times\mathbb{R}^d\rightarrow \mathbb{R}_{+}$, where $Q(x,.)$ is a proposal density for $x\in \mathbb{R}^d$. At each iteration, and given a current state $x\in\mathbb{R}^d$ of the chain, a new vector $z\in\mathbb{R}^d$ is proposed by sampling from $Q(x,.)$.
    \item \textbf{Accept-Reject correction Step}: The proposed sample $z\in\mathbb{R}^d$ is retained   as the new state of the  Chain with probability: 
    \begin{equation}
        A(x,z):=\min \left\{1,\frac{\pi(z)Q(z,x)}{\pi(x)Q(x,z)}\right\}
    \end{equation}
    Otherwise it is rejected and the chain remains at $x\in\mathbb{R}^d$ with probability $1- A(x,z)$.
\end{enumerate}
This construction yields a reversible Markov chain with stationary distribution $\pi$ and a transition kernel given by
\begin{subequations}
\begin{align}
    T(x,y):=T_x(y)= [1-A(x)]\delta_x(y)+ Q(x,y)A(x,y)\quad \\
     A(x)=\displaystyle\int Q(x,y)A(x,y)dy=\displaystyle\int Q(x,y)\min \left\{1,a(x,y)\right\}dy=\min \left\{1,\frac{\pi(y)Q(y,x)}{\pi(x)Q(x,y)}\right\}
\end{align}
\end{subequations}


\paragraph{Transition operator.} 
\label{par:transition_operator_}

Let $\mathcal{T}$ be the transition operator of the Markov
chain on 
$\mathcal{X}$. That is, given an initial distribution $\mu_0$ on $\mathcal{X}$,  $\mathcal{T}(\mu_0)$ stands for the
distribution of the next state of the chain. We have for any measurable set $A \in \mathcal{B}(\mathcal{X})$: 
$\mathcal{T}(\mu_0)(A) =\int_{\mathcal{X}} T (x, A)
\mu_0(x) dx$ 
Similarly, $\mathcal{T}^k$ denotes the $k$-step transition operator.
 $\mathcal{T}_x$ denotes $\mathcal{T}(\delta_x)$,
the \emph{transition distribution at $x$}, where $\delta_x$ is the 
Dirac distribution at $x \in \mathcal{X}$.
We have that $\mathcal{T}_x=\mathcal{T}(x, \cdot)$.

\paragraph{Distances between distributions.}
\label{par:convergence_criterion_}

One tractable way to characterize the convergence of a Markov chain is to consider its mixing time with respect to a given class of distances, one such example is the class of $\mathcal{L}_p$-distances for $p \geq
1$. For a given distribution $Q$  with density $q$, we define its 
$\mathcal{L}_p$-divergence with respect to the positive density $\nu$ as
the following : 
\begin{subequations}
\label{eq:convergence}
\begin{align}
\label{eq:l_p_distance}
 d_{\mathfrak{p}}(Q, \nu) = \left(\int_{\mathcal{X}}
   \left|\frac{q(x)}{\nu(x)} - 1\right|^{\mathfrak{p}} \nu(x) dx \right)^{\frac{1}{\mathfrak{p}}}.
\end{align}

Note that when $\mathfrak{p}=2$, we recover the  $\chi^2$-divergence. For $\mathfrak{p}= 1$,
the distance $d_{1}(Q, \nu)$ is equivalent to two times the total variation
distance between $Q$ and $\nu$. Where  $d_{\text{TV}}(Q,\nu)$ denotes the total variation distance.

\paragraph{Mixing time of a Markov chain.}
\label{par:mixing_time_of_markov_chains_}

The mixing time of a Markov chain is defined as the minimum number of steps taken by the chain in order to be within $\epsilon$  of the target distribution in $\mathcal{L}_p$-norm, given the fact that it starts from an initial distribution $\mu_0$. More formally, given a Markov chain with initial distribution $\mu_0$, a transition operator $\mathcal{T}$ and a target distribution $\Pi$  with density $\pi$. Its $\mathcal{L}_p$ mixing time with respect to $\Pi$ is given by
\begin{align}
\label{eq:tmix_defn}
  \tau_{\mathfrak{p}}(\epsilon; \mu_0) = \inf\left\{{k \in \mathbb{N}
     | \quad  d_{\mathfrak{p}}\left({\mathcal{T}^k(\mu_0), \Pi}\right) \leq
    \epsilon}\right\}.
\end{align}
where $\epsilon > 0$ is a predefined error tolerance.
\end{subequations}


\paragraph{Warmness of initial distribution.}
\label{par:initial_distribution_}

Next, we define the notion of warmness. More formally, a Markov chain on  $\mathcal{X}$ with stationary distribution $\Pi$ is said to have a $\beta$-\textit{warm start} if its initial distribution $\mu_0$ is such that
\begin{align}
\label{eq:def_warm}
\sup_{A \in \sigma(\mathcal{X})}\left( \frac{\mu_0(A)}{\Pi(A)}\right) \leq
\beta,
\end{align}
where $\sigma(\mathcal{X})$ stands for the Borel $\sigma$-algebra of $\mathcal{X}$. In simpler terms, $\mu_0$ is said to be a warm start if $\beta$ is a small constant independent of the dimension $d$.
\paragraph{Notable Metropolized random walk Algorithms.} 

We describe here some of the most celebrated sampling 
algorithms on $\mathcal{X}=\mathbb{R}^d$. First, we consider the Metropolized Random Walk algorithm (RWM),
next we consider the Metropolis adjusted Langevin algorithm (MALA).

\subsubsection{Metropolized Random Walk algorithm (RWM)} 

The Metropolized Random Walk algorithm (RWM) considers the task of constructing a Markov chain in order to sample from  densities taking the form~\eqref{EqnDefnTarget} defined on $\mathbb{R}^d$. Indeed, Given a state $x_t \in\mathbb{R}^d$ at iterate $t$,  the algorithm proceeds by generating a proposal vector $z_{t + 1} \sim
\mathcal{N}(x_t, 2h\mathbb{I}_d)$, where $h > 0$ is a step-size
parameter specified. The algorithm decides to accept or reject $z_{t+1}$
using a Metropolis-Hastings decision rule, see Algorithm~\ref{algo:mrw} for more details. The RMW algorithm is a zeroth order method since it requires information about the function $f$ only by accessing function values, not its gradients.

\begin{algorithm}[ht]
\footnotesize
  \KwIn{Step size $h> 0$ and a sample $x_0$ from a starting
    distribution $\mu_0$} \KwOut{Sequence $x_1, x_2,\ldots$}
  \For{$t=0, 1, \ldots $}
  {%
      \textbf{Proposal step}: \emph{Draw} $z_{t+1} \sim \mathcal{N}(x_t, 2h
      \mathbb{I}_d)$  \\
       \textbf{Accept-reject step}:\\
      \quad compute $A_{t+1} \gets \displaystyle \min \left\{1,
          \frac{\exp(-f(z_{t+1}))}{\exp(-f(x_{t})}\right\}$\\
        \quad With probability $A_{t+1}$ \emph{accept}  proposal:
        $x_{t+1} \gets z_{t+1} $\\
        \quad With probability $1-A_{t+1}$ \emph{reject}  proposal:
        $x_{t+1} \gets x_t$
  }
  \caption{Metropolized Random Walk (RWM)}
  \label{algo:mrw}
\end{algorithm}
\subsubsection{MALA algorithm}
The Metropolis-adjusted Langevin algorithm (MALA) on the other hand is a first order method, it requires access to the function value
$f(\cdot)$, aswell as its gradient $\nabla f(\cdot)$
at any state $x \in \mathbb{R}^d$.  Given state $x_t$ at iterate $t$,
we query the value of $\left(f(x_t), \nabla f(x_t)\right)$ and then proceeds by generating  a new proposal $z_{t+1} \sim \mathcal{N}(\displaystyle x_{t} - h\nabla f(x_{t}), 2h \mathbb{I}_d)$, followed by
Metropolis-Hastings correction;  we refer to Algorithm~\ref{algo:mala} for the details. The MALA algorithm can be deduced Langevin diffusion (SDE) by considering an Euler-Maruyama discretization scheme. The
Langevin diffusion being a stochastic  whose stochastic differential equation (SDE) is given by: 

\begin{align}
  \label{eq:Langevin_SDE}
  dX_t = -\nabla f(X_t) dt + \sqrt{2} dW_t.
\end{align}

\begin{algorithm}[ht]
\footnotesize
  \KwIn{Step size $h >0$ and a sample $x_0$ from a starting
    distribution $\mu_0$} \KwOut{Sequence $x_1, x_2,\ldots$}
  \For{$i=0, 1, \ldots $}
    {%
      \textbf{Proposal step}: \emph{Draw} $z_{t+1} \sim \mathcal{N}(\displaystyle
      x_{t} - h \nabla f(x_{t}), 2h
      \mathbb{I}_d)$  \\
       \textbf{Accept-reject step}:\\
      \quad compute $A_{t+1} \gets \displaystyle\min \left\{1,\frac{\exp\left(-f(z_{t+1})-\left\|x_t-z_{t+1}+h\nabla f(z_{t+1})\right\|^2/4h\right)}{\exp\left(-f(x_{t})-\left\|z_{t+1}-x_{t}+h\nabla f(x_{t})\right\|^2/4h\right)}
          \right\}$\\
        \quad With probability $A_{t+1}$ \emph{accept} proposal:
        $x_{t+1} \gets z_{t+1} $\\
        \quad With probability $1-A_{t+1}$ \emph{reject} proposal:
        $x_{t+1} \gets x_t$
  }
  \caption{Metropolis adjusted Langevin algorithm (MALA)}
  \label{algo:mala}
\end{algorithm}

We can see that the proposal of the MALA algorithm is a Gaussian distribution that is centered at one gradient descent step starting from the current position. The convergence of gradient descent is usually shown using smoothness of the function $f$, and gradient descent can diverge for any fixed step size $h$ for functions that do not satisfy this condition. For example, if $f(x)=\|x\|^2+\|x\|^4$, then it is easy to see that for any $h>0$, the average acceptance rate of one MALA step starting from $x$ tends to zero as $\|x\|\to \infty$. This means that MALA is not well suited for such potentials as it may get stuck in the tails. As we shall see in the following section, our new algorithm is able to work well even for such challenging thin tailed potentials.

\section{ MAO Algorithm}
\label{sec:New_algo}
In this section, we describe our novel Metropolized algorithm. The main idea is to construct a novel proposal based on an auto-regressive kernel. To that end, we require access to the mode of the distribution $\pi$, which we can obtain, at least approximately, by running an offline Optimization algorithm. We then make use of the accept-reject step to correct with respect to the target distribution, this method has its merit in the fact that one hand, it is fairly cheap to sample from a Gaussian distribution. On the other hand, it allows us to leverage the extensive optimization toolkit ~\cite{Bubeck1}, and ~\cite{nesterov} to obtain rates of convergence for the learning of the mode; see Algorithm~\ref{algo:new} for the details.

Figure~\ref{fig:target}\footnote{The red net is the proposal distribution centered at the mode of the target distribution.} provides visual illustration as to the idea of the Gaussian approximation.
\begin{figure}
    \centering
    \includegraphics[width=.6\linewidth]{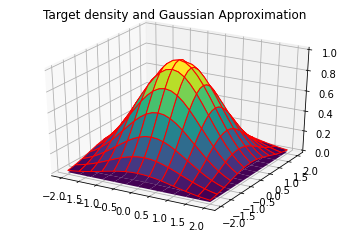}
    \caption{Target density $\pi(x)\propto e^{-\left\|x\right\|^4-\left\|x\right\|^2}$ and Gaussian approximation (red net) centered at mode $\tilde{x}$.}
    \label{fig:target}
\end{figure}
\begin{algorithm}[ht]
\footnotesize

  \KwIn{Step size $h >0$ and a sample $x_0$ from a starting
    distribution $\mu_0$, $\tilde{x}$: mode approximation} \KwOut{Sequence $x_1, x_2,\ldots$}
  \For{$i=0, 1, \ldots $}
    {%
      \textbf{Proposal step}: \emph{Draw} $z_{t+1} \sim \mathcal{N}(\displaystyle
      x_{t} - h(x_t - \tilde{x}), 2h
      \mathbb{I}_d)$  \\
       \textbf{Accept-reject step}:\\
      \quad compute $A_{t+1} \gets \displaystyle\min \left\{1,\frac{\exp\left(-f(z_{t+1})-\left\|x_t-z_{t+1}+h(z_{t+1}-\tilde{x})\right\|^2/4h\right)}{\exp\left(-f(x_{t})-\left\|z_{t+1}-x_{t}+h(x_{t}-\tilde{x})\right\|^2/4h\right)}
          \right\}$\\
        \quad With probability $A_{t+1}$ \emph{accept} the proposal:
        $x_{t+1} \gets z_{t+1} $\\
        \quad With probability $1-A_{t+1}$ \emph{reject} the proposal:
        $x_{t+1} \gets x_t$
  }
  \caption{Metropolized Algorithm with Optimization Step (MAO)}
 \label{algo:new}
\end{algorithm}

\section{Main Results}
\label{sec:main_results}
In this section we state our main results.  We remind the reader that MAO refers to our  Metropolized Algorithm with Optimization step. First, we state our results for (MAO):  we derive the mixing time bounds for target distributions satisfying  Assumption~\ref{itm:assumptionA} in Theorem~\ref{thm_a}, then we state mixing time bounds for target distributions satisfying  Assumption~\ref{itm:assumptionB} in Theorem~\ref{thm_b}. 

\subsection{Mixing time bounds for MAO}
\label{sub:mixing_time_bounds_for_MAO}

We state mixing time bounds for MAO for the class of target
distributions $\Pi$ satisfying Assumption~\ref{itm:assumptionA}.
Let MAO-($\tilde{x},h$)~denote the $\frac{1}{2}$-lazy MAO algorithm with
step size $h$ and $\tilde{x}$-mode output\footnote{In this section we assume access to the exact mode of the distribution at the Output of the Offline Optimization Scheme, we will consider the case of $\delta$-error Later in Proof Section}. Let $\tau_2^{MAO}(\epsilon; \mu_0)$
denote its $\mathcal{L}_2$-mixing time~\eqref{eq:tmix_defn} when starting from an initial distribution $\mu_{0}$.
$c$ here denotes a universal constant.
We consider an MAO-chain targeting 
distributions satisfying the assumptions of Assumption~\ref{itm:assumptionA}. 

We  state an explicit mixing time bound of MAO. We consider an $(\alpha,m)$--\textit{strongly
log-concave} target distribution $\Pi$
(assumption~\ref{itm:assumptionA}). In the statement of our results, we make use of the standard function appearing in several past works~\cite{chenhmc,dwivedi2018log}: 
\begin{subequations}
\begin{align}
\nonumber 
    R(s) &:= 1 + \max \left\{  \left(\frac{\log(1/s)}{d}
    \right)^{1/4},  \left(\frac{\log(1/s)}{d}\right)^{1/2} \right\},\\
\label{eq:def_radius}
    r(s)&:=\max\left(R(s), R\left(\frac{1}{2}\right)\right),
\end{align}
\end{subequations}
for $s >0$, and involves the step-size choice\footnote{$c'$ here donotes  a universal constant } 
\begin{align*}
    h_{warm}=\frac{1}{c_A r\left(\frac{\epsilon^2}{3\beta}\right)d^{\alpha-1}}.
\end{align*}
Given the definitions, we can state the following. 
\begin{thm}
\label{thm_a}
For  $(m,\alpha)$--\textit{strongly
log-concave} target distribution (cf. Assumption~\ref{itm:assumptionA}) and a $\beta$-warm initial distribution $\mu_0$. We have for any error tolerance
  $\epsilon \in (0, 1)$, and hyperparameter $h_{\text{warm}}$, the MAO-($h$,$\mu_0$) chain satisfies: 
    \begin{align}
    \tau_2^{MAO}(\epsilon; \mu_0) \leq c_A r\left(\frac{\epsilon^2}{3\beta}\right)d^{\alpha-1}\log\left({\frac{\log(\beta)}{\epsilon}}\right)
   \end{align}
\end{thm}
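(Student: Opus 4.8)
The overall strategy follows the conductance-based template of \cite{dwivedi2018log,chenhmc}: to bound the $\mathcal{L}_2$ mixing time from a $\beta$-warm start it suffices to (i) establish a lower bound on the $s$-conductance of the $\tfrac12$-lazy MAO chain, and (ii) feed that bound into the standard warm-start mixing lemma (e.g.\ Lov\'asz--Simonovits / Corollary in \cite{dwivedi2018log}), which converts a conductance bound $\Phi$ into a mixing time of order $\Phi^{-2}\log(\beta/\epsilon)$ for the $\chi^2$/$\mathcal{L}_2$ distance. The factor $\log(\log(\beta)/\epsilon)$ rather than $\log(\beta/\epsilon)$ is obtained at the end by invoking the improved warmness-dependence argument of \cite{chenhmc}, which replaces the isoperimetric inequality with a log-isoperimetric (log-Sobolev type) inequality available for strongly log-concave $\Pi$. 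I would first carry out the analysis with the weaker $\log(\beta/\epsilon)$ dependence and then upgrade.

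The core of the work is the conductance lower bound, which in turn rests on two ingredients, exactly as in \cite{dwivedi2018log}. First, a \emph{one-step overlap} bound: there are universal constants such that whenever $\|x-y\|\lesssim \sqrt{h}$ and both $x,y$ lie in a high-probability region, $\mathrm{d}_{\mathrm{TV}}(\mathcal{T}_x,\mathcal{T}_y)\le \tfrac{15}{16}$ (say). Second, an isoperimetric inequality for $\Pi$: since $f$ is $m$-strongly convex, $\Pi$ satisfies a log-concave isoperimetric inequality with constant of order $\sqrt{m}$. Combining these via the standard argument gives a conductance bound $\Phi \gtrsim \sqrt{mh}$ over the relevant region, hence (modulo the truncation to a ball of radius controlled by $r(\cdot)$) a mixing time of order $(mh)^{-1}\log(\beta/\epsilon)$; plugging in $h=h_{\mathrm{warm}}\asymp \bigl(c_A\, r(\epsilon^2/3\beta)\,d^{\alpha-1}\bigr)^{-1}$ yields the claimed $c_A\, r(\epsilon^2/3\beta)\, d^{\alpha-1}\log(\log(\beta)/\epsilon)$ after the warmness upgrade.

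The one-step overlap bound is where the thin-tailed structure $f\in\mathcal{E}(\alpha)$ enters and where MAO differs from MALA. Writing the MAO proposal as $z\sim\mathcal N(x-h(x-\tilde x),2h\mathbb I_d)$, I would split $\mathrm{d}_{\mathrm{TV}}(\mathcal{T}_x,\mathcal{T}_y)$ into (a) the TV distance between the two Gaussian proposals, and (b) the expected rejection probability $1-A(x,\cdot)$. Part (a) is handled by a Pinsker/Gaussian-TV computation and is small once $\|x-y\|\lesssim\sqrt h$ and $h\lesssim 1$ (the drift term $-h(x-\tilde x)$ contributes $\|x-\tilde x\|\sqrt h$, which is controlled because the state space is effectively truncated to a ball of radius $\asymp d^{1/\alpha}$ — here $r(s)$ and the choice $h\asymp d^{-(\alpha-1)}$ are calibrated precisely so that $h\|x-\tilde x\|^2 \lesssim h\, d^{2/\alpha}\lesssim 1$). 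Part (b) is the crucial estimate: using \eqref{eq:alpha_f} to control $\|\nabla f\|$ by $K_1(1+\|x-x^*\|^{\alpha-1})$ and the log-density of the Gaussian proposal, one shows the log-acceptance-ratio has bounded expectation and variance of order $Lh$-type quantities with $L$ replaced by the \emph{local} smoothness bound $K_2(1+\|x\|^{\alpha-2})$ from \eqref{eq:nabla_f}, again tamed by $h\lesssim d^{-(\alpha-1)}$ inside the truncation ball. This is the step I expect to be the main obstacle: unlike the globally $L$-smooth MALA analysis, here the Hessian bound grows polynomially, so every place where \cite{dwivedi2018log} invokes a uniform $L$ must be replaced by a region-dependent estimate, and one must verify that the high-probability region (of diameter $\asymp d^{1/\alpha}$, from strong log-concavity and a standard concentration bound for $\|X-x^*\|$ under $\Pi$) is large enough that its complement contributes at most $\epsilon^2/(3\beta)$ to the relevant quantities — which is exactly what the argument $\epsilon^2/(3\beta)$ inside $r(\cdot)$ is bookkeeping. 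Finally I would assemble these pieces with the truncated-conductance lemma and the \cite{chenhmc} log-isoperimetric refinement to obtain the stated bound.
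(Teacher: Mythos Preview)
Your overall architecture matches the paper's: bound the conductance profile via a one-step overlap lemma (proposal--proposal TV plus proposal--transition TV, the latter reduced to a high-probability lower bound on the acceptance ratio), combine with the log-isoperimetric inequality coming from $m$-strong convexity, and invoke the mixing-time lemma of \cite{chenhmc}. Indeed the paper proves Theorem~\ref{thm_a} simply as the $\gamma=2$ instance of Theorem~\ref{thm_b}.

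There is, however, a concrete error in your calibration. You repeatedly assert that under Assumption~\ref{itm:assumptionA} the high-probability region has radius $\asymp d^{1/\alpha}$, and you attribute this to ``strong log-concavity and a standard concentration bound''. This is not correct: Assumption~\ref{itm:assumptionA} imposes only \emph{upper} bounds on $\|\nabla f\|$ and $\|\nabla^2 f\|$ (conditions \eqref{eq:alpha_f}--\eqref{eq:nabla_f}) together with $m$-strong convexity, and the only concentration one can extract from these is the Gaussian-type bound from strong convexity, giving radius $r(s)\sqrt{d/m}\asymp d^{1/2}$ (Lemma~\ref{lem:growth}); this is precisely why the paper identifies Assumption~\ref{itm:assumptionA} with the case $\gamma=2$ of Assumption~\ref{itm:assumptionB}. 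As a sanity check, the pure quadratic $f(x)=\tfrac{m}{2}\|x\|^2$ belongs to $\mathcal{E}(\alpha,m)$ for every $\alpha\ge 2$, yet $\Pi$ is Gaussian and concentrates on radius $\sqrt{d/m}$, not $d^{1/\alpha}$. Fortunately the step size $h\asymp d^{-(\alpha-1)}$ is calibrated exactly so that the governing quantities --- $\sqrt{h}\,\|x-x^*\|^{\alpha-1}$, $\|z-x\|$, and $\|z-x\|^2\|x-x^*\|^{\alpha-2}$ --- are $O(1)$ with the \emph{correct} radius $d^{1/2}$ (this is where $\omega=\alpha-1$ comes from when $\gamma=2$), so your argument survives once you replace $d^{1/\alpha}$ by $d^{1/2}$ throughout. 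A smaller slip: in your part~(a), the two proposals share the same $\tilde x$, so the difference of means is $(1-h)(x-y)$ and $\|x-\tilde x\|$ does not enter the proposal--proposal TV bound at all; the Pinsker computation gives $\|\mathcal P_x-\mathcal P_y\|_{\mathrm{TV}}\le (1-h)\|x-y\|/\sqrt{2h}$ directly.
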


Next we state results for the mixing time bound for MAO chains targeting a 
distribution $\Pi$ satisfying Assumption~\ref{itm:assumptionB}. Theorem \ref{thm_b} states the results for distributions satisfying the conditions of Assumption~\ref{itm:assumptionB} under a $\beta$-warm start, for a choice of hyper-parameter $h$
\begin{align}
    h_{warm}=\frac{1}{c_B \tau\left(\frac{\epsilon^2}{3\beta}\right)d^{\omega}} \quad \textrm{with} \quad \omega=\max
\left(\frac{2 (\alpha-1)}{\gamma},\frac{\gamma+\alpha-2}{\gamma}\right).
\label{eq:hwarmomega}
\end{align}
\begin{thm}
  \label{thm_b}
  Consider an $(\alpha,\gamma,m)$--\textit{strongly
    log-concave} target distribution (cf. Assumption~\ref{itm:assumptionB}) and a $\beta$-warm initial distribution $\mu_0$. Then for any
   accuracy $\epsilon \in (0, 1)$, for step size $h_{warm}$ defined as in \eqref{eq:hwarmomega}, MAO with $\beta$-warm initial distribution $\mu_0$ start satisfies 
   \begin{align}
    \tau_2^{MAO}(\epsilon; \mu_0) \leq c_B  
    \tau\left(\frac{\epsilon^2}{3\beta}\right)d^{\omega}\log\left({\frac{\log(\beta)}{\epsilon}}\right)
   \end{align}
\end{thm}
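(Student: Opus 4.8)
}
The plan is to run the conductance argument of \cite{dwivedi2018log,chewi2020}, upgraded with the log-isoperimetric machinery of \cite{chenhmc} so as to pay only $\log\log\beta$ rather than $\log\beta$ for the warm start, specialised to the MAO kernel. There are three essentially separable pieces. \emph{(i) From conductance to mixing time:} an $(\alpha,\gamma,m)$-strongly log-concave $\Pi$ is in particular $m$-strongly log-concave, hence satisfies a log-Sobolev --- and therefore a log-isoperimetric --- inequality with constant $\asymp 1/m$ by Bakry--\'Emery; feeding this into the framework of \cite{chenhmc} reduces the claim to a lower bound $\Psi\gtrsim\sqrt{h_{warm}\,m}$ on the relevant (log-)conductance profile of the $\tfrac12$-lazy MAO chain, after which their result yields $\tau_2^{MAO}(\epsilon;\mu_0)\lesssim (h_{warm}m)^{-1}\log(\log\beta/\epsilon)$. \emph{(ii) From conductance to a one-step overlap:} it suffices to exhibit a universal $\rho\in(0,1)$ and a radius $\Delta\asymp\sqrt{h_{warm}}$ such that $d_{\mathrm{TV}}(\mathcal T_x,\mathcal T_y)\le 1-\rho$ whenever $x,y$ lie in a suitable high-probability convex region and $\|x-y\|\le\Delta$. \emph{(iii) Verifying this overlap}, which is the place where Assumption~\ref{itm:assumptionB} and the step size \eqref{eq:hwarmomega} actually enter.

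For piece (ii) I would split $d_{\mathrm{TV}}(\mathcal T_x,\mathcal T_y)$ into the overlap of the two Gaussian proposals and the two rejection probabilities, exactly as in \cite{dwivedi2018log}. Since the MAO proposal from $x$ is $\mathcal N((1-h)x+h\tilde x,\,2hI_d)$, one has $d_{\mathrm{TV}}(Q(x,\cdot),Q(y,\cdot))\le (1-h)\|x-y\|/(2\sqrt{2h})\le \tfrac18$ as soon as $\|x-y\|\le\Delta:=\sqrt{2h}/4$, so everything reduces to a lower bound on the mean acceptance rate $A(x)=\mathbb{E}_{z\sim Q(x,\cdot)}[A(x,z)]$, uniformly over the region. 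Take that region to be the convex set $\Omega\ni x^*$ furnished by Assumption~\ref{itm:assumptionB} at confidence level $s=\epsilon^2/(3\beta)$, so that $\Pi(\Omega)\ge 1-s$ and $\mathrm{diam}(\Omega)\le\tau(s)d^{\gamma}$. Writing $u=z-x=-h(x-\tilde x)+\sqrt{2h}\,\zeta$ with $\zeta\sim\mathcal N(0,I_d)$, a direct expansion of the Metropolis ratio gives
\begin{align*}
\log\frac{\pi(z)Q(z,x)}{\pi(x)Q(x,z)}=f(x)-f(z)+\langle x-\tilde x,u\rangle+\tfrac12\|u\|^2-\tfrac h2\langle x-\tilde x,u\rangle-\tfrac h4\|u\|^2,
\end{align*}
and a second-order Taylor expansion of $f(z)-f(x)$ together with the bounds $\|\nabla f(x)\|\le K_1(1+\mathrm{diam}(\Omega)^{\alpha-1})$ and $\|\nabla^2 f(\cdot)\|\le K_2(1+\mathrm{diam}(\Omega)^{\alpha-2})$ valid on $\Omega$ leaves a fixed number of scalar terms that must be $O(1)$ with probability at least $\tfrac{15}{16}$. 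Gaussian concentration handles the terms linear in $\zeta$, while a Hanson--Wright / $\chi^2$ tail bound handles the quadratic ones; the two contributions that dictate the step size are the gradient--noise cross term, of size $\asymp\sqrt h\,\mathrm{diam}(\Omega)^{\alpha-1}$, and the Hessian--trace term, of size $\asymp h\,d\,\mathrm{diam}(\Omega)^{\alpha-2}$, and requiring both to be $O(1)$ after substituting $\mathrm{diam}(\Omega)\le\tau(s)d^{\gamma}$ is precisely what forces the exponent $\omega=\max\{2(\alpha-1)/\gamma,(\gamma+\alpha-2)/\gamma\}$ and the value $h_{warm}$ of \eqref{eq:hwarmomega}. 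Markov's inequality then upgrades this into $A(x)\ge\tfrac78$ uniformly on $\Omega$, and combining with the proposal-overlap bound via the standard lemma gives $d_{\mathrm{TV}}(\mathcal T_x,\mathcal T_y)\le 1-\rho$ for a universal $\rho$.

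To conclude, I would transfer this local overlap into a conductance bound for the \emph{restricted} chain on $\Omega$ --- the restriction $\Pi|_\Omega$ is still $m$-strongly log-concave (adding the convex indicator of $\Omega$ keeps the potential $m$-strongly convex), so the log-isoperimetric inequality applies with the same constant --- obtaining $\Psi\gtrsim\rho\,\Delta\,\sqrt m\asymp\sqrt{h_{warm}m}$; then I remove the restriction by the usual truncation step, noting that started from a $\beta$-warm $\mu_0$ the chain leaves $\Omega$ within the first $\tau_2$ steps only on an event of probability $\le\beta\,\Pi(\Omega^c)\le\beta s=\epsilon^2/3$, which costs $O(\epsilon)$ in $d_2(\mathcal T^k\mu_0,\Pi)$ and is absorbed by the constants. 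Invoking piece (i) then gives $\tau_2^{MAO}(\epsilon;\mu_0)\le c_B\,\tau(\epsilon^2/(3\beta))\,d^{\omega}\log(\log\beta/\epsilon)$ with $m,K_1,K_2$ absorbed into $c_B$, which is the assertion. The main obstacle is piece (iii): unlike MALA, the MAO drift is $h(x-\tilde x)$ rather than $h\nabla f(x)$, so the cancellation that usually keeps the Metropolis ratio small does not occur, and one must argue directly that the surviving terms are genuinely of lower order once $h\le h_{warm}$, with the probability-$\tfrac78$ threshold holding uniformly over $x\in\Omega$ and constants depending on $\Omega$ only through its diameter; the analysis then needs to be robust enough to also cover the $\delta$-approximate mode $\tilde x$ deferred to the proof section, where every term is perturbed by an amount absorbable only if the offline optimisation error $\delta$ is polynomially small.
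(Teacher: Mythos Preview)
Your plan is correct and tracks the paper's argument closely: the conductance-profile machinery of \cite{chenhmc} (their Lemmas~2 and~4), log-isoperimetry from $m$-strong convexity on the restriction $\Pi|_\Omega$, proposal overlap via Pinsker, and a uniform lower bound on the acceptance probability over $\Omega$. The two dominant constraints you isolate --- $\sqrt{h}\,\mathrm{diam}(\Omega)^{\alpha-1}=O(1)$ from the gradient--noise term and $h\,d\,\mathrm{diam}(\Omega)^{\alpha-2}=O(1)$ from the Hessian term --- are exactly the ones the paper finds, and with $\mathrm{diam}(\Omega)\asymp d^{1/\gamma}$ they produce precisely $\omega=\max\{2(\alpha-1)/\gamma,(\gamma+\alpha-2)/\gamma\}$. (Note that the statement of Assumption~\ref{itm:assumptionB} writes $d^{\gamma}$ but the proof and all examples use $d^{1/\gamma}$; your arithmetic only closes with the latter.)

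The one substantive difference is in how the acceptance ratio is handled. You brute-force expand $\log\frac{\pi(z)Q(z,x)}{\pi(x)Q(x,z)}$ and then Taylor-expand $f$; this is valid and gives the right answer. The paper instead observes that the MAO proposal is an autoregressive (Ornstein--Uhlenbeck--type) kernel and is therefore \emph{reversible} with respect to a Gaussian $\phi$ centred at $\tilde{x}$, so the Metropolis ratio collapses to the IMH-like form $\frac{\pi(z)\phi(x)}{\pi(x)\phi(z)}$ and one only needs to bound $J=f(z)-f(x)+\tfrac12\|z-\tilde{x}\|^2-\tfrac12\|x-\tilde{x}\|^2$. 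This buys a cleaner decomposition (two terms instead of five) and, more importantly, explains \emph{structurally} why MAO avoids MALA's failure on thin tails: the accept--reject step is literally that of an independence sampler against a Gaussian, just with a different proposal law for $z$. Your direct expansion reaches the same endpoint but obscures this point, and makes the ``no cancellation'' worry you flag at the end look more serious than it is.
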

\paragraph{Discussion of bounds from warm start.} 
Theorem \ref{thm_a} provides mixing time bounds for MAO for target  distributions satisfying the conditions detailed in Assumption~\ref{itm:assumptionA}, while Theorem~\ref{thm_b} provides mixing time bounds for MAO for target  distributions satisfying the conditions detailed in Assumption~\ref{itm:assumptionB}. Theorem \ref{thm_a} implies that given a $\beta$-warm start for a  $(\alpha,m)$--\textit{strongly log-concave} target distribution $\epsilon$-$\mathcal{L}_{2}$ mixing time \footnote{Note that for a large range of values of $\epsilon$ we can treat $r$ as a small constant } of MAO scales as $\tilde{\mathcal{O}}\left(d^{\alpha-1} \log(\frac{\log{\beta}}{\epsilon})\right)$. Theorem~\ref{thm_b} provides mixing time bounds for MAO for target  distributions satisfying the conditions detailed in Assumption~\ref{itm:assumptionB} scaling as $\tilde{\mathcal{O}}\left(d^{\omega}
\log(\frac{\log{\beta}}{\epsilon})\right)$ for $\omega=\max
\left(\frac{2 (\alpha-1)}{\gamma},\frac{\gamma+\alpha-2}{\gamma}\right)$. The existing convergence results for MALA are not applicable for thin tailed distributions.
\paragraph{Gaussian Case.} 
Let us consider the case where the  gradient of the potential $\nabla f$ is globally liptschitz, which corresponds to the special case where $\alpha=1$ and $\gamma=2$, which yields an $\epsilon$-$\mathcal{L}_{2}$ mixing time that scales as $\tilde{\mathcal{O}}\left(d\log(\frac{\log{\beta}}{\epsilon})\right)$, thus we confirm the improved mixing time rate derived in the ~\cite{chenhmc} for MALA which is an improvement on the rates obtained by ~\cite{dwivedi2018log} by improving the dependency on the warmness. The novelty of our work lies in the enhanced rates for thin tailed models, on which MALA potentially fails (\cite{roberts1996exponential}) to converge we refer to the numerical simulation section for further details. In the next session we reproduce the main framework allowing us to enhance the dependency on the warmness, as introduced in the work of ~\cite{chenhmc}.

%

\subsection{Controlling Optimization Error for MAO}
\label{sub:opti_error_mao}
As discussed earlier, the bounds on mixing times derived in Theorem~\ref{thm_a} and Theorem~\ref{thm_b}, were stated in a setting where the exact value of $x^*$-the mode of the target distribution $\Pi$ was known. However in practice it is rarely the case, seeing as we often run an offline optimization algorithm in order to obtain an approximation $\tilde{x}$ of the mode up to a certain error threshold $\delta$, the task then becomes that of controlling the induced error of running the offline optimization algorithm on the mixing time of MAO. Theorem \ref{thm:opti_mao} provides results as to the effect of the error induced by running an off-line optimization scheme.
\paragraph{Problem Set up.}
Our optimization problem of interest in this case can be formulated as the following :
\begin{align*}
    x^* := \text{arg}\max_{\text{s.t}\  x \in \Omega} \pi(x)= \text{arg}\max_{\text{s.t} \ x \in\Omega} e^{-f(x)}=\text{arg}\min_{\text{s.t} \ x \in \Omega} f(x)
\end{align*}

We then run an optimization algorithm, we refer to the large body of literature on first order optimization algorithm suited for our setup see~\cite{Lu} and~\cite{Maddison} for further details on optimization algorithms, at which point we obtain as an output $\tilde{x}$ such that 
\begin{align*}
    \left\|\tilde{x}-x^*\right\|\leq \delta
\end{align*}

Keeping in line with our assumptions we announce the following theorem: 
\begin{thm}
  \label{thm:opti_mao}
   Consider a target distribution $\Pi$ satisfying Assumption \ref{itm:assumptionA} or Assumption \ref{itm:assumptionB}. Then for any 
   accuracy $\epsilon \in (0, 1)$, and a $\beta$-warm distribution $\mu_0$, and for $\delta$-$\tilde{x}$ output of an optimization algorithm, then there exists a choice of the accuracy $\delta = \mathcal{O}(\frac{1}{h}\log(\frac{1}{\epsilon}))$ such that for the MAO-($\tilde{x},\delta,\mu_0$)  chain satisfies the bounds of Theorems \ref{thm_a} and \ref{thm_b} with different constants $c_A'$ and $c_B'$ in the place of $c_A$ and $c_B$.
\end{thm}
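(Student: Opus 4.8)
The plan is to treat the MAO chain run with the approximate mode $\tilde x$ as a perturbation of the idealized MAO chain run with the exact mode $x^*$, and to show that if $\delta$ is small enough then the two proposal kernels — and hence the two transition kernels — are close enough in total variation (uniformly in the current state, at least on the high-probability region identified in Assumptions~\ref{itm:assumptionA}/\ref{itm:assumptionB}) that the conductance/isoperimetric arguments underlying Theorems~\ref{thm_a} and~\ref{thm_b} go through verbatim with only the universal constants changed. First I would fix notation: let $\mathcal{T}^{\star}$ denote the MAO operator with proposal mean $x_t - h(x_t - x^*)$ and let $\mathcal{T}$ denote the operator with proposal mean $x_t - h(x_t - \tilde x)$. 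Observe that for a fixed current state $x$, the proposal under $\mathcal{T}$ is $\mathcal{N}(x - h(x-x^*) + h(x^*-\tilde x),\, 2h\mathbb{I}_d)$, i.e.\ exactly the idealized proposal shifted by the \emph{deterministic} vector $h(x^*-\tilde x)$, whose norm is at most $h\delta$.

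The key steps, in order, are: (i) Bound the proposal discrepancy. Two Gaussians on $\mathbb{R}^d$ with the same covariance $2h\mathbb{I}_d$ and means differing by a vector of norm $\le h\delta$ have total-variation distance at most $\tfrac{1}{2}\cdot\|h(x^*-\tilde x)\|/\sqrt{2h} \le \tfrac{1}{2}\sqrt{h/2}\,\delta$ (Pinsker / the standard Gaussian-shift bound). Choosing $\delta$ so that $h\delta^2 = \mathcal{O}(\log(1/\epsilon))$ — equivalently $\delta = \mathcal{O}(h^{-1/2}\sqrt{\log(1/\epsilon)})$, which is what the statement's $\delta=\mathcal O(\tfrac1h\log\tfrac1\epsilon)$ is meant to encode up to the precise exponent — makes this at most, say, $\epsilon/c$ for a constant we control. (ii) Transfer this to the accept–reject step: the Hastings ratio for MAO involves $f$ evaluated at the proposal and the quadratic forms $\|x_t-z+h(z-\tilde x)\|^2/4h$ and $\|z-x_t+h(x_t-\tilde x)\|^2/4h$; since $\|x^*-\tilde x\|\le\delta$ and the relevant points lie (with overwhelming probability) in a ball of radius $O(r(\cdot)\,d^{1/\alpha})$ resp.\ $O(\tau(\cdot)\,d^{\gamma})$ where the $\mathcal{E}(\alpha)$ bounds~\eqref{eq:alpha_f}–\eqref{eq:nabla_f} on $\nabla f$ are in force, a Taylor/Lipschitz estimate shows the log-acceptance-ratio changes by $O(h\delta\cdot\mathrm{poly}(\text{radius}))$, which is absorbed into the same budget. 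Combining (i) and (ii) gives $\sup_{x\in\Omega}\,d_{\mathrm{TV}}(\mathcal{T}_x,\mathcal{T}^{\star}_x)\le \epsilon/(c'\cdot k^\star)$, where $k^\star$ is the mixing time from Theorems~\ref{thm_a}/\ref{thm_b}. (iii) Propagate the one-step error over $k^\star$ steps: by the standard perturbation lemma for Markov chains (a telescoping argument, $\|\mathcal{T}^k\mu_0 - (\mathcal{T}^\star)^k\mu_0\|_{\mathrm{TV}} \le \sum_{j<k}\sup_x d_{\mathrm{TV}}(\mathcal{T}_x,\mathcal{T}^\star_x)$, together with contractivity of $\mathcal{T}^\star$ in the relevant metric once it is close to stationarity), the accumulated drift after $k^\star$ steps is $O(\epsilon)$. (iv) Finally, note that the MAO chain with mean $\tilde x$ is \emph{itself} a bona fide Metropolis–Hastings chain reversible w.r.t.\ $\pi$ (the accept–reject step corrects for \emph{any} fixed Gaussian auto-regressive proposal, regardless of where it is centered), so $\mathcal{T}(\mu_0)\to\Pi$; the perturbation argument only needs to certify that it does so within the same number of steps. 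Converting the $\mathcal{L}_2$/$\chi^2$ statement of Theorems~\ref{thm_a}/\ref{thm_b} into the $\mathrm{TV}$ perturbation bound and back uses the warmness parameter $\beta$ exactly as in the original proofs, which is why the final bound retains the $\log(\log\beta/\epsilon)$ form with only $c_A\mapsto c_A'$, $c_B\mapsto c_B'$.

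Alternatively — and this is cleaner if one wants to avoid the TV-perturbation bookkeeping — I would re-run the conductance proof of Theorems~\ref{thm_a}/\ref{thm_b} directly for the $\tilde x$-chain: every place where the idealized analysis uses a bound of the form ``the proposal from $x$ overlaps substantially with the proposal from $y$ when $\|x-y\|$ is small'' or ``the acceptance probability is $\ge 1-\text{const}$ on the good region,'' one simply carries the extra additive $h\delta$ (and its polynomial-in-radius consequences in the acceptance ratio) through the inequalities; since these appear additively alongside terms already of order $\sqrt{h}\cdot(\text{radius})$ and $h\cdot(\text{radius})^{\text{power}}$, the choice $\delta=\mathcal{O}(h^{-1/2}\sqrt{\log(1/\epsilon)})$ (hence $h\delta\le \sqrt{h}\sqrt{\log(1/\epsilon)}$, smaller than the dominant terms for the $h=h_{\mathrm{warm}}$ in play) keeps all constants within a factor of $2$, delivering $c_A'=2c_A$, $c_B'=2c_B$.

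\textbf{Main obstacle.} The delicate point is step (ii)–(iii): controlling the acceptance-ratio perturbation \emph{uniformly}, because the quadratic terms in the Hastings ratio involve $\|z - x_t + h(x_t - \tilde x)\|^2$, and although the shift is only $h\delta$, it multiplies a displacement that can be as large as the proposal scale $\sqrt{h d}$ plus the drift term; one must check that $h\delta\cdot\sqrt{hd}$ and $h\delta\cdot(\text{radius})^{\alpha-1}$ (from the $\nabla f$ bound~\eqref{eq:alpha_f}) are both within the error budget for $\delta=\mathcal{O}(h^{-1/2}\sqrt{\log(1/\epsilon)})$ and $h=h_{\mathrm{warm}}$ — i.e.\ that the error does not blow up precisely in the thin-tailed regime the paper cares about. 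This is exactly where the restriction to the high-probability convex set $\Omega$ (of controlled diameter) in Assumptions~\ref{itm:assumptionA}/\ref{itm:assumptionB} is used, and where the precise polynomial in the stated $\delta = \mathcal{O}(\tfrac1h\log\tfrac1\epsilon)$ must be pinned down; outside $\Omega$ one absorbs the contribution into the $s$-mass ($\le \epsilon^2/(3\beta)$) that is already being discarded in the parent proofs.
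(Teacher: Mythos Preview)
Your \emph{alternative} approach---re-running the acceptance-probability/conductance argument of Theorems~\ref{thm_a} and~\ref{thm_b} directly for the $\tilde x$-chain and carrying the extra $h\delta$ terms through the inequalities---is precisely the route the paper takes. The paper writes $\tilde z = z + h\tilde x$ (using $x^*=0$), expands the perturbed log-ratio $\tilde J$ as $J$ plus three explicit correction terms $A_1(h,\delta)+A_2(h,\delta)+A_3(h,\delta)$ arising from the cross term $\tilde x^\intercal(x-\tilde z)$, the Gaussian quadratic $\tfrac12(\|\tilde z\|^2-\|z\|^2)$, and a Taylor bound on $f(\tilde z)-f(z)$ via \eqref{eq:alpha_f}--\eqref{eq:nabla_f}; each is $O(h\delta\cdot\mathrm{poly}(\text{radius}))$ exactly as you anticipated, and the stated choice of $\delta$ keeps $A(h,\delta)$ bounded so the high-probability lower bound on the acceptance ratio survives with a new constant. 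Your diagnosis of the ``main obstacle'' (the cross terms $h\delta\cdot\sqrt{hd}$ and $h\delta\cdot(\text{radius})^{\alpha-1}$, controlled by restricting to $\Omega$) matches where the paper spends its effort.

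Your \emph{first} approach---bounding $\sup_{x\in\Omega} d_{\mathrm{TV}}(\mathcal T_x,\mathcal T_x^\star)$ and telescoping over $k^\star$ steps---is a genuinely different route the paper does not take. It would work, but is less efficient: to keep the accumulated drift below $\epsilon$ after $k^\star$ steps you need the per-step kernel perturbation to be $O(\epsilon/k^\star)$, which forces $\delta$ smaller than the direct argument requires (the direct route only needs the acceptance probability on $\Omega$ to stay above a fixed constant such as $7/8$, not to be within $\epsilon/k^\star$ of the idealized one). The direct approach also sidesteps the $\chi^2\leftrightarrow\mathrm{TV}$ conversion you flag in step~(iv), since once the acceptance-on-$\Omega$ bound is re-established the conductance-profile machinery (Lemma~2 and Lemma~4 of \cite{chenhmc}) delivers the $\mathcal L_2$ mixing bound without further translation.
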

The key insight of Theorem \ref{thm:opti_mao} is that guarantees a choice of $\delta$ allowing us to match the mixing time for MAO given in Theorem \ref{thm_a} and Theorem \ref{thm_b} for the case in which direct access to the mode of the target distribution is available, this highlights another aspect of the merit of MAO over MALA. Indeed, in certain regimes it might be computationally cheaper to run an optimization algorithm in instances where computing the gradient is prohibitive. We refer to the Proofs section for a formal proof of the result. In the next section, we give numerical illustration to the results stated above.

\paragraph{Note on Optimization Schemes.}
MAO assumes that we have found a mode approximation $\tilde{x}$, i.e. the minimizer of $f$, using an optimization method. In the case of thin tailed potentials, standard smoothness assumptions such as $\|\nabla^2 f(x)\|\le C$ may not be applicable, so some optimization methods such as gradient descent may diverge. Nevertheless, the recent paper of \cite{Lu} has proposed an algorithm that is able to work even for potentials with thin tails.
\begin{algorithm}[!ht]
\caption{Primal Gradient Scheme with reference function $h(\cdot)$}\label{proxgradscheme}
\begin{algorithmic}[1]
\STATE {\bf Initialize.}  Initialize with $x^0 \in Q$.  Let $L$, $h$ satisfying Definition(1)~\cite{Lu} of  be given. \\
At iteration $i$ :\\
\STATE  {\bf Perform Updates.}  Compute $\nabla f(x^i)$ , \\\medskip
\ \ \ \ \ \ \ \ \ \ \ \ \ \ \ \ \ \ \ \ \ \ \ \ \ \ \ \ $x^{i+1} \gets \arg\min_{x \in Q} \{f(x^i) + \langle \nabla f(x^i), x-x^i \rangle + L D_h(x,x^i) \}$ .\\\medskip
\STATE  {\bf Return}  mode approximation $\tilde{x}$ , \\\medskip
\end{algorithmic}
\end{algorithm}\medskip
Here $D_h(x,x^i)=h(x^{i})-h(x)-\left<\nabla h(x^i),x-x^{i}\right>$ is the so called Bregman divergence.
Note that each iteration requires the solution of a sub-problem related to the function $h$. By choosing $h(x):=\frac{\|x\|^2}{2}+\frac{\|x\|^{\alpha}}{\alpha}$, it is easy to check that the so-called relative smoothness condition holds for every $f\in \epsilon(\alpha)$, hence convergence to the minimizer is guaranteed. Another possible approach is to use the method proposed in \cite{Maddison}.
\subsection{Feasible starts}
In contrast to previous works~\cite{chenhmc},~\cite{dwivedi2018log} where Gaussian starts could be considered as feasible starts, due to the nature of the $\mathcal{E}(\alpha)$ class of thin tailed distributions such distributions cannot be considered as feasible starts, since a feasible start needs to to have tails as thin or thinner than the target distribution.

Suppose that $f\in \mathcal{E}(\alpha)$, and that conditions \eqref{eq:alpha_f} and \eqref{eq:nabla_f} hold with constants $K_1$ and $K_2$. 
Proposition \ref{prop:feasiblestart} below shows that a starting distribution with potential
\begin{equation}\label{eq:f0def}
f_{0}(x):=K_2 \left(\frac{\|x\|^2}{2}+\frac{\|x\|^{\alpha}}{\alpha(\alpha-1)}\right)
\end{equation}
will be $\beta$-warm with respect to the target for an appropriate choice of $\beta$. Sampling from such an isotropic distribution can be efficiently carried out by first sampling the radius, and then the direction vector uniformly. The proof of the proposition is included in Section \ref{sec:proof_of_feasiblestart}.

\begin{prop}\label{prop:feasiblestart}
Let $f\in \mathcal{E}(\alpha)$, $f_0$ be defined as in \eqref{eq:f0def}, and $\mu_0(x)\propto e^{-f_0(x)}$. Suppose that $f$ is minimized at $0$ (this can be also achieved by shifting $f_0$ to have the same minimum as $f$). Suppose that $f$ is $m$-strongly convex. Then $\mu_0$ is $\beta$-warm with respect to $\Pi$, with $\beta$ satisfying that
\[\log(\beta)= \frac{d}{2} \log\left(\frac{2}{m}\right) +\log\left(\Gamma\left(\frac{d}{2}+1\right)\right)-\log\left(\Gamma\left(\frac{d}{\alpha}\right)\right)-\log(d)+\log(\alpha)+\frac{K_2}{2}+\frac{d}{\alpha}\log\left(\frac{2K_2}{\alpha(\alpha-1)}\right).\]
\end{prop}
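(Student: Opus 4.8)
The plan is to compute the warmness parameter $\beta = \sup_A \mu_0(A)/\Pi(A)$ directly by bounding the ratio of densities $\mu_0(x)/\pi(x)$ pointwise, and then taking the supremum. Since $\mu_0(x) = e^{-f_0(x)}/Z_0$ and $\pi(x) = e^{-f(x)}/Z$ with $Z_0 = \int e^{-f_0}$ and $Z = \int e^{-f}$, we have $\mu_0(x)/\pi(x) = (Z/Z_0) e^{f(x)-f_0(x)}$. So the argument splits into two parts: (a) showing $f(x) - f_0(x)$ is bounded above by a constant (so the exponential factor is controlled uniformly in $x$), and (b) estimating the normalizing constants $Z$ and $Z_0$.

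For part (a): since $f$ is minimized at $0$ with $f(0)=0$ and $\nabla f(0)=0$, integrating the Hessian bound \eqref{eq:nabla_f} twice along the segment $[0,x]$ gives $f(x) \le \int_0^1\int_0^1 t\, \|\nabla^2 f(stx)\|\,\|x\|^2\,ds\,dt \le K_2(\|x\|^2/2 + \|x\|^\alpha/(\alpha(\alpha-1)))= f_0(x)$, using $\int_0^1 s^{\alpha-2}ds = 1/(\alpha-1)$ and the extra factor $1/\alpha$ from the outer integral — this is exactly why $f_0$ is defined with the $\alpha(\alpha-1)$ denominator. Hence $f(x)-f_0(x)\le 0$ for all $x$, so $e^{f(x)-f_0(x)}\le 1$ and $\beta \le Z/Z_0$.

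For part (b): I would lower-bound $Z=\int e^{-f}$ using $m$-strong convexity, $f(x)\ge \frac{m}{2}\|x\|^2$, which gives $Z \le \int e^{-m\|x\|^2/2}dx = (2\pi/m)^{d/2}$ — wait, that is an upper bound on $Z$; for $\beta\le Z/Z_0$ I need an upper bound on $Z$ and a lower bound on $Z_0$. So: $Z \le (2\pi/m)^{d/2}$ from strong convexity. For $Z_0 = \int e^{-f_0(x)}dx$, switch to polar coordinates: $Z_0 = \frac{2\pi^{d/2}}{\Gamma(d/2)}\int_0^\infty r^{d-1} e^{-K_2(r^2/2 + r^\alpha/(\alpha(\alpha-1)))}dr$, and lower-bound the radial integral by dropping the $r^2/2$ term inside a region or, more cleanly, bounding $e^{-K_2 r^2/2}\ge e^{-K_2/2}$ on $r\le 1$ while keeping the $r^\alpha$ term, then substituting $u = K_2 r^\alpha/(\alpha(\alpha-1))$ to recognize a Gamma integral $\propto \Gamma(d/\alpha)$. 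Collecting $(2\pi/m)^{d/2}$, the $\Gamma(d/2)$, $\Gamma(d/\alpha)$, the factor $\alpha$, the $e^{K_2/2}$, and the power of $2K_2/(\alpha(\alpha-1))$, then taking logarithms and using $\log(2\pi^{d/2}/\Gamma(d/2)) $ versus $\log\Gamma(d/2+1)$ via $\Gamma(d/2+1) = (d/2)\Gamma(d/2)$ and absorbing the $2\pi$ factors, should reproduce the stated expression for $\log\beta$ up to the bookkeeping of constants.

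The main obstacle is the bookkeeping in part (b): getting the radial integral lower bound tight enough that the $\Gamma$-function terms come out exactly as stated, and correctly tracking all the $2\pi$, $\alpha$, and $K_2$ factors through the polar-coordinate change of variables and the $u$-substitution so that nothing is lost or double-counted. The pointwise domination $f\le f_0$ in part (a) is the conceptually important step but is routine once the double integration of the Hessian bound is set up; everything after that is careful constant-chasing.
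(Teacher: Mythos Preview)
Your strategy for part (a) and the upper bound on $Z$ are exactly what the paper does: integrate the Hessian bound \eqref{eq:nabla_f} twice to get $f\le f_0$ pointwise, whence $\mu_0/\pi\le Z/Z_0$, and then use $m$-strong convexity to get $Z\le(2\pi/m)^{d/2}$.

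The gap is in your lower bound for $Z_0$. Bounding $e^{-K_2 r^2/2}\ge e^{-K_2/2}$ only holds on $r\le 1$, so if you restrict the radial integral to $[0,1]$ you obtain an incomplete gamma function, not $\Gamma(d/\alpha)$; in high dimension this throws away essentially all of the mass and the resulting bound on $\log\beta$ would be off by a term growing like $d\log d$. The paper instead uses a global inequality: since $\alpha\ge 2$, one argues that $\frac{r^2}{2}\le \frac{1}{2}+\frac{r^\alpha}{\alpha(\alpha-1)}$ for all $r\ge 0$, so that
\[
K_2\left(\frac{r^2}{2}+\frac{r^\alpha}{\alpha(\alpha-1)}\right)\le \frac{K_2}{2}+\frac{2K_2 r^\alpha}{\alpha(\alpha-1)}
\]
on the whole of $\mathbb{R}^d$. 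This replaces the mixed exponent by a pure $\|x\|^\alpha$ term plus the constant $K_2/2$, and then the change of variables $y=\bigl(\tfrac{2K_2}{\alpha(\alpha-1)}\bigr)^{1/\alpha}x$ reduces $Z_0$ to $e^{-K_2/2}\bigl(\tfrac{2K_2}{\alpha(\alpha-1)}\bigr)^{-d/\alpha}\int_{\mathbb{R}^d}e^{-\|y\|^\alpha}\,dy$, and the last integral is evaluated exactly as $\frac{d\,\pi^{d/2}\,\Gamma(d/\alpha)}{\alpha\,\Gamma(d/2+1)}$ (this is the computation done in the proof of Lemma~\ref{lem_RS}). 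Collecting these factors and taking logarithms gives the stated expression for $\log\beta$. So the missing idea is to absorb the quadratic term into the $\alpha$-th power term globally rather than only on the unit ball.
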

Since $\log(\Gamma(x))\propto x \log(x)$ by Stirling's formula, this means that $\log(\beta)=\tilde{\mathcal{O}}(d \log(d))$ as $d\to \infty$.



\section{Numerical experiments}
\label{sec:numerical_experiments}
In this section, we conduct several numerical experiments in order to inspect the performance of  MAO and compare it with MALA with the aim of illustrating the validity of our claims in Theorems [\ref{thm_a} ,\ref{thm_b}], and the mixing time results for MAO. Our theoretical results suggests that MAO should outperform MALA on the class of $\mathcal{E}(\alpha,m)$ distributions. 

In our  simulations, we inspect the dimension $d$ dependency and
condition number $\alpha$ dependency for $\mathcal{E}(\alpha,m)$ class of distributions
guided by our step-size choices. At first, we consider the task of  sampling from a target distribution with density :
\begin{align}
    \Pi_1(x)\propto e^{-\frac{\left\|x\right\|^4}{4}-\frac{a\left\|x\right\|^2}{2}}
\end{align}
Where $a$ is a constant controlling the magnitude of the Gaussian perturbation. The log density along with its derivatives are given by:
\begin{align}
    f(x) = \frac{\left\|x\right\|^4}{4}+\frac{a\left\|x\right\|^2}{2},\quad \nabla f(x) = \left\|x\right\|^3\cdot x +ax, \quad\text{and}\quad \nabla^2 f(x) = a\mathbb{I}_d+ 3 D(x).
\end{align}
Where $D(x)=\mathbf{diag}(x^2)$.

We deduce that  the potential $f$ is indeed strongly convex with a parameter $m=a$, and as highlighted in the proof section it  also satisfies the conditions of Assumption~\ref{itm:assumptionB}.

Next we consider the case where the tails are thin are in certain directions and decay faster than exponential in other directions. More formally, we consider the task of sampling from a  target distribution equipped with a density :
\begin{align}
    \Pi_2(x)\propto e^{-\frac{\left\|x\right\|^4}{4}-\frac{x_{1}^2}{2}}
\end{align}
In which case the log density along with its derivatives are given by:
\begin{align}
    f(x) = \frac{\left\|x\right\|^4}{4}+\frac{x_{1}^2}{2},\quad \nabla f(x) = \left\|x\right\|^3\cdot x +x_1\cdot e_1, \quad\text{and}\quad \nabla^2 f(x) = E_{1,1}+ 3 D(x).
\end{align}
Where $D(x)=\mathbf{diag}(x^2)$ and $E_{1,1}=e_1e_1^{\intercal}$.

Figures [\ref{fig:target1},\ref{fig:target2}] shows the contour plots of the densities of the targets considered in our numerical experiments.

\begin{figure}[!ht]
\centering
\begin{minipage}{.5\textwidth}
  \centering
  \includegraphics[width=.9\linewidth]{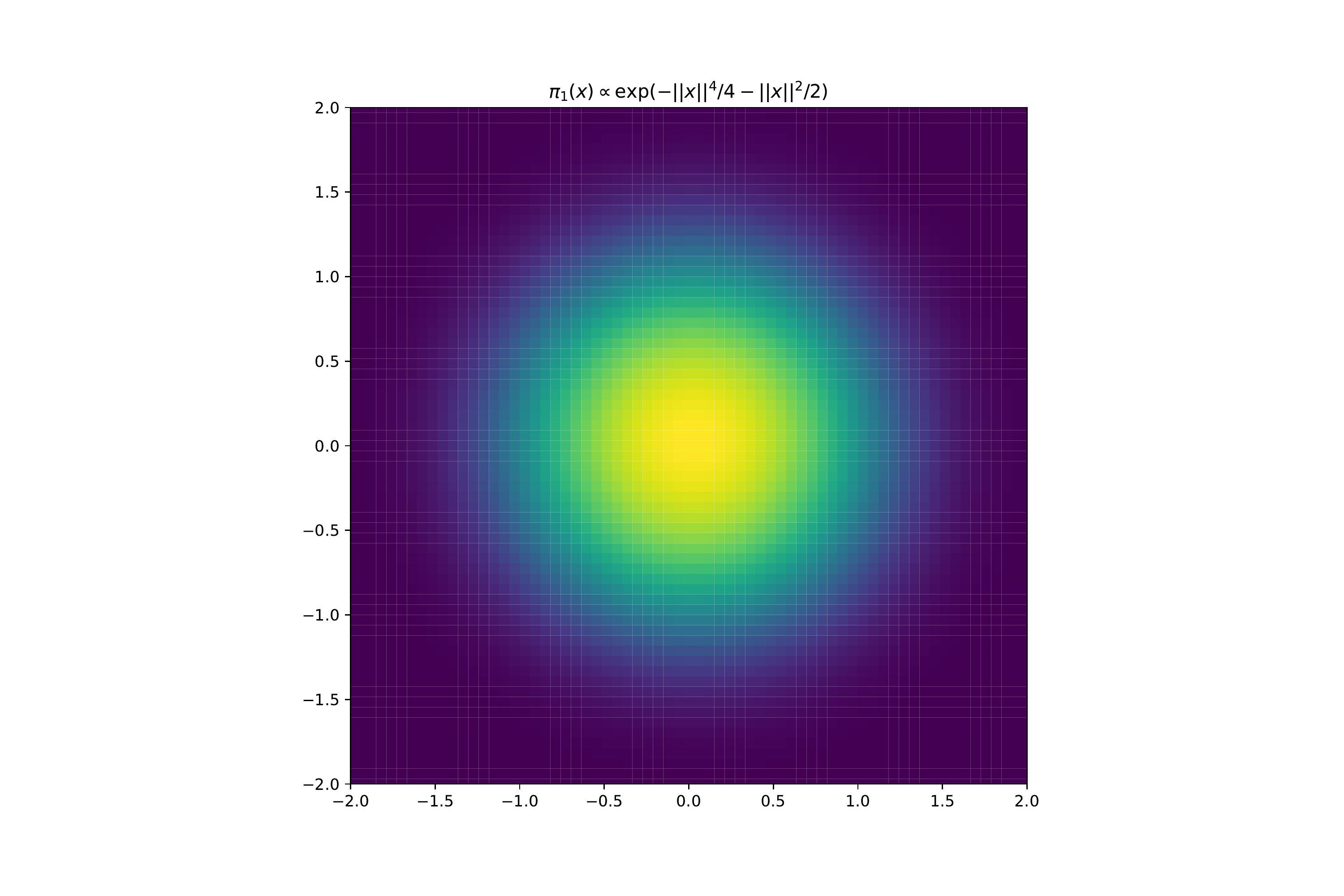}
  \caption{Contour plot of $\pi_1(x)\propto e^{-\frac{\left\|x\right\|^4}{4}-\frac{\left\|x\right\|^2}{2}}$}
    \label{fig:target1}
\end{minipage}%
\begin{minipage}{.5\textwidth}
 \centering
  \includegraphics[width=.9\linewidth]{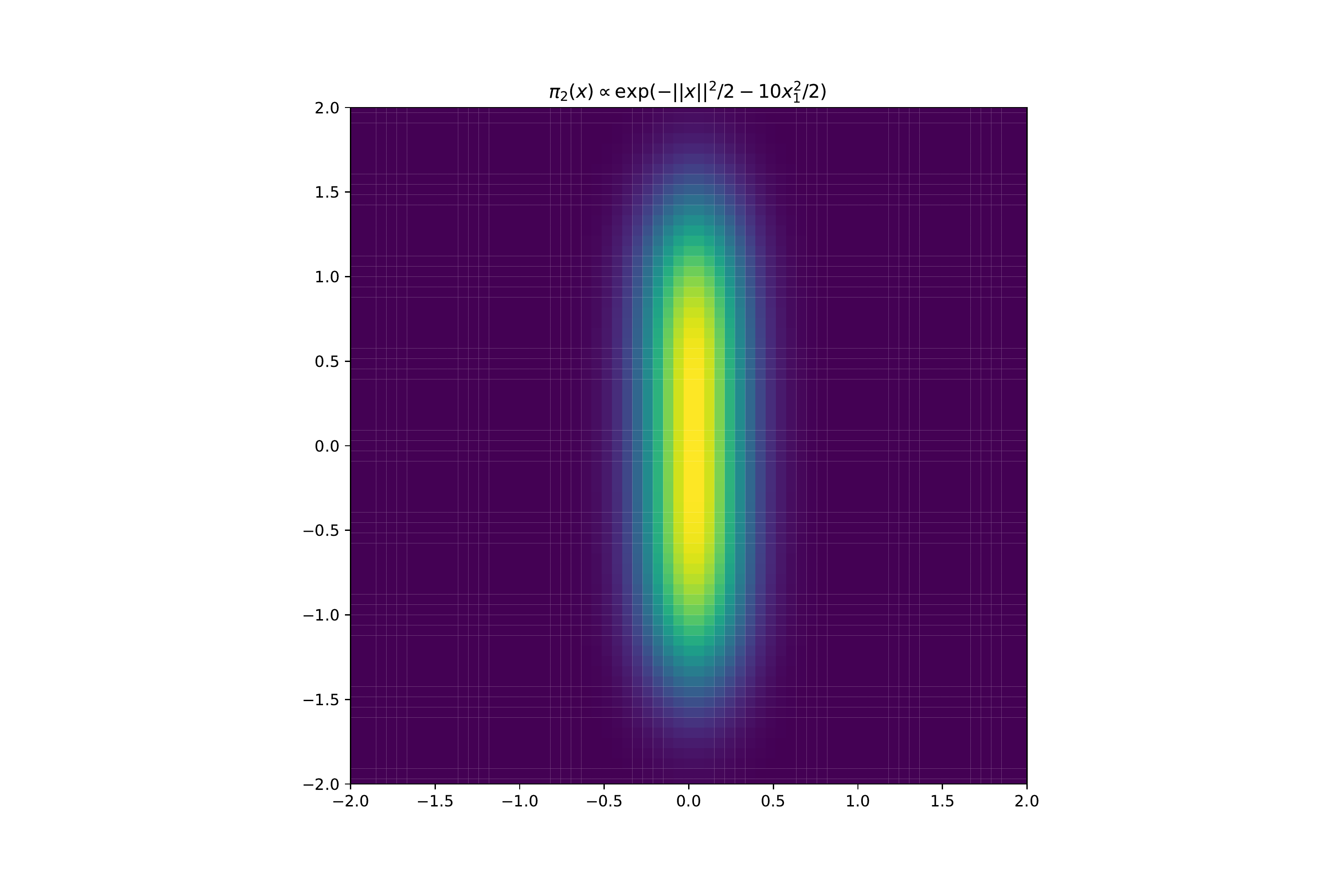}
   \caption{Contour plot of $\pi_2(x)\propto e^{-\frac{\left\|x\right\|^4}{4}-\frac{10 x_1^2}{2}}$}
    \label{fig:target2}
\end{minipage}
\end{figure}
We consider two different experiments, first sampling from the target distribution $\Pi_1$ and sampling from the target distribution $\Pi_2$. The step-size choice of MALA used in our simulations is the same as the one retained for MAO  given by the results of Theorem~\ref{thm_b}. We then inspect different measures for convergence diagnostics, namely, traceplots along other coordinates and auto-correlation plots, and average acceptance probability and effective sample size as the dimension increases. In particular, these last two measures help diagnose the Markov chain's convergence rate in a single long run, allowing us to inspect the merit of MAO over MALA, thus supporting our results.

First, we run the MALA algorithm Targetting $\Pi_1$ for 1 000 000 iteration with a burn-in period set at 100 000, Figure~\ref{fig:mala_d2}  shows the traceplots of MALA along the first coordinate when the dimension of the problem is fixed at $d=2$ initially.

\begin{figure}[!htp]
\centering
\begin{minipage}{.5\textwidth}
  \centering
  \includegraphics[width=\linewidth]{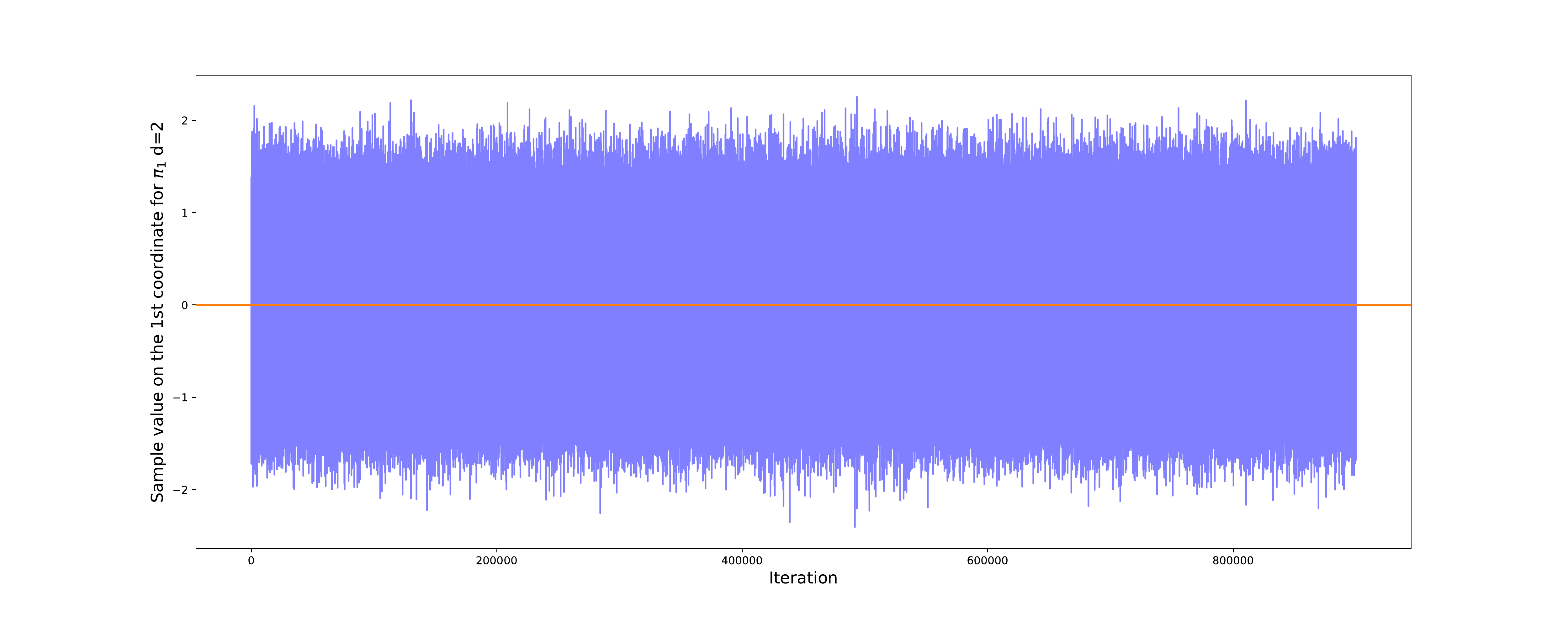}
  \caption{Traceplot on $x_1$ MALA chain targeting $\Pi_1$}
    \label{fig:mala_d2}
\end{minipage}%
\begin{minipage}{.5\textwidth}
  \centering
  \includegraphics[width=.7\linewidth]{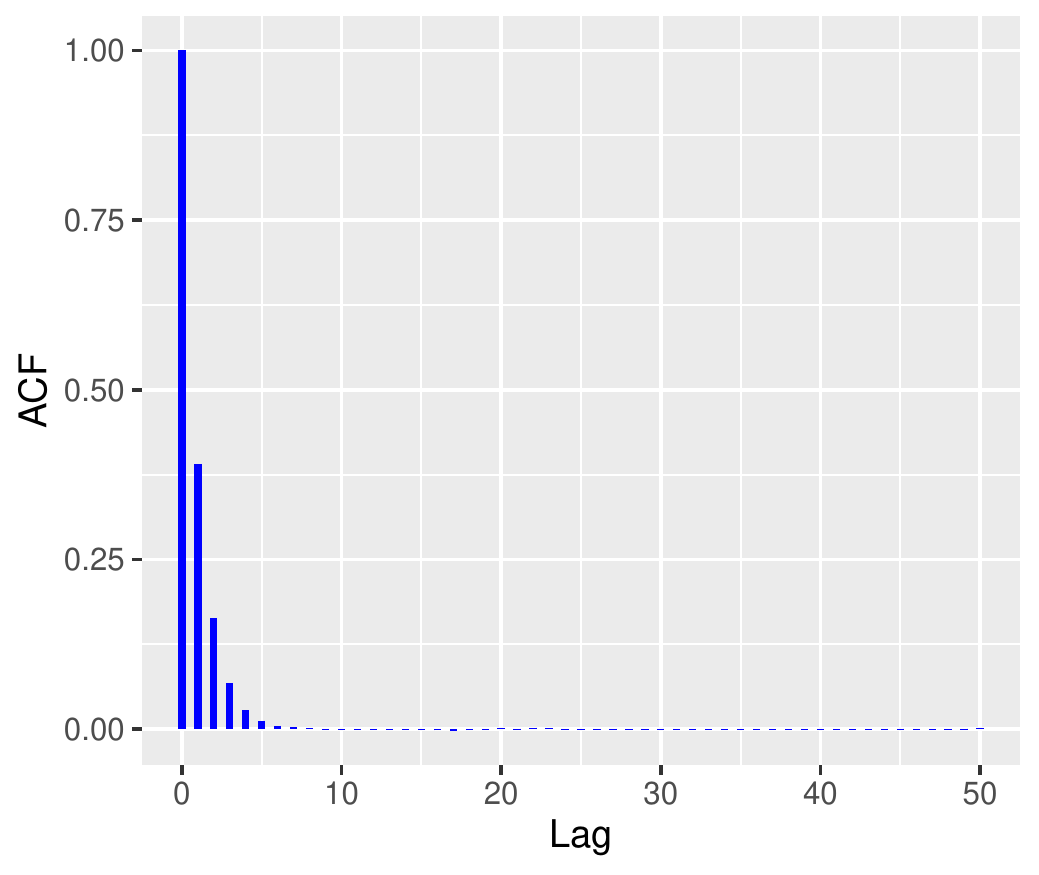}
   \caption{auto-correlation plot MALA chain targeting $\Pi_1$}
    \label{fig:auto_mala_d2}
\end{minipage}
\end{figure}
\
Figure[\ref{fig:auto_mala_d2}], on the other hand, shows the auto-correlation plot for the same chain. We can read from Table [\ref{tab:mala_pi1}] that the expected acceptance rate is around $0.827$ when targeting $\Pi_1$ and considering the effective sample size (ESS) along the two coordinates, we obtain a satisfactory convergence. Next, we inspect the scaling of the convergence of the MALA chain with the dimension of the problem $d$. To that end, we consider running the MALA chain targeting $\Pi_1$ on a grid of dimensions. Thus, allowing us to inspect the mixing of the MALA chain in a high dimensional setting.

Figures [\ref{fig:mala_d64},\ref{fig:auto_mala_d64}] show the traceplot along the first coordinate along with  the auto-correlation plot when the dimension of the problem is set at $d=64$. Inspecting these two figures, we gain several insights. We can read from Table [\ref{tab:mala_pi1}] that the expected acceptance rate is around $0.902$, we notice that the traceplot fails to stabilize. The auto-correlation plot takes considerable time to reach $0$, both signs of bad performance in terms of convergence.  

\begin{figure}[!htp]
\centering
\begin{minipage}{.5\textwidth}
  \centering
  \includegraphics[width=\linewidth]{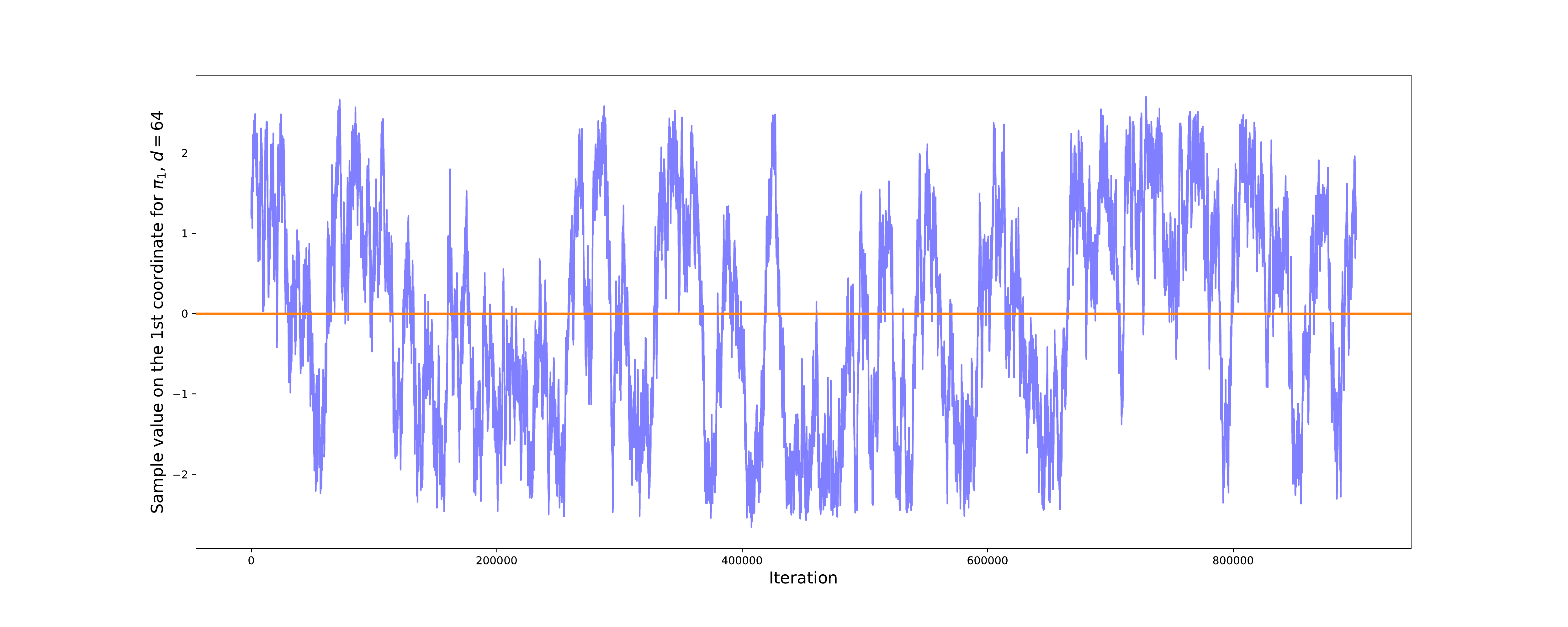}
  \caption{Traceplot on $x_1$ MALA chain targeting $\Pi_1$}
    \label{fig:mala_d64}
\end{minipage}%
\begin{minipage}{.5\textwidth}
  \centering
  \includegraphics[width=.7\linewidth]{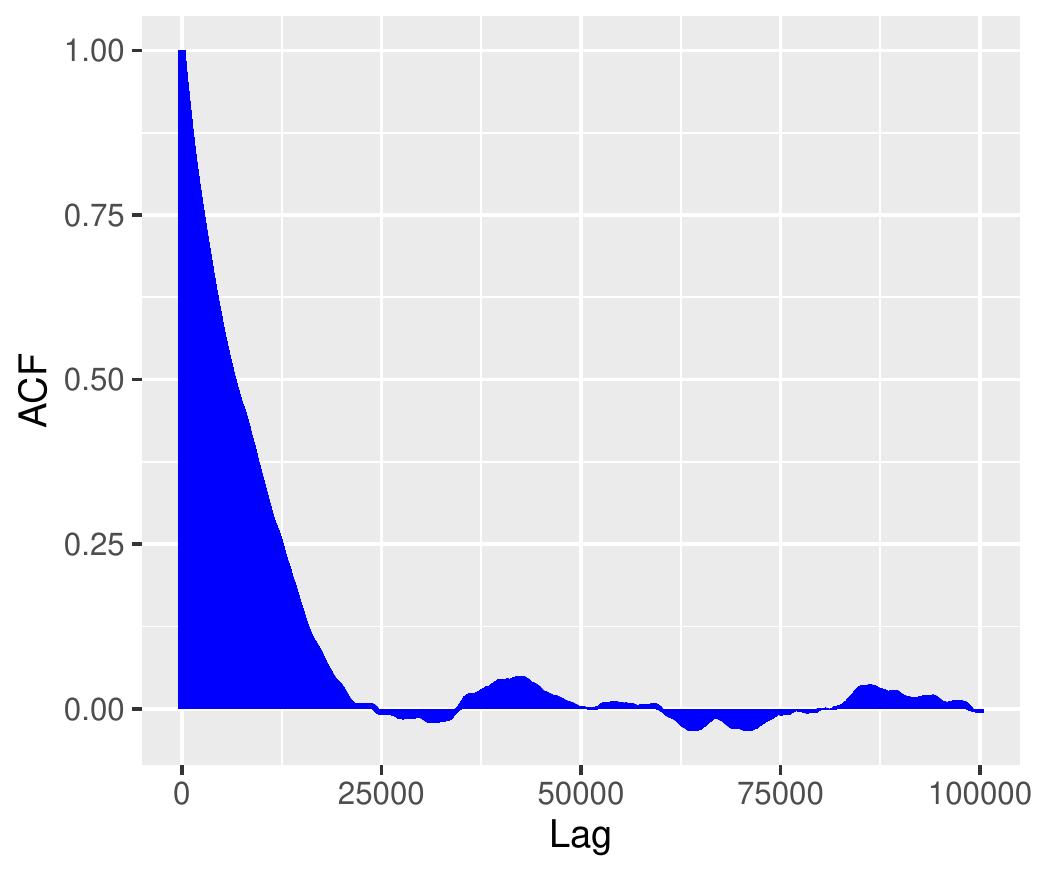}
   \caption{auto-correlation plot MALA chain targeting $\Pi_1$}
    \label{fig:auto_mala_d64}
\end{minipage}
\end{figure}

In order to further  examine  the  scaling  of the performance of MALA on the $\mathcal{E}(\alpha)$ class we consider the task of sampling from $\Pi_2$, where the tails along the coordinates other than $x_1$ decay with a rate faster than Gaussian tails, in light of the work of ~\cite{roberts1996exponential} we expect the performance of MALA to deteriorate, this intuition is further confirmed when inspecting Figures [\ref{fig:mala_d2_pi2_1},\ref{fig:mala_d2_pi2_2}] where the traceplots highlights the failure of MALA to stabilize, and the performance deteriorates noticeably when the dimension increases as highlighted by Figures [\ref{fig:mala_d64_pi2_1},\ref{fig:mala_d64_pi2_2}]. Another indication on the sub optimal performance of MALA when targeting $\Pi_2$ is further highlighted by the convergence diagnostics given by Table[\ref{tab:mala_pi2}] where we notice that the (ESS) falls from 4293 on $x_1$ to $58$ on $x_2$ and when inspecting the auto correlation plots given by Figures [\ref{fig:auto_mala_d2_pi2_1},\ref{fig:auto_mala_d2_pi2_2},\ref{fig:auto_mala_d64_pi2_1},\ref{fig:auto_mala_d64_pi2_2}] highlighting the suboptimal performance of MALA when targeting $\Pi_2$.
\begin{figure}[!htp]
\centering
\begin{minipage}{.5\textwidth}
  \centering
  \includegraphics[width=\linewidth]{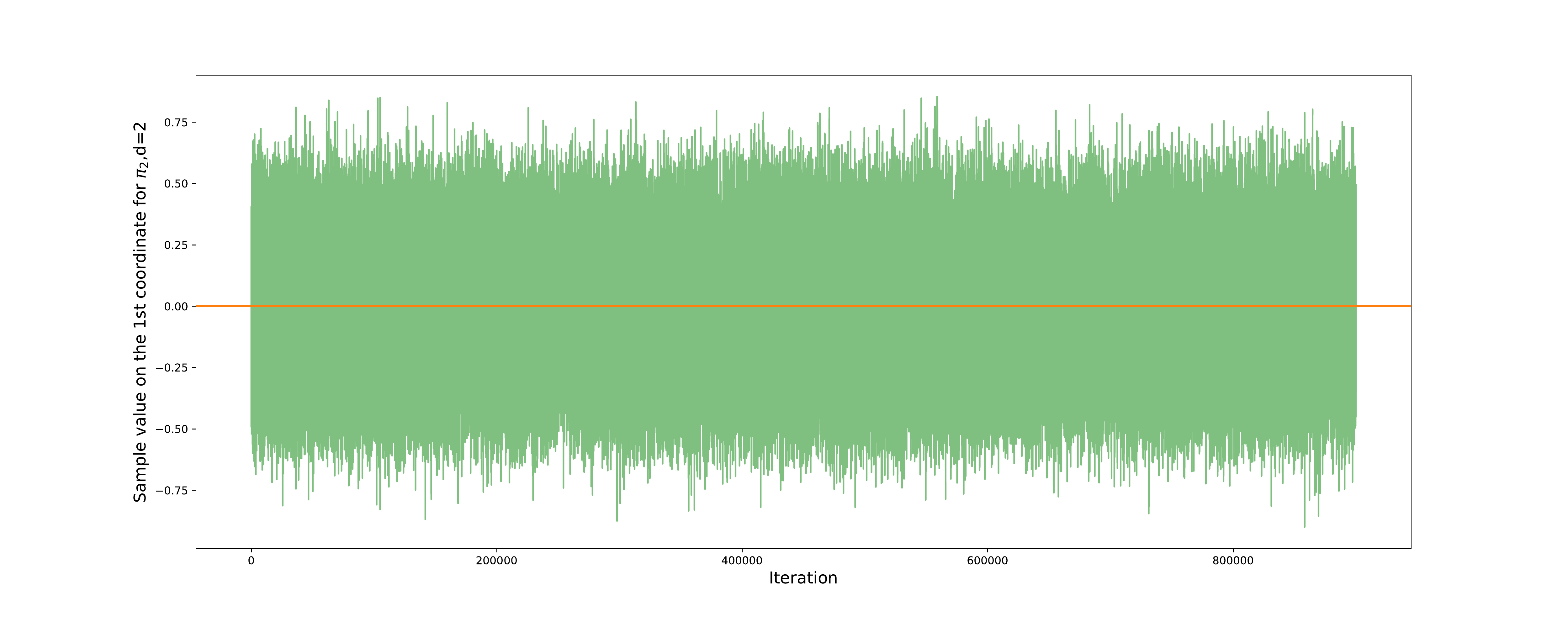}
  \caption{Traceplot on $x_1$ MALA chain targeting $\Pi_2$,$d=2$}
    \label{fig:mala_d2_pi2_1}
\end{minipage}%
\begin{minipage}{.5\textwidth}
  \centering
  \includegraphics[width=\linewidth]{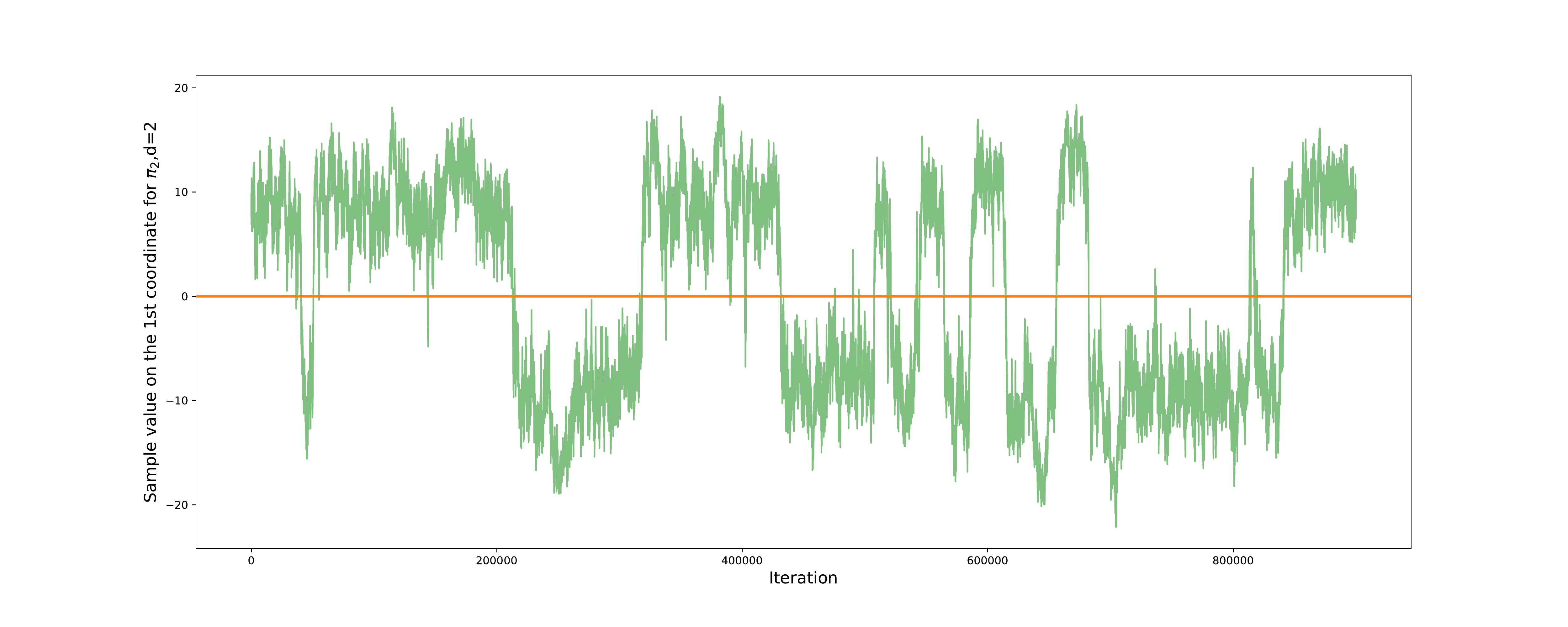}
   \caption{Traceplot on $x_2$ MALA chain targeting $\Pi_2$}
    \label{fig:mala_d2_pi2_2}
\end{minipage}
\end{figure}
\\
\\

We now turn our attention to the performance of our proposed algorithm, that is MAO algorithm, we consider the task of sampling from $\Pi_1$, our first impression is that the performance of MAO is comparable to that of MALA when the dimension is set at $d=2$, this is further confirmed when inspecting at the (ESS) from Table[\ref{tab:mao_pi1}], we also notice that the performance of MAO scales better when the dimension increases as highlighted by the (ESS) of $1019$ compared to that of $67$ for MALA, this is further highlighted by the traceplot given in Figure[\ref{fig:mao_d64}] that shows better mixing quality along the first coordinate. 

The main discrepancy between the two algorithms is when considering the task of sampling from $\Pi_2$.Indeed, we notice that MAO is able to bypass the failure of MALA when the target distribution decays faster than an Gaussian distribution. This better highlighted when considering tthe traceplots given Figures [\ref{fig:mao_d64_pi2_1},\ref{fig:mao_d64_pi2_2}] and the (ESS) given by Table [\ref{tab:mao_pi2}]. We also notice that the scaling of the performance of MAO with the dimension is better behaved when compared with MALA.

In order to obtain convergence diagnostics, we have used \textbf{Coda} package in \textbf{R} to run summary statistics on the different chains considered in order to investigate the convergence of the chains to the respective target distributions. Tables [\ref{tab:mala_pi1},\ref{tab:mala_pi2},\ref{tab:mao_pi1},\ref{tab:mao_pi2}] highlights the results obtained. 

\begin{figure}[htp]
\centering
\begin{minipage}{.5\textwidth}
  \centering
  \includegraphics[width=\linewidth]{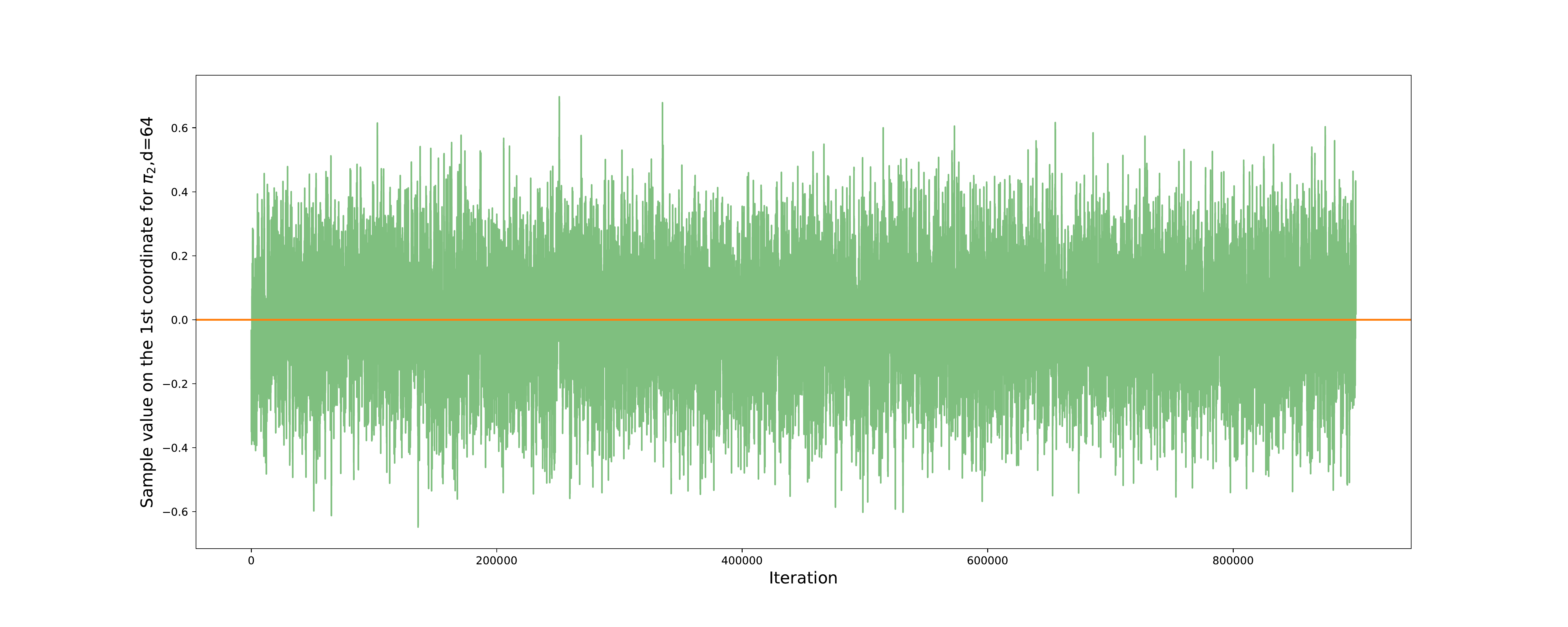}
  \caption{Traceplot on $x_1$ MALA chain targeting $\Pi_2$,$d=64$}
    \label{fig:mala_d64_pi2_1}
\end{minipage}%
\begin{minipage}{.5\textwidth}
  \centering
  \includegraphics[width=\linewidth]{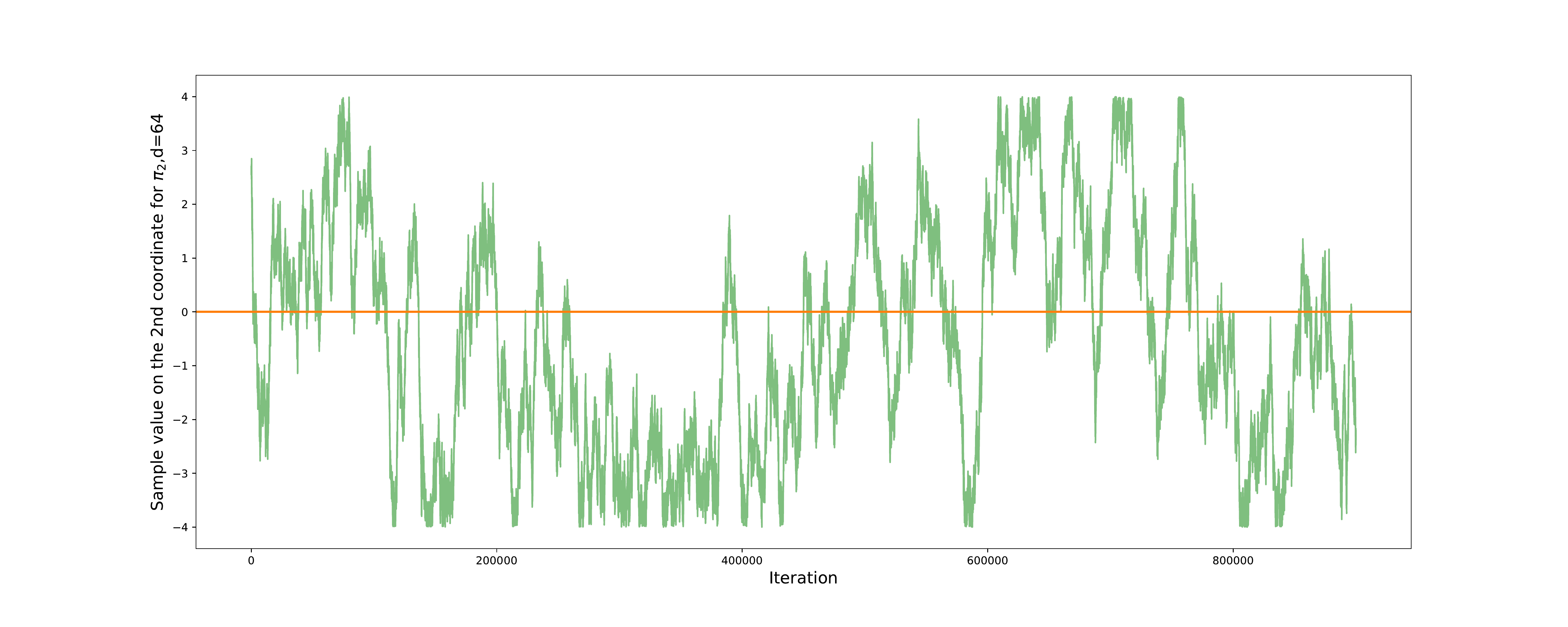}
   \caption{Traceplot on $x_2$ MALA chain targeting $\Pi_2$}
    \label{fig:mala_d64_pi2_2}
\end{minipage}
\end{figure}
\begin{figure}[htp]
\centering
\begin{minipage}{.5\textwidth}
  \centering
  \includegraphics[width=.7\linewidth]{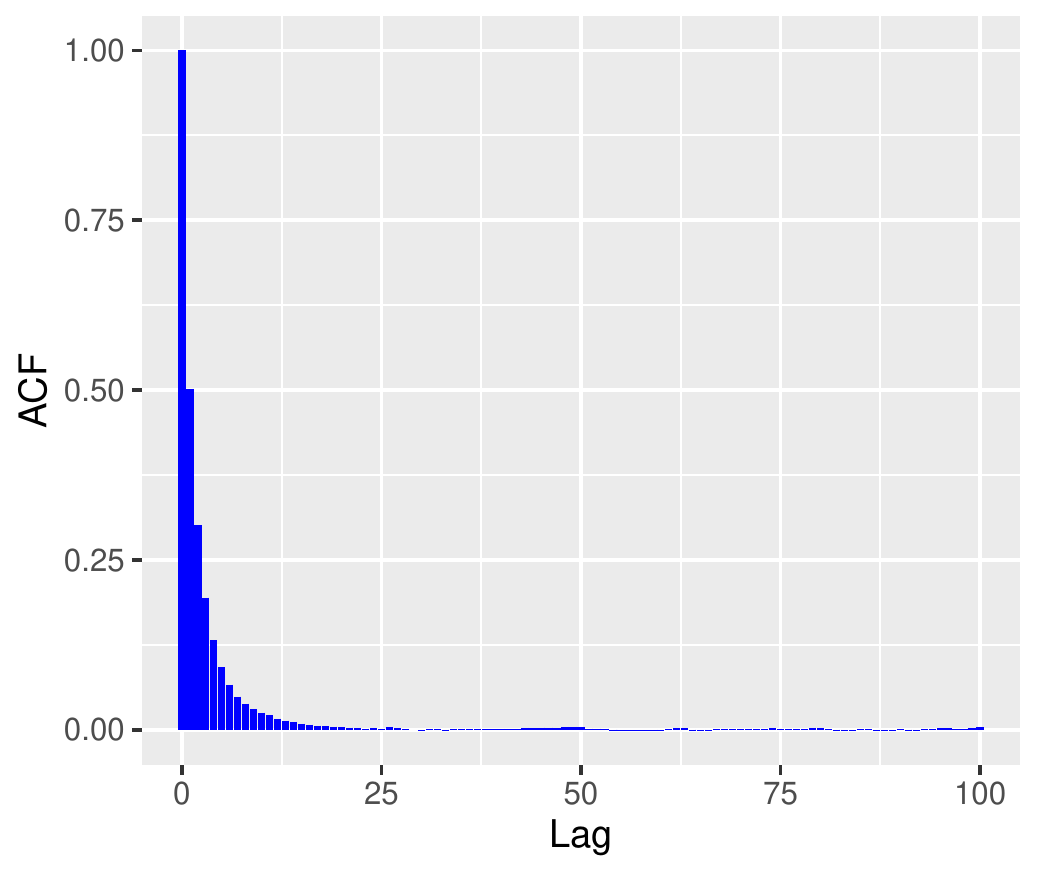}
  \caption{auto-correlation plot MALA $\Pi_2$ on $x_1$, $d=2$}
    \label{fig:auto_mala_d2_pi2_1}
\end{minipage}%
\begin{minipage}{.5\textwidth}
  \centering
  \includegraphics[width=.7\linewidth]{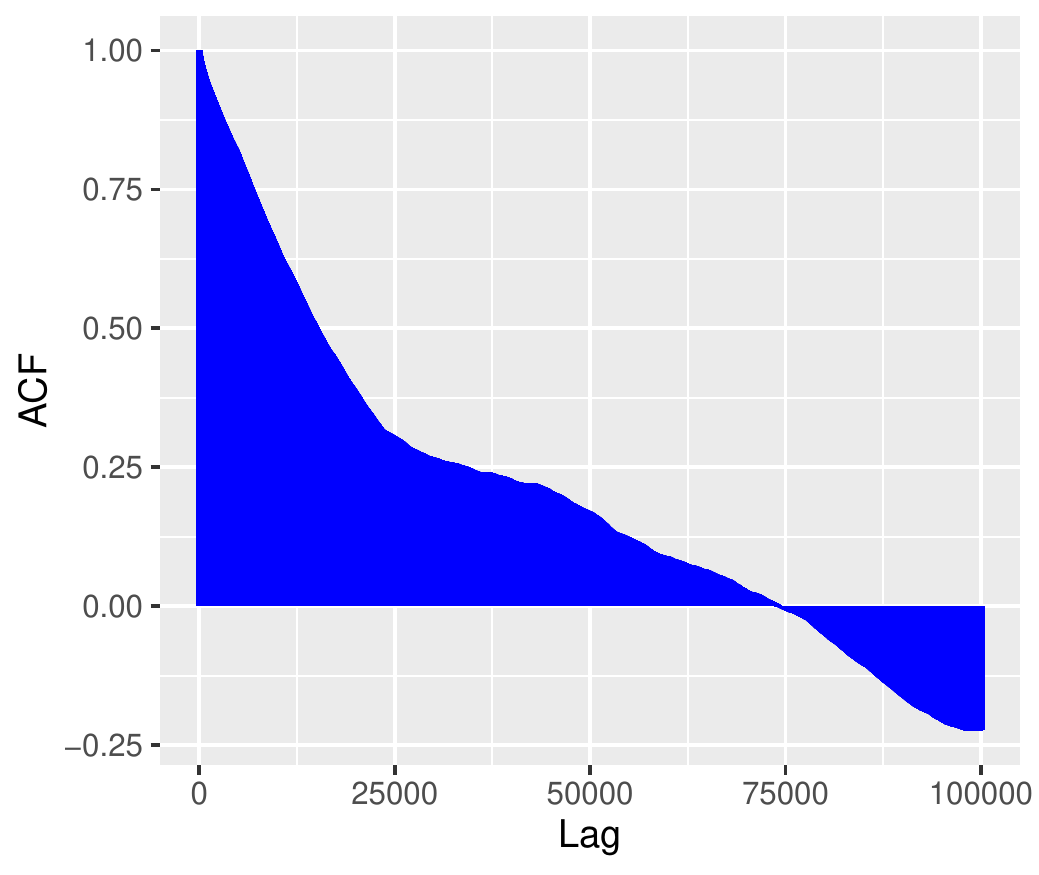}
   \caption{auto-correlation plot MALA  $\Pi_2$ on $x_2$}
    \label{fig:auto_mala_d2_pi2_2}
\end{minipage}
\end{figure}
\begin{figure}[htp]
\centering
\begin{minipage}{.5\textwidth}
  \centering
  \includegraphics[width=.7\linewidth]{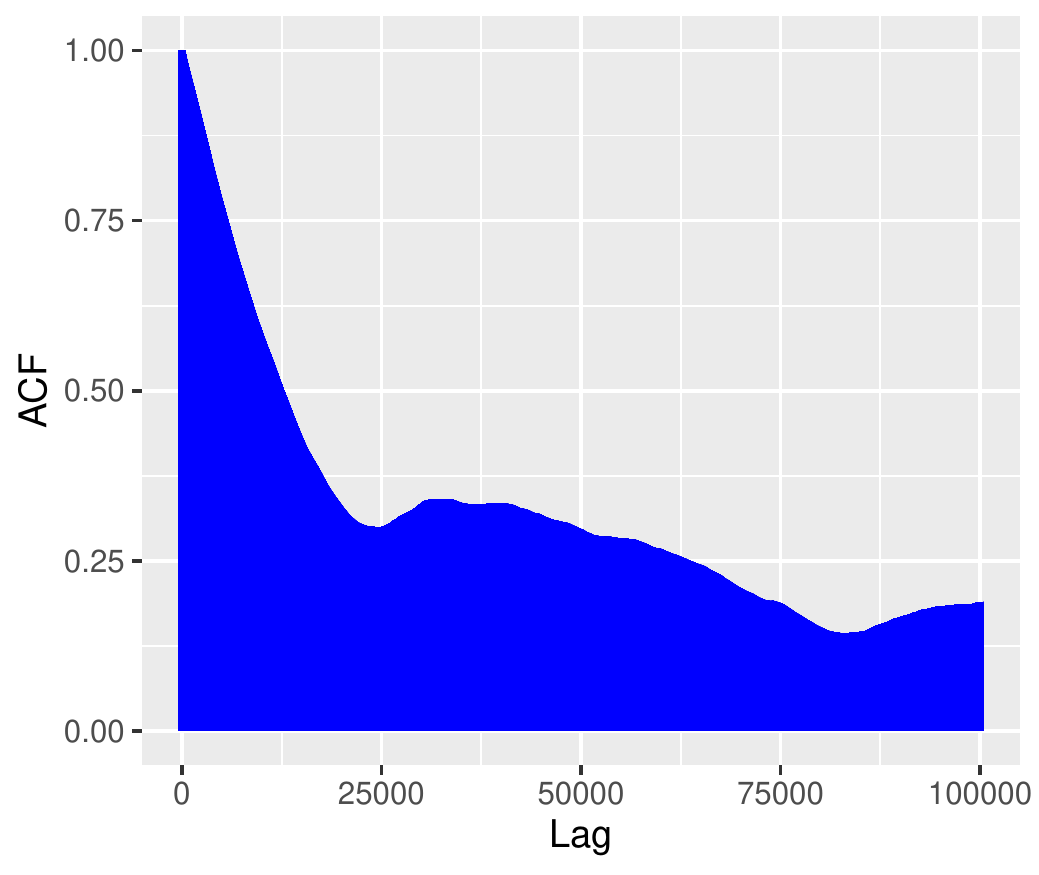}
  \caption{auto-correlation plot MALA $\Pi_2$ on $x_1$,$d=64$}
    \label{fig:auto_mala_d64_pi2_1}
\end{minipage}%
\begin{minipage}{.5\textwidth}
  \centering
  \includegraphics[width=.7\linewidth]{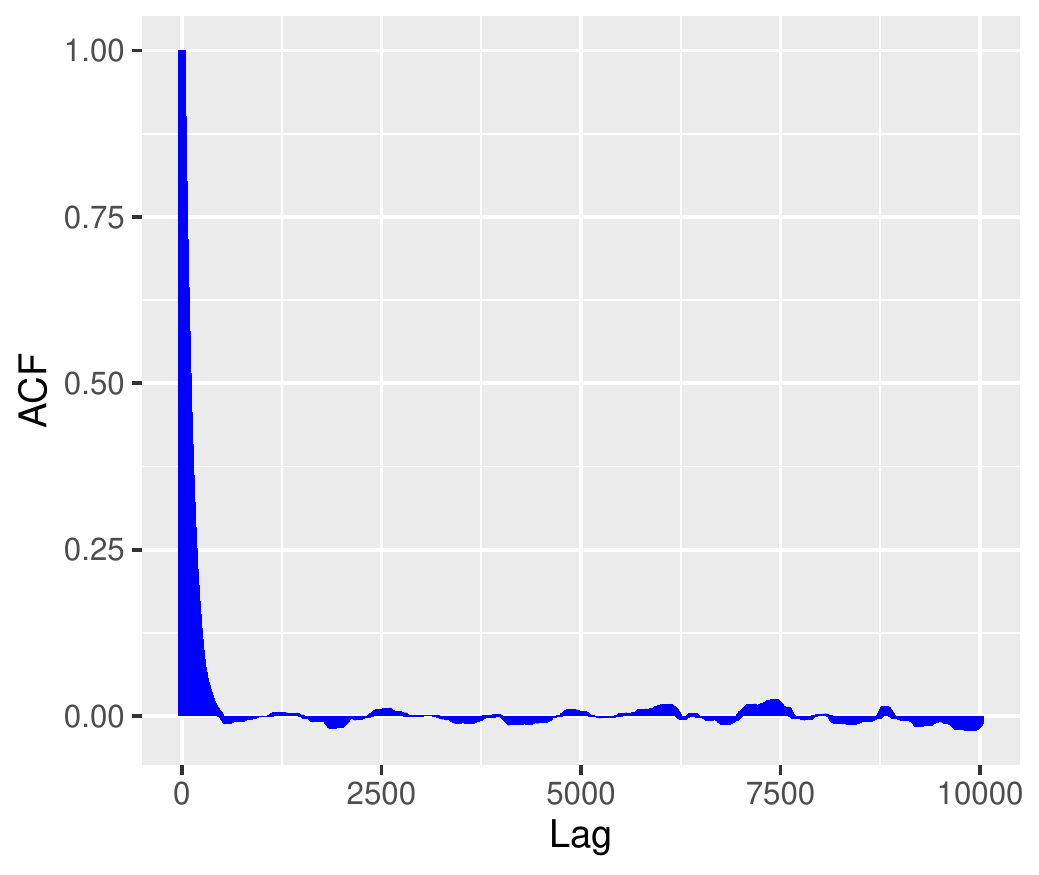}
   \caption{auto-correlation plot MALA  $\Pi_2$ on $x_2$}
    \label{fig:auto_mala_d64_pi2_2}
\end{minipage}
\end{figure}

\begin{figure}[htp]
\centering
\begin{minipage}{.5\textwidth}
  \centering
  \includegraphics[width=\linewidth]{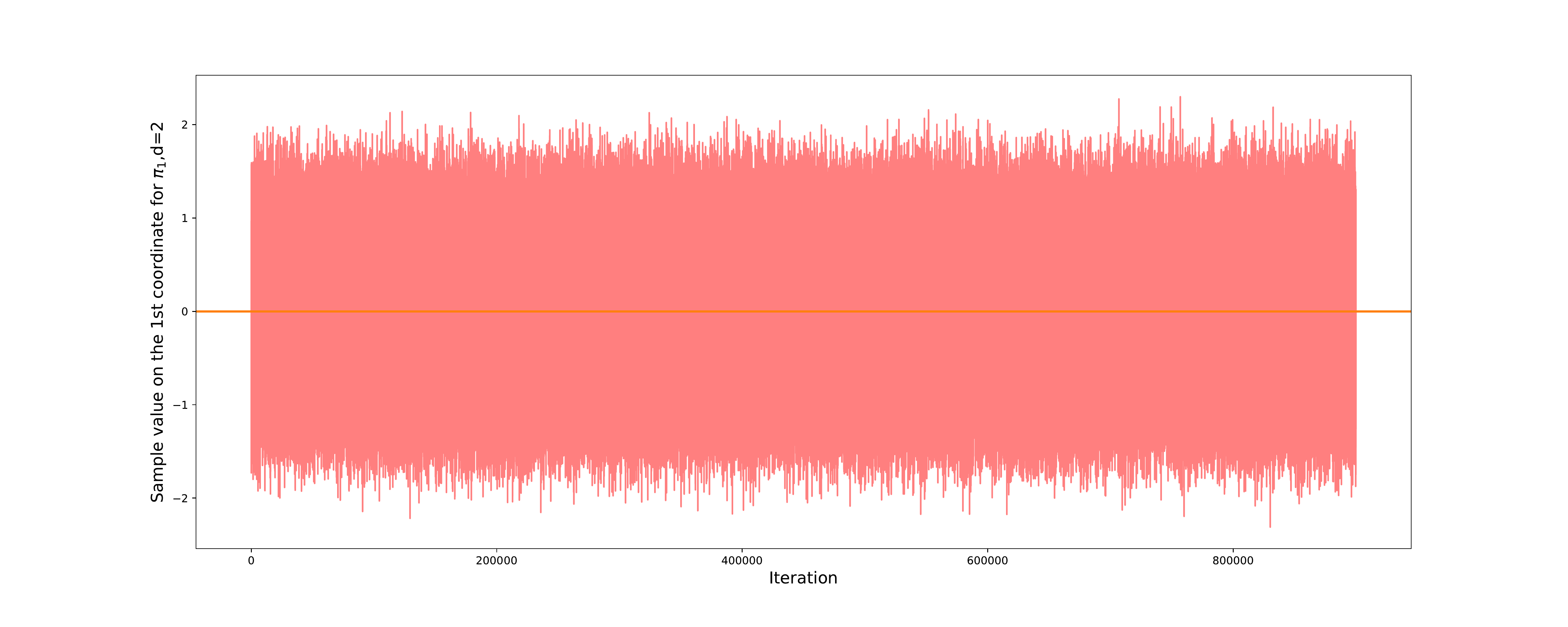}
  \caption{Traceplot on $x_1$ MAO chain targeting $\Pi_1$,$d=2$}
    \label{fig:trace_mao_d2_pi1}
\end{minipage}%
\begin{minipage}{.5\textwidth}
  \centering
  \includegraphics[width=.7\linewidth]{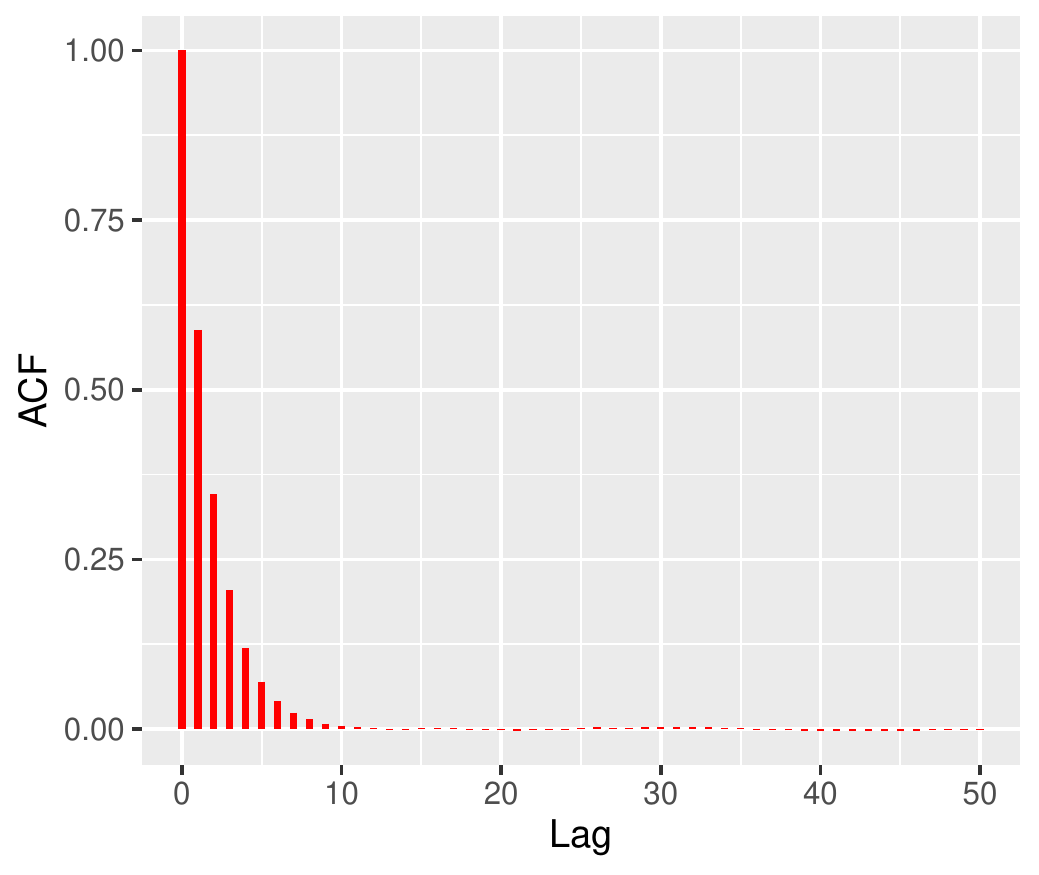}
   \caption{auto-correlation plot MAO chain targeting $\Pi_1$}
    \label{fig:auto_mao_d2_pi1}
\end{minipage}
\end{figure}
\begin{figure}[htp]
\centering
\begin{minipage}{.5\textwidth}
  \centering
  \includegraphics[width=\linewidth]{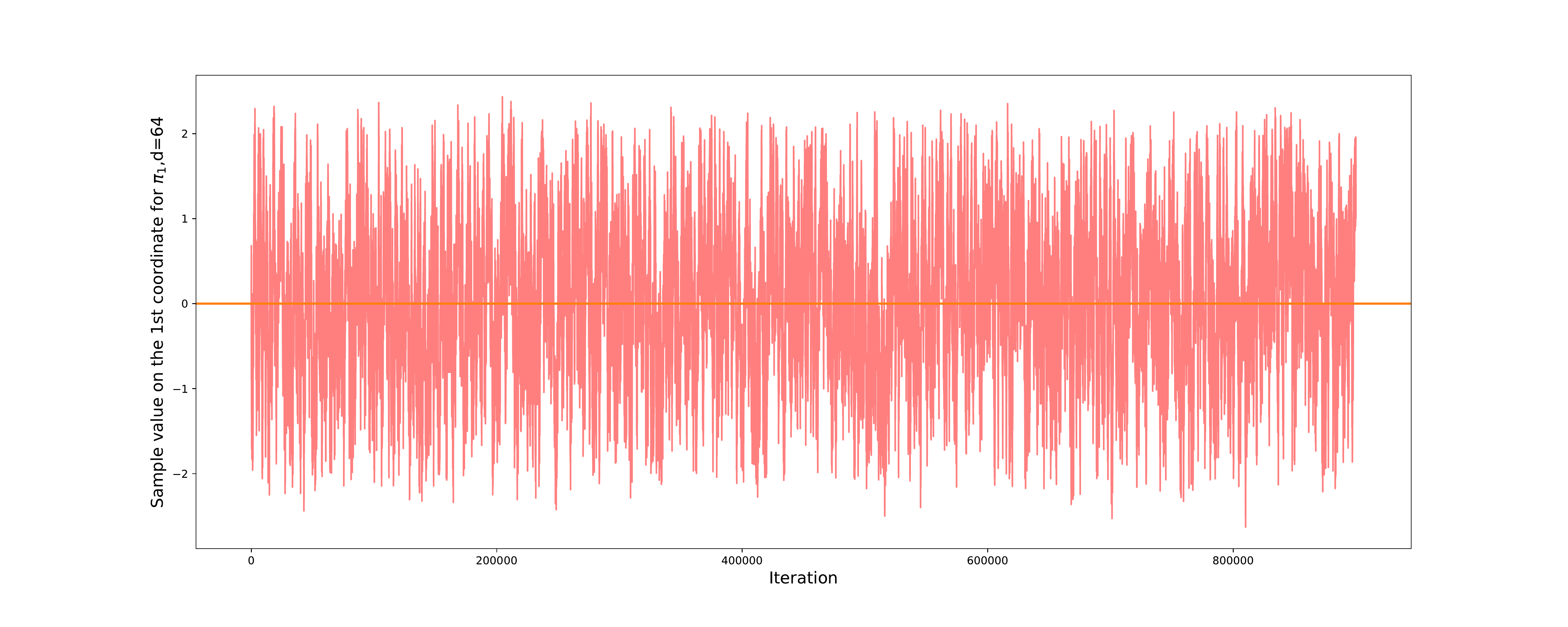}
  \caption{\small{Traceplot on $x_1$ MAO chain targeting $\Pi_1$,$d=64$}}
    \label{fig:mao_d64}
\end{minipage}%
\begin{minipage}{.5\textwidth}
  \centering
  \includegraphics[width=.7\linewidth]{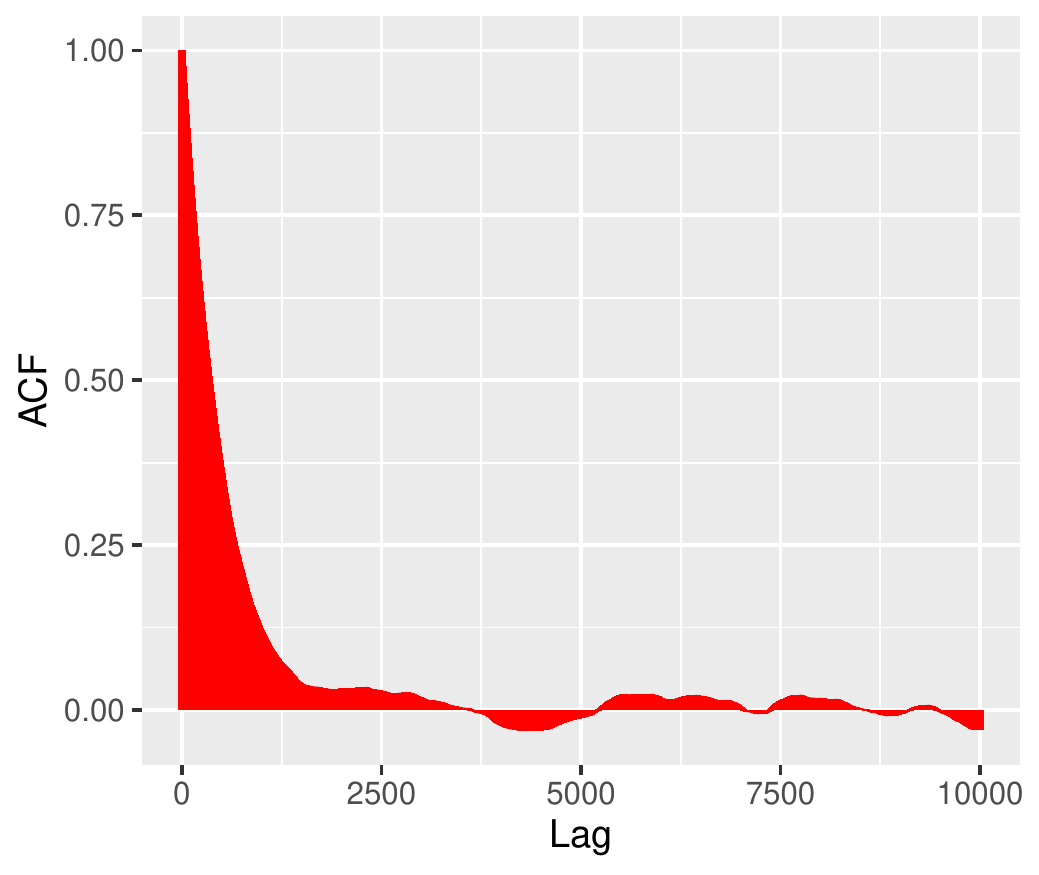}
   \caption{\small{Auto-correlation plot MAP chain targeting $\Pi_1$}}
    \label{fig:auto_mao_d64}
\end{minipage}
\end{figure}
\begin{figure}[htp]
\centering
\begin{minipage}{.5\textwidth}
  \centering
  \includegraphics[width=\linewidth]{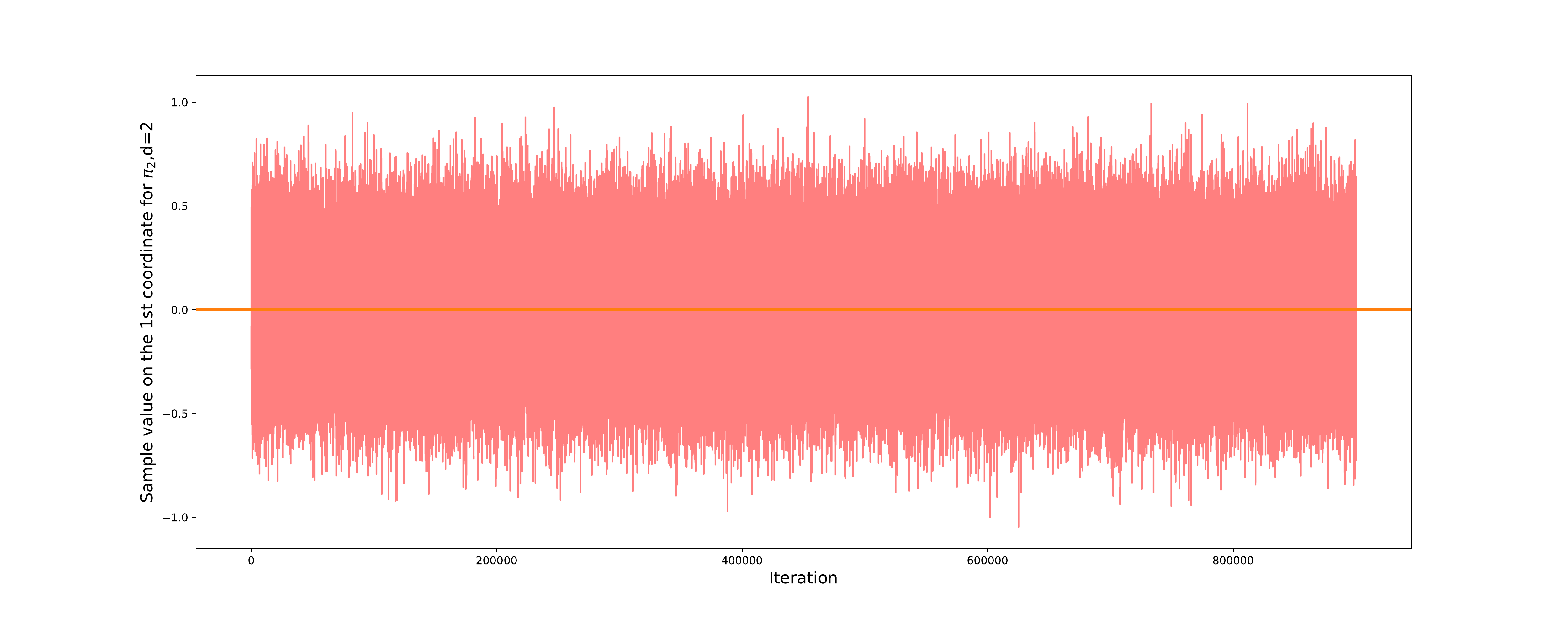}
  \caption{\small{Traceplot on $x_1$ MAO chain targeting $\Pi_2$,$d=2$}}
    \label{fig:mao_d2_pi2_1}
\end{minipage}%
\begin{minipage}{.5\textwidth}
  \centering
  \includegraphics[width=\linewidth]{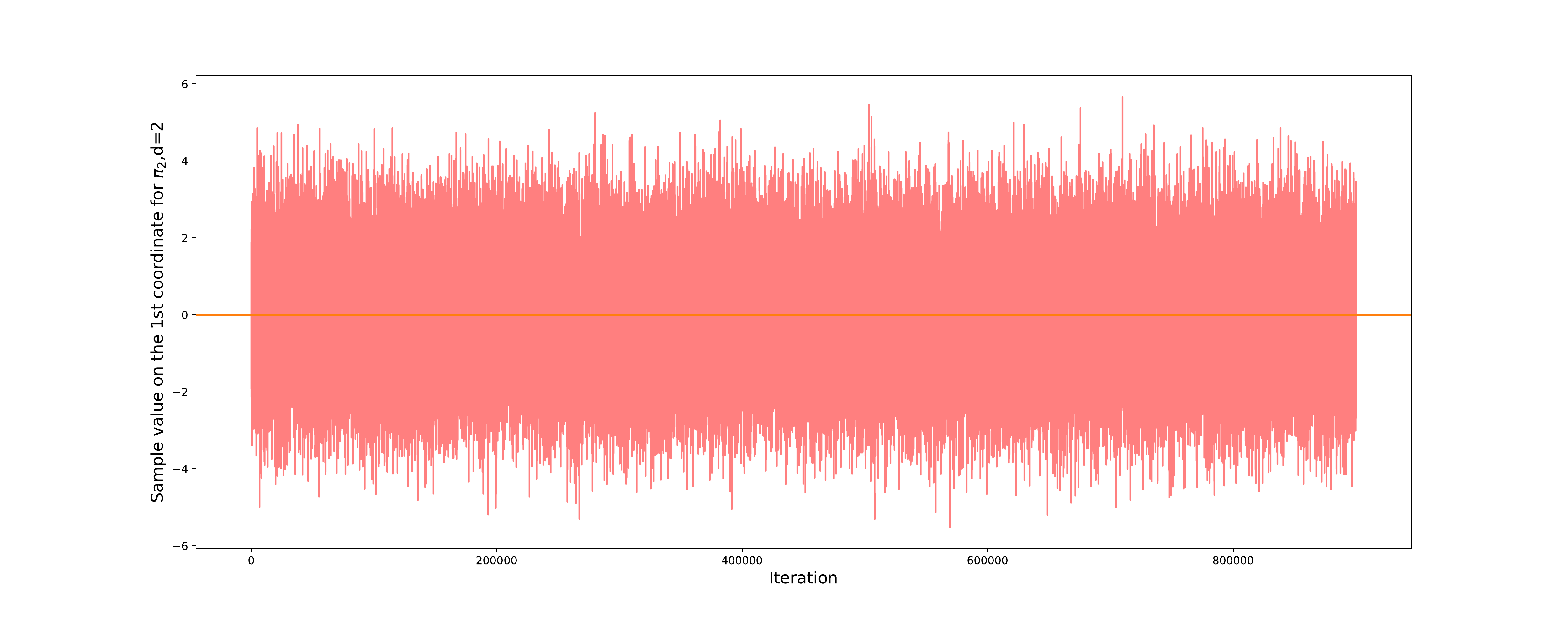}
   \caption{\small{Traceplot on $x_2$ MAO chain targeting $\Pi_2$,$d=2$}}
    \label{fig:mao_d2_pi2_2}
\end{minipage}
\end{figure}
\begin{figure}[htp]
\centering
\begin{minipage}{.5\textwidth}
  \centering
  \includegraphics[width=\linewidth]{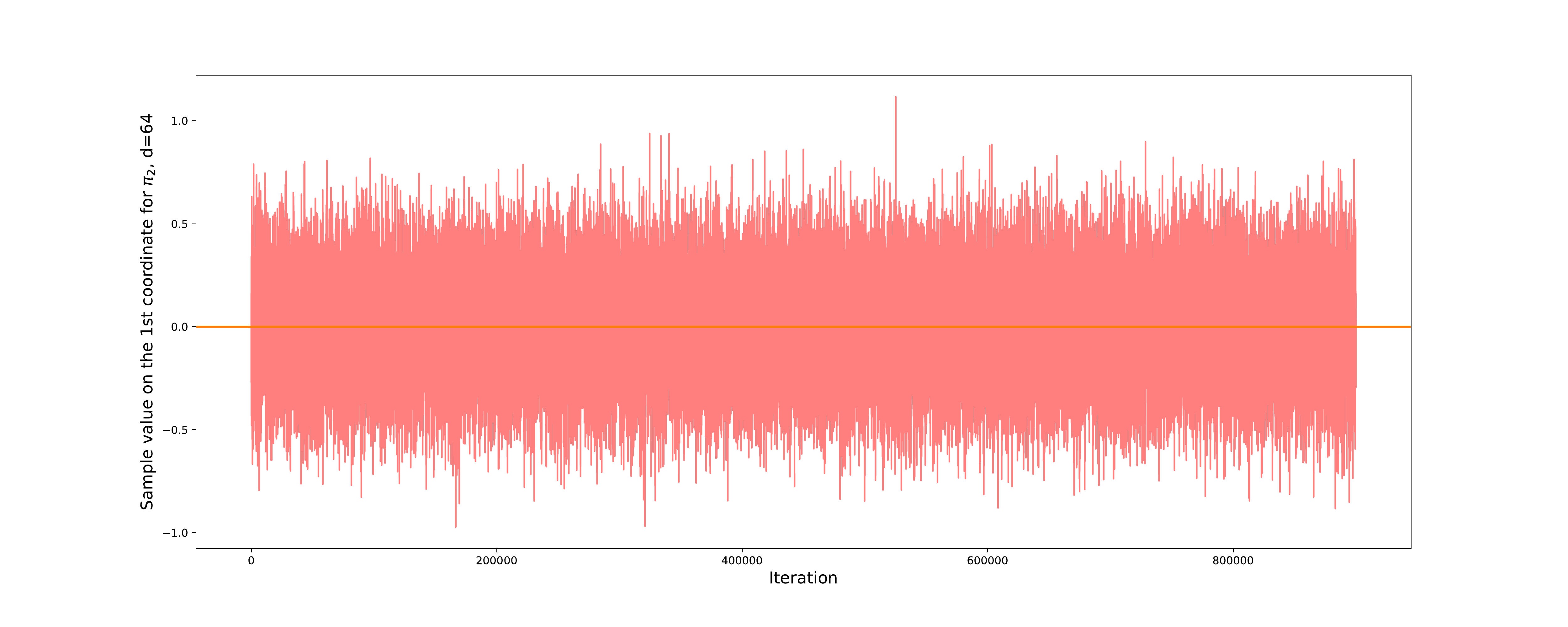}
  \caption{\small{Traceplot on $x_1$ MAO chain targeting $\Pi_2$,$d=64$}}
    \label{fig:mao_d64_pi2_1}
\end{minipage}%
\begin{minipage}{.5\textwidth}
  \centering
  \includegraphics[width=\linewidth]{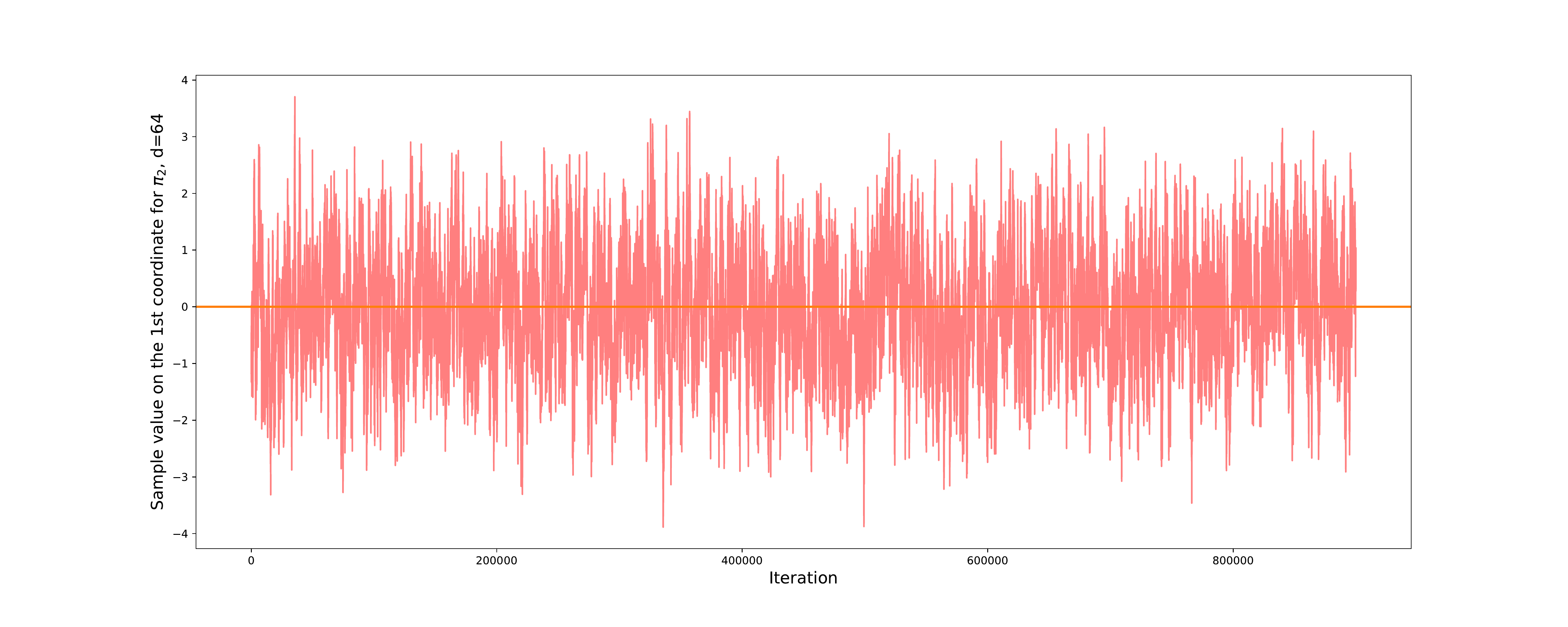}
   \caption{\small{Traceplot on $x_2$ MAO chain targeting $\Pi_2$,$d=64$}}
    \label{fig:mao_d64_pi2_2}
\end{minipage}
\end{figure}

\begin{figure}[htp]
\centering
\begin{minipage}{.5\textwidth}
  \centering
  \includegraphics[width=.7\linewidth]{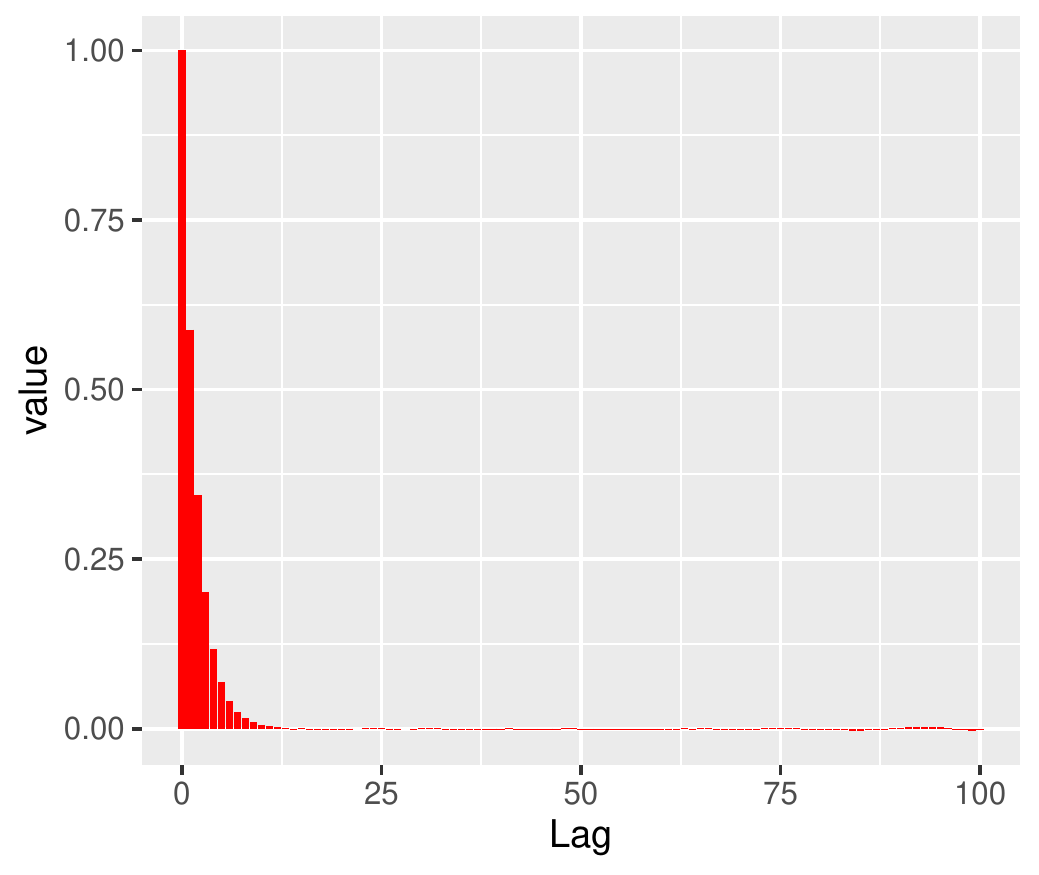}
  \caption{auto-correlation plot MAO $\Pi_2$ on $x_1$, $d=2$}
    \label{fig:auto_mao_d2_pi2_1}
\end{minipage}%
\begin{minipage}{.5\textwidth}
  \centering
  \includegraphics[width=.7\linewidth]{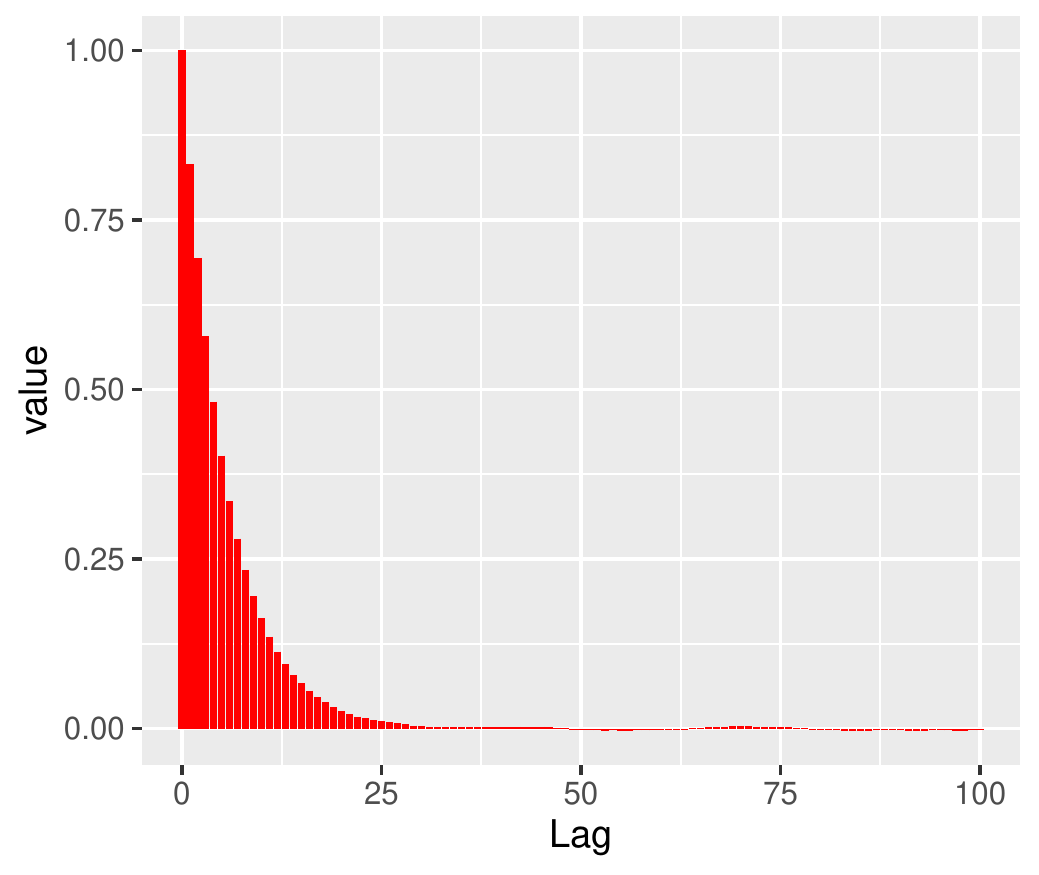}
   \caption{auto-correlation plot MAO  $\Pi_2$ on $x_2$, $d=2$}
    \label{fig:auto_mao_d2_pi2_2}
\end{minipage}
\end{figure}


\begin{figure}[htp]
\centering
\begin{minipage}{.5\textwidth}
  \centering
  \includegraphics[width=.7\linewidth]{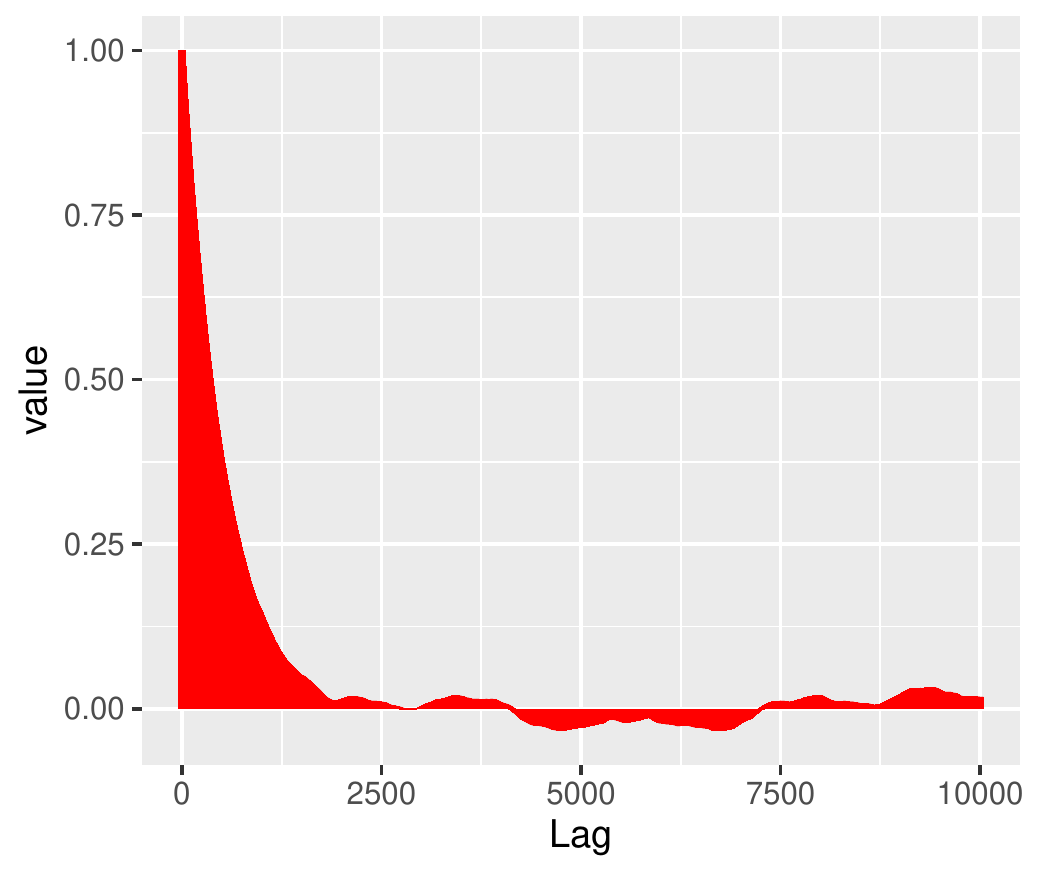}
  \caption{auto-correlation plot MAO $\Pi_2$ on $x_1$, $d=64$}
    \label{fig:auto_mao_d64_pi2_1}
\end{minipage}%
\begin{minipage}{.5\textwidth}
  \centering
  \includegraphics[width=.7\linewidth]{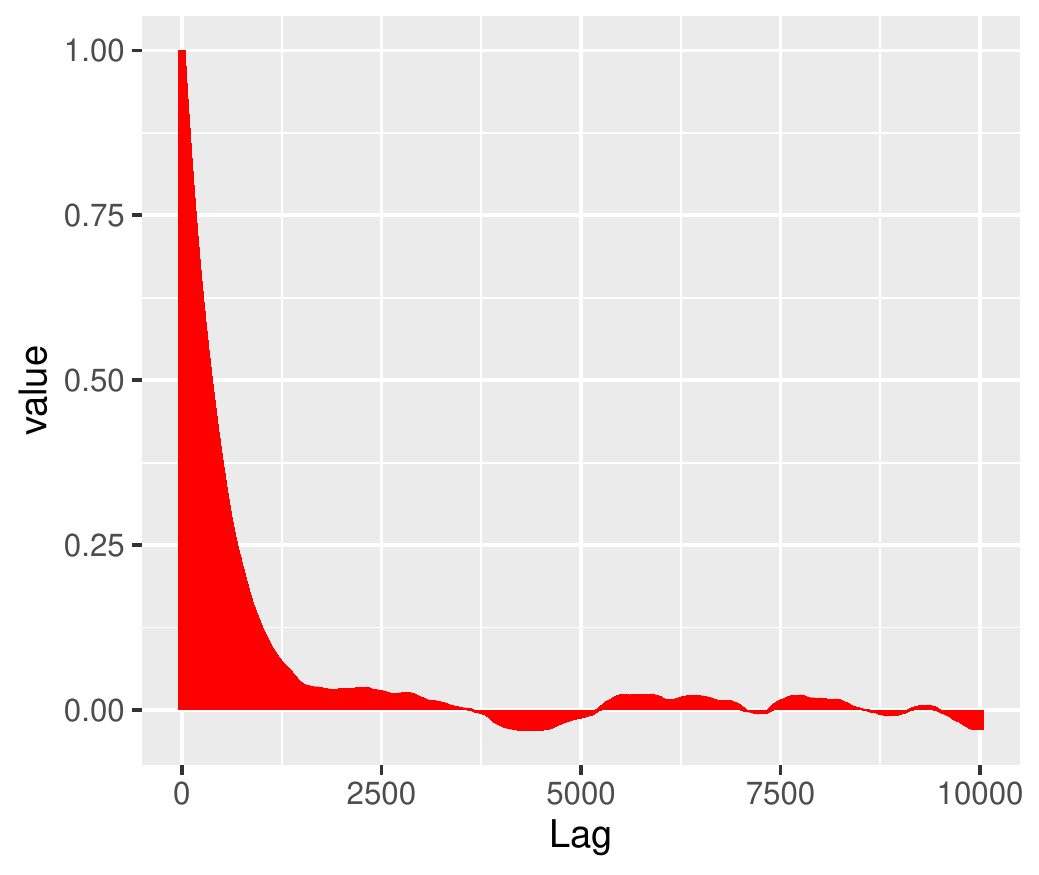}
   \caption{auto-correlation plot MAO  $\Pi_2$ on $x_2$, $d=64$}
    \label{fig:auto_mao_d64_pi2_2}
\end{minipage}
\end{figure}

\begin{table}[!ht]
     \centering
     \begin{tabular}{llllllll}
    \toprule
   dimension & $d=2$ & $d=4$  & $d=8$ & $d=16$ & $d=32$ & $d=64$ & $d=128$   \\
    \midrule
    ESS on $x_1$ & $422774$ & $59919$ & $10518$ & $1782$ & $335$ & $67$ & $20$\\
    \midrule
    ESS on $x_2$ & $426947$ & $59180$ & $10343$ & $1815$ & $346$ & $69$ & $24$ \\
    \midrule
    Acceptance rate & $0.827$ & $0.863$ & $0.864$ & $0.880$ & $0.899$ & $0.913$ &  $0.834$ \\
     \bottomrule
    \end{tabular}
    \caption{Convergence Diagnostic for MALA $\pi_1 $ chain  with $1 000 000$ samples}
     \label{tab:mala_pi1}
\end{table}

\begin{table}[!ht]
     \centering
     \begin{tabular}{llllllll}
    \toprule
   dimension & $d=2$ & $d=4$  & $d=8$ & $d=16$ & $d=32$ & $d=64$ & $d=128$   \\
    \midrule
    ESS on $x_1$ & $422774$ & $246566$ & $141180$ & $35693$ & $10042$ & $4293$ & $980$ \\
    \midrule
    ESS on $x_2$ & $426947$ & $58$ & $1006$ & $374$ & $487$ & $25$ & $24$ \\
    \midrule
    Acceptance rate & $0.827$ & $0.317$ & $0.495$ & $0.579$ & $0.581$ & $0.902$ &  $0.581$ \\
     \bottomrule
    \end{tabular}
    \caption{Convergence Diagnostic for MALA $\pi_2$ chain with $1 000 000$ samples }
     \label{tab:mala_pi2}
\end{table}

\begin{table}[!ht]
     \centering
     \begin{tabular}{llllllll}
    \toprule
   dimension & $d=2$ & $d=4$  & $d=8$ & $d=16$ & $d=32$ & $d=64$ & $d=128$   \\
    \midrule
    ESS on $x_1$ & $260593$ & $90468$ & $26462$ & $8306$ & $2781$ & $1019$ & $98$0 \\
    \midrule
    ESS on $x_2$ & $259203$ & $90794$ & $26282$ & $8306$ & $2831$ & $1016$ & $24$ \\
    \midrule
    Acceptance rate & $0.632$ & $0.695$ & $0.716$ & $0.740$ & $0.774$ & $0.816$ &  $0.799$ \\
     \bottomrule
    \end{tabular}
    \caption{Convergence Diagnostic for MAO $\pi_1$ chain with $1 000 000$ samples and step size given by Theorem~\ref{thm_b} }
     \label{tab:mao_pi1}
\end{table}

\begin{table}[!ht]
     \centering
     \begin{tabular}{llllllll}
    \toprule
   dimension & $d=2$ & $d=4$  & $d=8$ & $d=16$ & $d=32$ & $d=64$ & $d=128$   \\
    \midrule
    ESS on $x_1$ & $260593$ & $90468$ & $26462$ & $8306$ & $2781$ & $1019$ & $391$ \\
    \midrule
    ESS on $x_2$ & $259203$ & $90794$ & $26282$ & $8306$ & $2831$ & $1016$ & $351$ \\
    \midrule
    Acceptance rate & $0.302$ & $0.448$ & $0.603$ & $0.732$ & $0.824$ & $0.896$ &  $0.02$ \\
     \bottomrule
    \end{tabular}
    \caption{Convergence Diagnostic for MAO $\pi_2$ chain with $1 000 000$ samples and step size given by Theorem ~\ref{thm_b} }
     \label{tab:mao_pi2}
\end{table}

\newpage

\section{Proofs}
\label{sec:proof_of_the_main_theorem}

In this section, we detail the  proof of
Theorem~\ref{thm_a}, Theorem~\ref{thm_b} and Theorem~\ref{thm:opti_mao}.  In order to do so, we will follow a similar approach as the proof structure of ~\cite{dwivedi2018log} and ~\cite{chenhmc} with some minor modifications. We use the isoperimetric inequalities, that we describe in the next paragraph. We will leverage the framework on mixing times based on the conductance profile, mainly we invoke Lemma 4 from~\cite{chenhmc}.  We then seek to apply Lemma 2~\cite{chenhmc} to establish a bound on the conductance profile. In a similar fashion as the proofs in previous works, we need to control the overlap between the proposal
distributions of MAO at two nearby points and show that the
Metropolis-Hastings correction step only serves to modify the proposal distribution by a controllable quantity. This control is provided by
Lemma~\ref{lem:transition_closeness}.  We use it to prove
Theorem~\ref{thm_a} and Theorem~\ref{thm_b}. Finally, we provide a proof of Theorem~\ref{thm:opti_mao}, all while providing a proof of a special sub-family of $\mathcal{E}(\alpha,m)$ that is $(\alpha,\gamma,m)$-\textit{strongly log-concave}. 
Similarly to the proofs  
\paragraph{Isoperimetric Inequalities.}

We describe here the isoperimetric inequalities as described in the~\cite{chenhmc}:

A distribution $\Pi$ supported on  $\mathcal{X} \subset \mathbb{R}^d$ satisfies the \emph{isoperimetric inequality} ($\mathfrak{a} =
0$) or the \emph{log-isoperimetric inequality} ($\mathfrak{a}=
\frac{1}{2}$) with constant $\psi_{a}$ if for given any partition $S_1$, $S_2$, $S_3$ of $\mathcal{X}$ we have:
  \begin{align}
    \label{eq:assumption_isoperimetric}
    \Pi(S_3)\geq\frac{1}{2\psi_{a}}\cdot d(S_1,S_2)\cdot \min(\Pi(S_1),\Pi(S_2))\cdot \log^{a}\left(1+\frac{1}{\min(\Pi(S_1),\Pi(S_2))}\right)
  \end{align}
 
$d(S_1,S_2)=\inf_{x\in S_1 y\in S_2}\{\left\|x-y\right\|_{2}\}$ being the distance between two sets $S_1, S_2$.

 $\Pi_\Omega$  is the restriction to $\Omega$ of  $\Pi$, whose  density $\pi_\Omega(x) =
  \frac{\pi(x)\mathbf{1}_\Omega (x)}{\Pi(\Omega)}$.
  
Lemma 16 of \cite{chenhmc} and the discussion afterwards has shown the following key result, that we will use repeatedly.

\begin{lem}\label{lem:isoperimetry}
Suppose that $\Omega$ is a convex set, and the potential of $\Pi$ is $m$-strongly convex. Then the restriction $\Pi_\Omega$ satisfies the log-isoperimetric inequality with $\mathfrak{a}=\frac{1}{2}$ and constant $\psi_{a}=\frac{1}{\sqrt{m}}$.
\end{lem}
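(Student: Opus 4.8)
The plan is to reduce the statement to two facts that are essentially off the shelf: first, that the restriction of an $m$-strongly log-concave distribution to a convex set is again $m$-strongly log-concave; and second, that any $m$-strongly log-concave distribution satisfies the log-isoperimetric inequality \eqref{eq:assumption_isoperimetric} with $\mathfrak{a}=\tfrac12$ and constant $\psi_{1/2}=1/\sqrt{m}$, which is exactly the content of Lemma 16 of \cite{chenhmc} together with the discussion following it. Given these, the lemma follows immediately, so the only thing that genuinely needs checking is the preservation of strong log-concavity under restriction, after which we appeal directly to \cite{chenhmc}.

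For the first step, I would write the density of $\Pi_\Omega$ as $\pi_\Omega(x)\propto e^{-f(x)}\mathbf{1}_\Omega(x)=e^{-f_\Omega(x)}$, where $f_\Omega:=f+\iota_\Omega$ and $\iota_\Omega$ is the convex indicator of $\Omega$ (equal to $0$ on $\Omega$ and $+\infty$ elsewhere). Since $\Omega$ is convex, $\iota_\Omega$ is convex, so $f_\Omega-\tfrac{m}{2}\|\cdot\|_2^2$ is convex as an extended-real-valued function, i.e.\ $\Pi_\Omega$ is $m$-strongly log-concave. To make the Bakry--\'Emery-type reasoning behind the isoperimetric constant rigorous in the presence of the hard constraint, I would approximate $\iota_\Omega$ from below by smooth convex penalties $\iota_\Omega^{(n)}\uparrow\iota_\Omega$ (for instance a mollified $n\,\mathrm{dist}(\cdot,\Omega)^2$), observe that each $f+\iota_\Omega^{(n)}$ is smooth and $m$-strongly convex, so the associated measures satisfy \eqref{eq:assumption_isoperimetric} with constant $1/\sqrt m$, and then pass to the limit using stability of the isoperimetric inequality under weak convergence of the measures with a fixed constant.

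For the second step, I would recall that an $m$-strongly log-concave measure satisfies a logarithmic Sobolev inequality with constant $C_{\mathrm{LS}}=1/m$ by the Bakry--\'Emery criterion, and then invoke the implication established in Lemma 16 of \cite{chenhmc} from a log-Sobolev inequality with constant $C_{\mathrm{LS}}$ to the log-isoperimetric inequality \eqref{eq:assumption_isoperimetric} with $\mathfrak{a}=\tfrac12$ and constant proportional to $\sqrt{C_{\mathrm{LS}}}$; with the normalisation used in \eqref{eq:assumption_isoperimetric} this gives precisely $\psi_{1/2}=\sqrt{C_{\mathrm{LS}}}=1/\sqrt m$. The main obstacle, such as it is, is bookkeeping rather than depth: matching the constant conventions in \eqref{eq:assumption_isoperimetric} (the $1/(2\psi_a)$ prefactor and the $\log^{a}(1+\cdot)$ term) with those of \cite{chenhmc} so that the constant comes out as $1/\sqrt m$ exactly rather than up to an absolute factor, and exercising a little care in the limiting argument over the barrier approximations when $\Omega$ is unbounded or has irregular boundary. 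Neither point is substantial, so the proof is short.
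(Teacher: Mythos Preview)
Your proposal is correct and takes essentially the same approach as the paper: the paper does not give an independent proof of this lemma at all, but simply states it as a direct consequence of Lemma~16 of \cite{chenhmc} and the discussion following it, which is exactly what you invoke. Your additional remarks about approximating $\iota_\Omega$ by smooth barriers and passing to the limit go beyond what the paper does (it treats the restricted case as already handled by the cited reference), but they are sound and do no harm.
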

This lemma shows that log-isoperimetry follows from strong convexity, hence it holds under our Assumptions \ref{itm:assumptionA} and \ref{itm:assumptionB}.

Before controlling the acceptance probability and bounding the conductance profile, we first provide an example of a convex set with high probability mass, for target distributions $\Pi$ with an $m$-strongly convex potential, this essentially corresponds to Lemma 5 of \cite{dwivedi2018log} (although that result was only stated for $s\in (0,1/2)$, so we've slightly modified the definition of $r$ to make it applicable to every $s\in (0,1)$.
\begin{lem}
\label{lem:growth}
Suppose that $\Pi$ is  $m$-strongly log-concave. For $s \in(0,1)$ let  $\mathcal{R}_{s,\frac{1}{2}}=\mathbb{B} \left(x^*, r(s)\sqrt{\frac{d}{m}}\right)$. Then we have that
$$    \Pi(\mathcal{R}_{s,\frac{1}{2}})\geq 1-s.$$
\end{lem}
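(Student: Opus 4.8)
The plan is to bound the tail probability $\Pi\bigl(\|X - x^*\| > r\sqrt{d/m}\bigr)$ for $r = r(s)$ by $s$. First I would reduce to a one-dimensional concentration statement. Since the potential $f$ is $m$-strongly convex with minimizer $x^*$ and $\nabla f(x^*) = 0$, we have $f(x) - f(x^*) \ge \frac{m}{2}\|x - x^*\|^2$. The standard comparison is with the Gaussian $\mathcal{N}(x^*, \frac{1}{m}I_d)$: for any convex symmetric statistic, $\Pi$ is more concentrated than this Gaussian. Concretely, I would invoke the fact (e.g.\ via the log-concave concentration results used in \cite{dwivedi2018log}, their Lemma 5, or directly by a change-of-variables/coupling argument) that
\begin{align*}
\mathbb{E}_{\Pi}\bigl[\|X - x^*\|^2\bigr] \le \frac{d}{m},
\end{align*}
and more strongly that $\|X - x^*\|$ concentrates around $\sqrt{d/m}$ with sub-exponential fluctuations of order $\sqrt{1/m}$.

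Next I would translate this concentration into the explicit radius $r(s)\sqrt{d/m}$. The function $R(s) = 1 + \max\{(\log(1/s)/d)^{1/4}, (\log(1/s)/d)^{1/2}\}$ is designed precisely so that a Gaussian-type tail bound of the form $\Pi(\|X-x^*\| > (1+t)\sqrt{d/m}) \le \exp(-c\, d \min(t^2, t^4))$ — which is the bound one gets for the norm of a $d$-dimensional strongly log-concave vector, where the $t^2$ regime governs moderate deviations and the $t^4$ regime (equivalently, after squaring, a $t^2$ bound on $\|X-x^*\|^2/(d/m) - 1$) governs the bulk — yields the claimed $1-s$ lower bound when $t = \max\{(\log(1/s)/d)^{1/4}, (\log(1/s)/d)^{1/2}\}$. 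Taking the max with $R(1/2)$ in the definition of $r(s)$ only enlarges the ball, so the bound $\Pi(\mathcal{R}_{s,1/2}) \ge 1-s$ is preserved; this modification (relative to the $s \in (0,1/2)$ statement in \cite{dwivedi2018log}) is exactly what makes the lemma valid for all $s \in (0,1)$, since for $s$ close to $1$ we simply use the fixed radius $R(1/2)\sqrt{d/m}$.

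The main obstacle is getting the two-regime tail bound with the right constants: one must verify that strong convexity alone (without an upper smoothness bound $L$) suffices to control $\|X - x^*\|$ from above, and that the crossover between the $(\cdot)^{1/4}$ and $(\cdot)^{1/2}$ branches of $R(s)$ matches the crossover in the concentration inequality (moderate vs.\ large deviations of a sum of $d$ weakly-dependent squared coordinates). Since $\mathcal{R}_{s,1/2}$ is a Euclidean ball and hence convex, and it contains $x^*$, the set automatically meets the structural requirements needed downstream (e.g.\ for applying Lemma~\ref{lem:isoperimetry} to $\Pi_{\mathcal{R}_{s,1/2}}$). I would therefore present the argument as: (i) strong convexity $\Rightarrow$ sub-Gaussian-squared concentration of $\|X-x^*\|^2$; (ii) invert the tail bound at level $s$ to read off the radius; (iii) observe monotonicity in the radius to handle the $\max$ with $R(1/2)$ and conclude for all $s\in(0,1)$.
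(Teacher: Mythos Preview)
The paper does not give its own proof of this lemma: it simply identifies the statement with Lemma~5 of \cite{dwivedi2018log}, remarking that the only change is replacing $R(s)$ by $r(s)=\max(R(s),R(1/2))$ so that the conclusion holds for all $s\in(0,1)$ rather than only $s\in(0,1/2)$. Your proposal is therefore more detailed than what the paper does --- you are unpacking the argument that the paper delegates to a citation. The sketch you give (strong convexity yields Gaussian-type domination, hence chi-square type concentration for $\|X-x^*\|^2$; invert the tail bound at level $s$ to obtain the radius; enlarge via the $\max$ with $R(1/2)$ to cover $s\ge 1/2$) is exactly the shape of the proof in \cite{dwivedi2018log}, and you correctly pinpoint why the two branches $(\log(1/s)/d)^{1/4}$ and $(\log(1/s)/d)^{1/2}$ appear and why the modification $r(s)$ is needed. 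So your approach is aligned with the paper's (implicit) one; there is no discrepancy to flag.
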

This result has been used in previous works for establishing mixing times for MALA, and will be used when upper bounding the average acceptance probability for MAO, next we prove that that the overlap between the proposal distributions of MAO at two nearby points is controllable.
From our perspective, this result shows that Assumption \ref{itm:assumptionA} is a special case of Assumption \ref{itm:assumptionB}, with $\tau(s)=r(s)$, $\gamma=2$, and $\Omega=\mathcal{R}_{s,\frac{1}{2}}$. Hence it suffices to show Theorem \ref{thm_b}.

In the next two sections, we will state a key lemma about the overlaps between Markov kernels, and then prove Theorem \ref{thm_b}. The following two sections are dedicated to proving the two statements of the key lemma. The remaining sections prove Theorem \ref{thm:opti_mao} (applicable to situations where we apply MAO with n approximate optimum, instead of the true one), show some examples of potentials satisfying Assumption \ref{itm:assumptionB} with $\gamma>2$, and prove Proposition \ref{prop:feasiblestart} (which bounds the value of $\beta$ for some feasible starts).

\subsection{Overlap of MAO} 
This subsection is devoted to deriving bounds for the  MAO chain:
(1) first, by quantifying  the overlap between
 proposal distributions at two adjacent points, (2) secondly,  we show that the difference in the proposal distribution induced by the Metropolis-Hastings correction rule is controllable given an appropriate choice of the step size. Putting the two
pieces together enables us to invoke Lemma 2 of \cite{chenhmc} to prove
Theorem~\ref{thm_b}.

To that end, we first introduce some notations.  Let $\mathcal{T}$
be the transition operator of the MAO chain with step size $h$. Let $\mathcal{T}_x$ denote the proposal distribution at a given
$x \in \mathcal{X}$ prior to the accept-reject step and
the lazy step. Let $\mathcal{T}_{x}^{\textrm{Before-Lazy}}$ the same transition distribution after the proposal and the Metropolis-Hastings correction rule, before the lazy step. We have by definition 
\begin{align}
\label{eq:transition_lazy_decomposition}
  \mathcal{T}_x(A) = \zeta \delta_x(A) + (1-\zeta)
  \mathcal{T}_{x}^{\textrm{Before-Lazy}}(A) \qquad \mbox{for any measurable set $A \in
     \mathcal{B}(\mathcal{X})$.}
\end{align}
In our proofs, we invoke the set $\Omega$ defined in Assumption \ref{itm:assumptionB}. The main premise of our proof is the MAO chain has a good smooth behaviour inside the set $\Omega$ for a particular choice of $s$. The behaviour is less smooth outside such a set. However, since the target distribution has a low probability mass on said region, the chain rarely visits the area outside the set $\Omega$. Thus, it suffices to analyze the chain's behavior inside the ball in order to derive its mixing time. In the next lemma, we derive overlap bounds for the transition
distribution of the MAO chain.  Given some universal constant $c$, we
require
\begin{subequations}
\begin{align}
     \label{eq:step_condition_thm_proof1}
     h&\leq \frac{1}{c \tau (\frac{\epsilon^2}{3\beta})d^{\omega}} \quad \textrm{ for}\quad  \Pi \in \mathcal{E}(\alpha,\gamma,m), \quad \textrm{with} \quad \omega=\max
\left(\frac{2 (\alpha-1)}{\gamma},\frac{\gamma+\alpha-2}{\gamma}\right).
\end{align}
\end{subequations}

\begin{lem}
  \label{lem:transition_closeness}
  Consider a target distribution satisfying the conditions of Assumption~\ref{itm:assumptionB}. Then with the step size choice $h$ satisfying \eqref{eq:step_condition_thm_proof1} we have the following results: 
\begin{subequations}
\begin{align}
\label{eq:proposal_closeness}
 \sup_{\left\|x-y\right\|\leq \sqrt{\frac{h}{2}}} \left\|\mathcal{P}_{x}-\mathcal{P}_{y}\right\|_{TV}\leq \frac{1}{2}\\
  \label{eq:transition_proposal_overlap}
     \sup_{x\in \Omega} \left\|\mathcal{P}_{x}-\mathcal{T}_{x}^{\textrm{Before-Lazy}}\right\|_{TV}\leq \frac{1}{8}
\end{align}
\end{subequations}
\end{lem}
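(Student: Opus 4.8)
The plan is to establish the two bounds separately, following the template of Lemma 3 in \cite{dwivedi2018log} and the corresponding lemma in \cite{chenhmc}, but adapted to the autoregressive drift $x\mapsto x-h(x-\tilde x)$ of MAO instead of the gradient drift of MALA. For \eqref{eq:proposal_closeness}, I would first recall that the proposal $\mathcal{P}_x=\mathcal{N}\bigl(x-h(x-\tilde x),\,2h\mathbb{I}_d\bigr)=\mathcal{N}\bigl((1-h)x+h\tilde x,\,2h\mathbb{I}_d\bigr)$. Two such Gaussians differ only in their means, so by Pinsker's inequality (or the explicit TV bound for Gaussians with common covariance) $\|\mathcal{P}_x-\mathcal{P}_y\|_{TV}\le \frac{1}{2\sqrt{2h}}\,\|m_x-m_y\|$ where $m_x-m_y=(1-h)(x-y)$. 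For $\|x-y\|\le\sqrt{h/2}$ and $h\le 1$ this gives $\|\mathcal{P}_x-\mathcal{P}_y\|_{TV}\le\frac{|1-h|}{2\sqrt{2h}}\sqrt{h/2}=\frac{|1-h|}{4}\le\frac14\le\frac12$. Note this part needs only $h\le 1$, which is implied by \eqref{eq:step_condition_thm_proof1} for $d$ large; I would remark on this so the reader sees the first bound is essentially free and the real content is the second.

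For \eqref{eq:transition_proposal_overlap}, the standard strategy is: the only way $\mathcal{P}_x$ and $\mathcal{T}_x^{\textrm{Before-Lazy}}$ differ is through the accept--reject step, and $\|\mathcal{P}_x-\mathcal{T}_x^{\textrm{Before-Lazy}}\|_{TV}=\mathbb{E}_{z\sim\mathcal{P}_x}\bigl[1-A(x,z)\bigr]=1-\mathbb{E}_{z\sim\mathcal{P}_x}[A(x,z)]$, i.e.\ it equals one minus the average acceptance probability at $x$. So it suffices to show $\inf_{x\in\Omega}\mathbb{E}_{z\sim\mathcal{P}_x}[A(x,z)]\ge \frac78$. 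Using $A(x,z)=\min\{1,e^{-\Delta(x,z)}\}$ where $\Delta(x,z)$ is the log of the Hastings ratio, and the elementary bound $\mathbb{E}[\min\{1,e^{-\Delta}\}]\ge 1-\mathbb{E}[|\Delta|\mathbf{1}\{\Delta>0\}]$ (or a Markov-type bound), I would reduce to controlling $\mathbb{E}_{z\sim\mathcal{P}_x}[\,|\Delta(x,z)|\,]$ and showing it is $\le \frac18$ (up to the standard trick of also intersecting with a high-probability event on which $\|z-x\|$ is not too large, since $z-x$ concentrates around $\sqrt{2hd}$). Explicitly, from the acceptance formula in Algorithm~\ref{algo:new},
\[
\Delta(x,z)=f(z)-f(x)+\frac{\|x-z+h(z-\tilde x)\|^2-\|z-x+h(x-\tilde x)\|^2}{4h}.
\]
Expanding the quadratic terms, the cross terms produce something like $\langle z-x,\,\tfrac12(x+z)-\tilde x\rangle + O(h)\,(\cdots)$, which one combines with the Taylor expansion $f(z)-f(x)=\langle\nabla f(\xi),z-x\rangle$ for $\xi$ on the segment; the leading linear-in-$(z-x)$ terms should largely cancel (this is the analogue of the MALA computation where the Langevin drift is designed to cancel against $\nabla f$), leaving a remainder governed by the Hessian bound \eqref{eq:nabla_f}, the gradient bound \eqref{eq:alpha_f}, the discretization error of order $h$, and the distance $\|x-\tilde x\|$.

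The main obstacle is the last point: bounding $\|x-\tilde x\|$ and $\|\nabla f(x)\|$, $\|\nabla^2 f(\xi)\|$ uniformly for $x\in\Omega$ (and $z$ in the high-probability shell around $x$). This is exactly where Assumption~\ref{itm:assumptionB} enters through $\mathrm{diam}(\Omega)\le\tau(s)d^\gamma$ — wait, one must be careful: the statement has $\mathrm{diam}(\Omega)\le\tau(s)d^\gamma$ but the step size uses $d^\omega$ with $\omega$ built from $(\alpha-1)/\gamma$ and $(\gamma+\alpha-2)/\gamma$, so the relevant scale is $\mathrm{diam}(\Omega)^{1/\gamma}\asymp d$ type quantities; I would use that for $x\in\Omega$, $\|x-x^*\|\le\mathrm{diam}(\Omega)$ and hence $\|\nabla f(x)\|\lesssim K_1\,\mathrm{diam}(\Omega)^{\alpha-1}$ and $\|\nabla^2 f(\xi)\|\lesssim K_2\,(\|x\|+\|z-x\|)^{\alpha-2}$, plug the shell radius $\|z-x\|\lesssim\sqrt{hd}$, and then verify that the step size choice \eqref{eq:step_condition_thm_proof1} with that specific $\omega$ makes every error term $\le\frac{1}{8}$ (dividing the budget, say $\frac{1}{16}$ each). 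Tracking the exponents of $d$, $h$, and $\tau$ carefully so that the two terms in the $\max$ defining $\omega$ are precisely the two binding constraints is the delicate bookkeeping; I expect the $h^{1/2}\,\mathrm{diam}(\Omega)^{(\alpha-1)}$-type term and the $h\cdot\mathrm{diam}(\Omega)^{\alpha-2}\cdot d$-type term to be the two that dictate the two branches of $\omega$. I would also note that the restriction to $x\in\Omega$ (rather than all of $\mathbb{R}^d$) is essential precisely because these bounds blow up in the tails — which is the whole point of why MALA fails and MAO does not.
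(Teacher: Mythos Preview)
Your plan for \eqref{eq:proposal_closeness} is exactly the paper's argument (Pinsker + KL for Gaussians with common covariance), and in fact your constant is slightly sharper. Nothing more to say there.

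For \eqref{eq:transition_proposal_overlap} your overall strategy---reduce to $1-\mathbb{E}_{z\sim\mathcal P_x}[A(x,z)]$, bound the log-Hastings ratio on a $\chi^2$ high-probability event, then track powers of $d$---matches the paper, and you correctly predict that the terms $h^{1/2}\,\mathrm{diam}(\Omega)^{\alpha-1}$ and $h\cdot d\cdot\mathrm{diam}(\Omega)^{\alpha-2}$ are the two that pin down the two branches of $\omega$. But two points are off. First, the paper does not work from the raw quadratic $\frac{\|x-z+h(z-\tilde x)\|^2-\|z-x+h(x-\tilde x)\|^2}{4h}$: it observes that the autoregressive proposal $Q$ is \emph{reversible} with respect to a Gaussian $\phi$ centred at $\tilde x$, which collapses the Hastings ratio to $\frac{\pi(z)\phi(x)}{\pi(x)\phi(z)}$ and reduces $\Delta$ to $f(z)-f(x)+c(\|z-\tilde x\|^2-\|x-\tilde x\|^2)$ with $c\approx\tfrac12$. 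You would of course get the same expression by brute-force expansion, but the reversibility viewpoint is what makes the algebra clean.

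Second, and more importantly, your intuition that ``the leading linear-in-$(z-x)$ terms should largely cancel'' as in MALA is wrong here and could lead you astray. The MAO drift is $-h(x-\tilde x)$, not $-h\nabla f(x)$, so there is no built-in cancellation of $\langle\nabla f(x),z-x\rangle$ against the proposal quadratic. What actually happens is that you write $z-x=-h(x-x^*)+\sqrt{2h}\,\xi$ and split $\langle\nabla f(x),z-x\rangle$ into (i) $-h\langle\nabla f(x),x-x^*\rangle$, which is $\le 0$ by convexity since $x^*$ is the minimizer and is simply \emph{dropped}, and (ii) $\sqrt{2h}\,\langle\nabla f(x),\xi\rangle$, which is bounded with high probability by a Gaussian tail bound using $\|\nabla f(x)\|\lesssim\|x-x^*\|^{\alpha-1}\lesssim d^{(\alpha-1)/\gamma}$ on $\Omega$---this is precisely your first binding term. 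So the mechanism is ``sign from convexity + concentration of the noise inner product,'' not a MALA-style algebraic cancellation. (As an aside, be careful with the scale of $\Omega$: the proof uses $\|x-x^*\|\lesssim d^{1/\gamma}$, not $d^\gamma$; your bookkeeping at the end implicitly uses the correct scale even though your intermediate remark ``$\mathrm{diam}(\Omega)^{1/\gamma}\asymp d$'' is garbled.) The paper then combines the high-probability bound $\Delta\le\tfrac{1}{16}$ with the Markov-type inequality $\mathbb E[\min\{1,e^{-\Delta}\}]\ge e^{-t}\,\mathbb P[\Delta\le t]$ (with $t=\tfrac{1}{16}$) rather than your $1-\mathbb E[\Delta_+]$ route, but either closes the argument.
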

Lemma~\ref{lem:transition_closeness} is essential to the analysis of MAO
as it allows us to apply the conductance profile-based bounds of~\cite{chenhmc}.
It describes two important properties of  MAO. First,
from equation~\eqref{eq:proposal_closeness}, we see that the proposal distributions of MAO at a given two points are close if the given
two points are close. This can be proved by controlling the KL-divergence of the two proposal distributions of MAO. On the other hand, equation~\eqref{eq:transition_proposal_overlap} shows that the
Metropolis-Hastings correction  rule of MAO is well behaved inside $\Omega$ provided that  Assumption \ref{itm:assumptionB} holds.

We include the proof of the two claims of this Lemma in Sections \ref{sub:properties_of_the_hmc_proposal} and \ref{sub:proof_of_claim_eq:proposal_closeness}.

\subsection{Proof of Theorem \ref{thm_b}}
\label{sub:proof_thm2}
We now are equipped with the tools to prove Theorem \ref{thm_b}, we begin first by using the result of Lemma 2~\cite{chenhmc} and
Lemma~\ref{lem:transition_closeness} to derive an explicit bound for
on the MAO conductance profile. Given the assumptions of equation \eqref{eq:step_condition_thm_proof1} and the conditions of Assumption \ref{itm:assumptionB} hold, we are able to invoke the results of Lemma~\ref{lem:transition_closeness}.

Define the function $\Psi_{\Omega}: [0, 1] \mapsto \mathbb{R}_+$ as :
\begin{align}
\label{eq:psi_define}
  \Psi_{\Omega}(v) &= \begin{cases} \displaystyle\frac{1}{32}\cdot
    \min\left\{1, \frac{\sqrt{h}}{32\psi_{a}}
      \log^{a}\left(\frac{1}{v}\right)\right\} & \mbox{if $v \in
      \left[0, \frac{1-s}{2} \right]$.}  \\[3mm] \displaystyle
    \frac{\sqrt{h}} {1024\psi_{a}}  & \mbox{if
      $v \in \left(\frac{1-s}{2}, 1 \right]$.}
  \end{cases}
\end{align}

This function acts as a lower bound on the truncated conductance
profile. 

Consider a pair $(x, y) \in \Omega$ such that $\left\|x-y\right\| \leq \sqrt{\frac{h}{2}}$. Invoking the
decomposition~\eqref{eq:transition_lazy_decomposition} and applying
triangle inequality for $\zeta$-lazy MAO, we have: 
\begin{multline*}
  \left\|\mathcal{T}_{x}-\mathcal{T}_{y}\right\|_{TV} \leq \zeta + (1 - \zeta)\left\|\mathcal{T}^{\textrm{Before-Lazy}}_{x}-\mathcal{T}^{\textrm{Before-Lazy}}_{y}\right\|_{TV}\\
  \leq \zeta +(1-\zeta)\left(\left\|\mathcal{T}^{\textrm{Before-Lazy}}_{x}-\mathcal{P}_{y}\right\|_{TV} + \left\|\mathcal{P}_{x}-\mathcal{P}_{y}\right\|_{TV} + \left\|\mathcal{T}^{\textrm{Before-Lazy}}_{y}-\mathcal{P}_{x}\right\|_{TV} \right) \\
  \leq \zeta +(1-\zeta)\left( \frac{1} {4}+\frac{1}{2} + \frac{1}{4}\right)= 1-\frac{1-\zeta}{4}
\end{multline*}
where step~(i) follows from the bounds~\eqref{eq:proposal_closeness}
and~\eqref{eq:transition_proposal_overlap} from
Lemma~\ref{lem:transition_closeness}.  For $\zeta=\frac{1}{2}$ substituting $\omega=\frac{1}{8}$, $\Delta=\frac{\sqrt{h}}{2}$ and the convex set $\Omega$ into Lemma 2~\cite{chenhmc},
we obtain that:
\begin{align*}
  \Phi_{\Omega}(v) \geq \frac{1}{32}\cdot \min\left\{1,
    \frac{\sqrt{h}}{32\psi_{a}}
    \log^{a}\left(1 + \frac{1}{v}\right)\right\},\quad\text{for }
  v\in \left[0, \frac{1-s}{2} \right].
\end{align*}
This yields that $\Psi_{\Omega}$ acts as a lower bound on the truncated conductance profile~\cite{chenhmc}, we
have that $\tilde{\Phi}_{\Omega}(v) \geq
\frac{\sqrt{h}}{1024\psi_{a}}$ for
$v\in\left[\frac{1-s}{2}, 1\right]$.  Note that the assumption~\ref{itm:assumptionB}
ensures the existence of $\Omega$ such that $\Pi(\Omega) \geq
1- s$ for $s = \frac{\epsilon^2}{3\beta^2}$. We conclude by applying
Lemma 4~\cite{chenhmc} with the convex set $\Omega$ concludes the proof of the theorem.
\subsection{Proof of Theorem \ref{thm_a}}
\label{subsec:Proof}
This is a special case of Theorem \ref{thm_b} with $\gamma=2$, $\Omega=\mathcal{R}_{s,\frac{1}{2}}$, and $\tau(s)=r(s)$.



\subsection{Proof of \eqref{eq:proposal_closeness} in Lemma~\ref{lem:transition_closeness}}
\label{sub:properties_of_the_hmc_proposal}
In this subsection we provide proof for claim \eqref{eq:proposal_closeness}. First we control the total variation distance of the proposal between two points $(x,y)$. To that end, we  apply Pinsker’s inequality~\cite{Cover},  which states that $\left\|\mathcal{P}_{x}-\mathcal{P}_{y}\right\|_{TV} \leq \sqrt{2\textrm{KL}\left(\mathcal{P}_{x}||\mathcal{P}_{y}\right)}$. Given multivariate normal distributions $\mathcal{G}_{1}=\mathcal{N}(\mu_{1},\Sigma)$ and $\mathcal{G}_{2}=\mathcal{N}(\mu_{2},\Sigma)$ he Kullback-Leibler divergence between the two distributions is given by
\begin{align}
    \textrm{KL}\left(\mathcal{G}_{1}||\mathcal{G}_{2}\right)=\frac{1}{2}(\mu_{1}-\mu_{1})^{\intercal}\Sigma^{-1}(\mu_{1}-\mu_{1})
\end{align}

Setting $\mathcal{G}_{1}=\mathcal{P}_{x}$ and $\mathcal{G}_{2}=\mathcal{P}_{y}$ and applying Pinsker’s inequality, we have that 
\begin{align*}
    \left\|\mathcal{P}_{x}-\mathcal{P}_{y}\right\|_{TV} \leq \sqrt{2\textrm{KL}\left(\mathcal{P}_{x}||\mathcal{P}_{y}\right)}=\frac{\left\|\mu_{x}-\mu_{y}\right\|}{\sqrt{2h}}
     & \leq \frac{(1-h)\left\|x-y\right\|}{\sqrt{2h}} \leq \frac{\left\|x-y\right\|}{\sqrt{2h}}
\end{align*}
which concludes the proof of the result of equation \eqref{eq:proposal_closeness}.

\subsection{Proof of ~\texorpdfstring{\eqref{eq:transition_proposal_overlap}}{proposal-proposal-overlap} in Lemma~\ref{lem:transition_closeness}}
\label{sub:proof_of_claim_eq:proposal_closeness}
We now bound the distance between the one-step proposal distribution
$\mathcal{P}_x$ at point $x$ and the one-step transition distribution
$\mathcal{T}^{\textrm{Before-Lazy}}_{x}$ at $x$ obtained after the Metropolis-Hastings correction step (and no lazy step). Following the outline of the proof of  the result of Lemma 3 in~\cite{dwivedi2018log} we have that the total variation distance between the proposal and transition
distribution is given by :
\begin{align}
\label{eq:trans}
    \left\|\mathcal{T}^{\textrm{Before-Lazy}}_{x}-\mathcal{P}_{y}\right\|_{TV}=1-\displaystyle\int_{\mathcal{X}} \min\left\{1,\frac{\pi(z)p_z(x)}{\pi(x)p_x(z)}\right\}dz
    & =1-\mathbb{E}_{z\sim\mathcal{P}_x}\left[ \min\left\{1,\frac{\pi(z)p_z(x)}{\pi(x)p_x(z)}\right\} \right]
\end{align}
where $p_x$ is used to denote the density of the MAO proposal distribution $\mathcal{P}_x=\mathcal{N}(x-h(x-\tilde{x}),2h\mathbb{I}_d)$, and where we have used the following result (showing this is a straightforward exercise, so the proof is omitted).
\paragraph{Total variation distance between distributions when one of them has an atom.}
Let $\mathcal{P}_{1}$ be a distribution admitting a density $p_1$ on $\mathcal{X}$, and let $\mathcal{P}_{2}$ be a distribution which has  an  atom  at $x$ and  admitting  a  density $p_2$ on $\mathcal{X}$. The  total  variation  distance between the distributions $\mathcal{P}_{1}$ and $\mathcal{P}_{2}$ is given by :
\begin{align}
       \left\|\mathcal{P}_{1}-\mathcal{P}_{2}\right\|_{TV}=\frac{1}{2}\left(\mathcal{P}_{2}\{x\} + \displaystyle\int_{\mathcal{X}}|p_1(z)-p_2(z)|dz\right)
\end{align}

Now we continue with the proof of \eqref{eq:transition_proposal_overlap}.
An application of Markov's inequality yields that
\begin{align}
\label{eq:markov}
    \mathbb{E}_{z\sim\mathcal{P}_x}\left[ \min\left\{1,\frac{\pi(z)p_z(x)}{\pi(x)p_x(z)}\right\} \right]\geq \rho\cdot  \mathbb{P}_{z\sim\mathcal{P}_x} \left[\frac{\pi(z)p_z(x)}{\pi(x)p_x(z)}\geq\rho \right] 
\end{align}
for any $\rho\in (0, 1]$.  Thus, to bound the distance $\left\|\mathcal{T}^{\textrm{Before-Lazy}}_{x}-\mathcal{P}_{y}\right\|_{TV}$ it suffices to derive a high probability lower bound on the ratio:
$$a(x,z) =\frac{\pi(z)p_z(x)}{\pi(x)p_x(z)} \quad \textrm{when} \quad z\sim\mathcal{P}_x $$
Thus our goal is  derive a lower bound on the average acceptance probability. To that end we have that 
\begin{equation}
    A(x,y)=\min\left\{1,\frac{\pi(y)Q(y,x)}{\pi(x)Q(x,y)}\right\}=\min\left\{1,\frac{\pi(y)\phi(y)Q(y,x)\phi(x)}{\pi(x)\phi(x)Q(x,y)\phi(y)}\right\}
=\min\left\{1,\frac{\pi(y)\phi(x)}{\pi(x)\phi(y)}\right\}
\end{equation}
$\phi(.)$ is the standard Gaussian density, and keeping in line with the notations of 
subsection~\ref{sub:related_sampling_algorithms} we have that $Q(x,y)=p_y(x)$, and noticing that the proposal $Q$ is reversible w.r.t the standard Gaussian density, the expression simplifies, hence the average acceptance probability for MAO resembles that of IMH  with a standard Gaussian proposal but crucially in our case, when computing the average acceptance probability $z$ is not drawn from the standard Gaussian distribution $\phi$ but from the proposal distribution $\mathcal{P}_x$:
\begin{align}
    \mathbb{E}_{z\sim Q_x}[A(x,z)]=\mathbb{E}_{z\sim Q_x}[\exp(-[f(z)-f(x)+\log(\phi(x))-\log(\phi(z)]_+])=\mathbb{E}_{z\sim Q_x}[\exp{-J_{+}}]
\end{align}
Where we have defined $J$: 
\begin{align}
J := f(z)-f(x)+\log(\phi(x))-\log(\phi(z)) \\
     =f(z)-f(x)-\frac{\left\|x\right\|^2}{2}+\frac{\left\|z\right\|^2}{2}
\end{align}
\\
We wish to lower bound the acceptance probability i.e to upper bound $J$.
To that end we make use of Standard Taylor formula stating that:
\begin{align*}
f(z)&=f(x)+\langle z-x,\nabla f(x)\rangle + \displaystyle\int_0^1 (1-s)\langle x-z, \nabla^2 f(x+s(z-x)) (x-z) \rangle ds\\
&\le f(x)+\langle z-x,\nabla f(x)\rangle + \displaystyle\int_0^1 (1-s) \|\nabla^2 f(x+s(z-x))\| \|x-z\|^2 ds
\end{align*}
Using the condition \eqref{eq:nabla_f}, we have \[\|\nabla^2 f(x+s(z-x))\|\le K_2(1+\|x+s(z-x)\|^{\alpha-2})\le K_2(1+2^{\alpha-3} \|x\|^{\alpha-2}+2^{\alpha-3} s^{\alpha-3}\|z-x\|^{\alpha-2}),\]
and hence
\begin{align}
\nonumber 
f(z)&\le f(x)
+\langle z-x,\nabla f(x)\rangle + \displaystyle\int_0^1 (1-s) K_2(1+2^{\alpha-3} \|x-x^*\|^{\alpha-2}+2^{\alpha-3} s^{\alpha-2}\|z-x\|^{\alpha-2}) \|z-x\|^2 ds\\
&\le f(x)+\langle z-x,\nabla f(x)\rangle+
K_2 \cdot 2^{\alpha-3}\cdot (\|z-x\|^2+\|z-x\|^{\alpha}+\|z-x\|^2\cdot \|x-x^*\|^{\alpha-2})
\end{align}
Taking note of the fact that $z-x=-h(x-x^*)+\sqrt{2h}\xi$ where $\xi\sim\mathcal{N}(0,\mathbb{I}_d)$ we have:
\begin{align*}
    f(z)&\le f(x)-h\langle x-x^*,\nabla f(x)\rangle +\sqrt{2h}\langle \xi,\nabla f(x)\rangle \\
    &+ K_2 \cdot 2^{\alpha-3}\cdot (\|z-x\|^2+\|z-x\|^{\alpha}+\|z-x\|^2\cdot \|x-x^*\|^{\alpha-2})\\
    &\le f(x)+\sqrt{2h}\langle \xi,\nabla f(x)\rangle+K_2 \cdot 2^{\alpha-3}\cdot (\|z-x\|^2+\|z-x\|^{\alpha}+\|z-x\|^2\cdot \|x-x^*\|^{\alpha-2})
\end{align*}
We now make use of the concentration bounds for the $\chi^2$-distribution allowing us to state that the following results hold with high probability that 
$$ (i)  \left\|\xi\right\|\leq M\sqrt{d}  \  \textrm{for} \  M>0 $$
$$ (ii)  \langle v,\xi \rangle\leq M \ \textrm{for} \ \left\|v\right\|=1$$
See the subsection \ref{sub:tails} for more detailed treatment of the tail bounds.
Using these bounds, we can see that with high probability, we have
\begin{align}&\sqrt{2h}\langle \xi,\nabla f(x)\rangle\le \sqrt{2h}M \|\nabla f(x)\|\le \sqrt{2h}M K_1(1+\|x-x^*\|^{\alpha-1}), \text{ and}\\
&\|z-x\|=\|-h(x-x^*)+\sqrt{2h}\xi\|\le h\|x-x^*\|+\sqrt{2h} M\sqrt{d}.
\end{align}
Hence with high probability, we have
\begin{align*}
    f(z)&\le f(x)-h\langle x-x^*,\nabla f(x)\rangle +\sqrt{2h}\langle \xi,\nabla f(x)\rangle \\
    &+ K_2 \cdot 2^{\alpha-3}\cdot (\|z-x\|^2+\|z-x\|^{\alpha}+\|z-x\|^2\cdot \|x-x^*\|^{\alpha-2})\\
    &\le f(x)+\sqrt{2h}M K_1(1+\|x-x^*\|^{\alpha-1})+K_2 \cdot 2^{\alpha-3}\cdot (\|z-x\|^2+\|z-x\|^{\alpha}+\|z-x\|^2\cdot \|x-x^*\|^{\alpha-2})
\end{align*}
In the case when $f$ satisfies Assumption \ref{itm:assumptionB}, we have that $x-x^*\le C d^{\frac{1}{\gamma}}$ for every $x\in \Omega$ and this set has probability $\Pi(\Omega)\ge 1-s$. In this case, let 
\[\omega:=\max
\left(\frac{2 (\alpha-1)}{\gamma},\frac{\gamma+\alpha-2}{\gamma}\right).\]
Then with the choice $h \propto d^{-\omega}$, it is easy to check that $\sqrt{2h}M K_1(1+\|x-x^*\|^{\alpha-1})=O(1)$, $\|z-x\|=O(1)$ and $\|z-x\|^2\cdot \|x-x^*\|^{\alpha-2}=O(1)$, hence $f(z)-f(x)=O(1)$.

There are two additional terms due to the Gaussian density that we need to control :
$$ \left\|z\right\|^2-\left\|x\right\|^2=\left\|x-hx+\sqrt{2h}\xi\right\|^2-\left\|x\right\|^2=-2\langle x,hx-\sqrt{2h}\xi\rangle+ \left\|hx-\sqrt{2h}\xi\right\|^2$$
$$\leq2h^2\left\|x\right\|^2+4h\left\|\xi\right\|^2-2h\left\|x\right\|^2+4\sqrt{h}|\langle x,\xi\rangle|$$
$$\leq 4\sqrt{h}|\langle x,\xi\rangle|+4h\left\|\xi\right\|^2\lesssim\sqrt{h}d^{\frac{1}{\gamma}}+hd.$$
Recall that we assume that $\gamma\ge 2$, hence this term can be controlled as long as $h=O(d^{-1})$, which is satisfied for our choice $h\propto d^{-\omega}$.

It is not hard to see that using our choice of step size $h$ as detailed in in the assumptions of Theorem~\ref{thm_b}, and the $\chi^2$ tail bounds we have that: 
\begin{align}
    \mathbb{P}\left[-J\geq -\frac{1}{16}\right] \geq 1-\frac{1}{16}
\end{align}
thus by plugging this bound and using the Markov inequality we have:
\begin{align}
     \mathbb{P}\left[A(x,z)\geq \exp\left(-\frac{1}{16}\right)\right] \geq 1-\frac{1}{16}
\end{align}
Thus, we have derived a desirable high probability lower bound on the accept-reject ratio. Substituting $\alpha=\exp\left(-\frac{1}{16}\right)$ in inequality \eqref{eq:markov} and using the fact that $e^{-\frac{1}{16}}\geq 1-\frac{1}{16}$ we find that:
\begin{align*}
     \mathbb{E}_{z\sim\mathcal{P}_x}\left[ \min\left\{1,\frac{\pi(z)p_z(x)}{\pi(x)p_x(z)}\right\} \right]\geq 1-\frac{1}{8}
\end{align*}
which when plugged in equation \eqref{eq:trans} implies that
\begin{align*}
      \left\|\mathcal{T}^{\textrm{Before-Lazy}}_{x}-\mathcal{P}_{y}\right\|_{TV}\leq \frac{1}{8} \qquad \textrm{for any} \quad x \in \Omega,
\end{align*}
which concludes the proof of claim~\texorpdfstring{\eqref{eq:transition_proposal_overlap}}{proposal-proposal-overlap} in Lemma~\ref{lem:transition_closeness}.
In the next Section we derive results for the $\chi^2$ distribution concentration bounds that allowed us to state high probability upper bounds used in our proof outline. 
\subsection{$\chi^2$ Tail Bounds}
\label{sub:tails}
 In this section we will prove the concentration bounds for the $\chi^2$ distribution, we first state the standard $\chi^2$-bound, used in~\cite{dwivedi2018log} proof given by : 
$$ \mathbb{P} \left(\left\|\xi\right\|^2\leq d \tau_{\epsilon} \right)\geq1-\frac{\epsilon}{32}$$
where : 
$$ \tau_{\epsilon}= 1+2\sqrt{\log(32/\epsilon)} +2\log(\epsilon/32)$$
which in turn yields : 
$$\mathbb{P} \left(\left\|\xi\right\|^r\leq d^\frac{r}{2} \tau_{\epsilon}^\frac{r}{2} \right)\geq1-\frac{\epsilon}{32}$$
\newline
This allows us to state tail bounds concentration inequalities with high probability for $\left\|\xi\right\|^{r}$, we now derive a high probability bound in order to control the dot product $\langle\nabla f(x), \xi\rangle$ on the $\Omega$,  i.e for $\left\|x-x^*\right\|\leq \tau(s)d^{\frac{1}{\gamma}}$  we have for $\xi\sim\mathcal{N}(0,\mathbb{I}_d)$ : 
\begin{multline*}\mathbb{P}\left(|\langle\nabla f(x), \xi\rangle|\leq t\right)=\mathbb{P}\left(\left\|\nabla f(x)\right\||\langle\frac{\nabla f(x)}{\left\|\nabla f(x)\right\|},\xi\rangle|\leq t\right)\geq\mathbb{P}\left(\tau(s)d^{\frac{\alpha}{\gamma}}|\langle\frac{\nabla f(x)}{\left\|\nabla f(x)\right\|},\xi\rangle|\leq t\right)
\\ \geq 1-2\exp\left(-\frac{t^2}{\tau(s)d^{\frac{2\alpha}{\gamma}}}\right)\geq 1-\epsilon 
\end{multline*}
the desired result can be achieved by setting : 
$$ t= d^{\frac{\alpha}{\gamma}}\sqrt{2\tau(s)\log(2/\epsilon)}$$
where we used the fact that for $\zeta\sim\mathcal{N}(0,1)$ we have: 
$$ \forall x\geq0 \ \mathbb{P}\left(|\zeta|\geq x\right)\leq 2e^{-\frac{x^2}{2}}.$$

\subsection{Proof of Theorem \ref{thm:opti_mao}}
In this section we provide proof of Theorem \ref{thm:opti_mao}. In this regime, our offline Optimization fails too learn the exact mode of the distribution, instead it outputs a estimate $\tilde{x}$ which is $\delta$ off of the true mode, i.e $\left\|\tilde{x}\right\|\leq \delta$. 
In such regime, MAO will have as a proposal that is similar to the one used in MALA with the slight difference of the gradient step being calculated w.r.t a Gaussian distribution centered at the learned mode, that is $\tilde{x}$, this in turn modifies the accept reject step 
 \begin{equation*}
      A(x,z)=\min\left\{1,\frac{\pi(z)\tilde{Q}(z,x)}{\pi(x)\tilde{Q}(x,z)}\right\}=\min\left\{1,\frac{\pi(z)\tilde{\phi}(z)\tilde{Q}(z,x)\tilde{\phi}(x)}{\pi(x)\tilde{\phi}(x)\tilde{Q}(x,z)\tilde{\phi}(z)}\right\}
=\min\left\{1,\frac{\pi(y)\tilde{\phi}(x)}{\pi(x)\tilde{\phi}(y)}\right\}
\end{equation*}
Where we have $z\sim \tilde{Q}(x,z)=\mathcal{N}(x-h(x-\tilde{x}),2h\mathbb{I}_d)$ and $\tilde{\phi}=\mathcal{N}(\tilde{x},1)$, and where we have used the fact that $\tilde{Q}(x,z)$ is reversible w.r.t $\tilde{\phi}$ thus simplifying our expression. In similar fashion to the outline of  our proof of Theorem \ref{thm_b} in subsection \ref{sub:proof_thm2}, the key idea is to  control the average acceptance probability:
\begin{equation}
    \mathbb{E}_{\tilde{z}\sim \tilde{Q}_x}[A(x,\tilde{z})]=\mathbb{E}_{\tilde{z}\sim \tilde{Q}_x}[\exp-[f(\tilde{z})-f(x)+\log(\tilde\phi(x))-\log(\tilde{\phi}(\tilde{z})]_+]
\end{equation}
Where in this instance we have:  $\tilde{z}=x-h(x-\tilde{x})+\sqrt{2h}\xi$, keeping in line with the notations of subsection \ref{sub:proof_of_claim_eq:proposal_closeness} we define :
\begin{align*}
     \tilde{J}:= f(\tilde{z})-f(x)+\log(\tilde{\phi}(x))-\log(\tilde{\phi}(\tilde{z})) \\
=f(\tilde{z})-f(x)-\frac{\left\|x-\tilde{x}\right\|^2}{2}+\frac{\left\|\tilde{z}-\tilde{x}\right\|^2}{2} \\
=f(\tilde{z})-f(x)-\frac{\left\|x\right\|^2}{2}+\frac{\left\|\tilde{z}\right\|^2}{2}+\tilde{x}^{\intercal}(x-\tilde{z})
\end{align*}
Our goal is to upper bound $\tilde{J}$. To that  again we will control each term individually:
 
 \begin{subequations}
    \begin{align}
        \label{eq:q_j_recursion_bound}
          f(\tilde{z})-f(x)\\
        \label{eq:q_j_recursion_bound1}
         \frac{\left\|\tilde{z}\right\|^2}{2}-\frac{\left\|x\right\|^2}{2}\\
        \label{eq:q_j_diff_recursion_bound3}
         \tilde{x}^{\intercal}(x-\tilde{z})
    \end{align}
\end{subequations}
In order to control  term \eqref{eq:q_j_diff_recursion_bound3} we  use Cauchy-Schwartz inequality:
\begin{align*}
    | \tilde{x}^{\intercal}(x-\tilde{z})|\leq \left\|\tilde{x}\right\|\left\|-h(x-\tilde{x})+\sqrt{2h}\xi\right\|\leq\delta\left(h(\delta +\tau(s)d^{\frac{1}{\gamma}})+M\sqrt{2dh}\right):=A_1(h,\delta)
\end{align*}
where in we have used the fact that $\left\|\tilde{x}\right\|\leq \delta$ and $\left\|x\right\|\leq \tau(s)d^{\frac{1}{\gamma}}$ and $\left\|\xi\right\|\leq M\sqrt{d}$ where the quantities have been defined in the previous sections.
\\
As for term \eqref{eq:q_j_recursion_bound1} a second  use of Cauchy-Schwartz yields 
\begin{align*}
    \frac{\left\|\tilde{z}\right\|^2}{2}-\frac{\left\|x\right\|^2}{2} = \frac{\left\|z+h\tilde{x}\right\|^2}{2}-\frac{\left\|x\right\|^2}{2}=\frac{\left\|z\right\|^2}{2}-\frac{\left\|x\right\|^2}{2}+h\tilde{x}^{\intercal}z+\frac{h^2\left\|\tilde{x}\right\|^2}{2} \\
    \leq \frac{\left\|z\right\|^2}{2}-\frac{\left\|x\right\|^2}{2}+ h\delta(\tau(s)d^{\frac{1}{\gamma}}(1+h)+M\sqrt{2dh})+\frac{h^2\delta^2}{2}
\end{align*}
thus we have proved that
$$\frac{\left\|\tilde{z}\right\|^2}{2}-\frac{\left\|x\right\|^2}{2}\leq \frac{\left\|z\right\|^2}{2}-\frac{\left\|x\right\|^2}{2}+A_{2}(h,\delta)$$
where
$$A_2(h,\delta)=h\delta(\tau(s)d^{\frac{1}{\gamma}}(1+h)+M\sqrt{2dh})+\frac{h^2\delta^2}{2}$$

In order to control term \eqref{eq:q_j_recursion_bound} we use Taylor's formula to obtain
$$f(\tilde{z})-f(x)=f(\tilde{z})-f(z)+f(z)-f(x),$$
before developing the expression as the following: 
\begin{align*}
    f(\tilde{z})-f(z)\le\langle\tilde{z}-z,\nabla f(z)\rangle +\displaystyle\int_0^1 (1-s)\left\|\nabla^2 f(z+s(\tilde{z}-z)\right\|\left\|\tilde{z}-z\right\|^2ds \\
    = \langle h\tilde{x}, \nabla f(z)\rangle+\displaystyle\int_0^1 (1-s)\left\|\nabla^2 f(z+hs\tilde{x})\right\|\left\|h\tilde{x}\right\|^2ds\\
    \leq h\delta K_{1}(1+\left\|z\right\|^{\alpha-1})+h^2\delta^2\displaystyle\int_0^1 (1-s)K_{2}(1+\left\|z+hs\tilde{x}\right\|^{\alpha-2})ds\\
    \leq h\delta K_{1}(1+2^{\alpha-2}\left((1-h)^{\alpha-1}(\tau(s)d^{\frac{1}{\gamma}})^{\alpha-1})+(M\sqrt{2dh})^{\alpha-1}\right))\\ +h^2\delta^2\left(\frac{K_{2}}{2}+2^{\alpha-3}\frac{K_{2}}{2}\left\|z\right\|^{\alpha-2}+2^{\alpha-3} \frac{K_2}{\alpha(\alpha-1)}(h\delta)^{\alpha-2}\right)
    \\ \leq h\delta K_{1}\left(1+2^{\alpha-2}\left((1-h)^{\alpha-1}(\tau(s)d^{\frac{1}{\gamma}})^{\alpha-1})+(M\sqrt{2dh})^{\alpha-1}\right)\right) \\ +h^2\delta^2\left(\frac{K_{2}}{2}+2^{\alpha-3}\frac{K_{2}}{2}2^{\alpha-3}\left(((1-h)\tau(s)d^{\frac{1}{\gamma}})^{\alpha-2}+(M\sqrt{2dh})^{\alpha-2}\right)+2^{\alpha-3} \frac{K_2}{\alpha(\alpha-1)}(h\delta)^{\alpha-2}\right)\\
    \leq h\delta K_{1}\left(1+2^{\alpha-2}\left((1-h)^{\alpha-1}(\tau(s)d^{\frac{1}{\gamma}})^{\alpha-1})+(M\sqrt{2dh})^{\alpha-1}\right)\right)\\
    \frac{K_2 h^2\delta^2}{2}\left(1+2^{2(\alpha-3)}\left(((1-h)\tau(s)d^{\frac{1}{\gamma}})^{\alpha-2}+(M\sqrt{2dh})^{\alpha-2}\right)+ \frac{2^{\alpha-3}}{\alpha(\alpha-1)}(h\delta)^{\alpha-2}\right) :=A_{3}(h,\delta).
\end{align*}
Hence we have proved that  
$$f(\tilde{z})-f(x)\leq f(z)-f(x)+ A_{3}(h,\delta)$$
Piecing it all together we have proved that 
$$\tilde{J}\leq J+ A(h,\delta),$$
where we have :
$$A(h,\delta)=A_{1}(h,\delta)+A_{2}(h,\delta)+A_{3}(h,\delta)$$

Noticing that  
$$\lim_{\delta\to0} A(h,\delta)=0$$

In accordance with the outline of the proof of Theorem \ref{thm_b} and when inspecting the preponderant terms in each individual $A_{i}(h,\delta)$ for $i\in \{1,2,3\}$ it is straightforward that when setting the error tolerance $\delta=\mathcal{O}(\frac{1}{h}\log(\frac{1}{\epsilon}))$ for $\epsilon\in (0,1)$ we have that 
\begin{align}
    \mathbb{P}\left[-\tilde{J}\geq -\frac{1}{16}+ \mathcal{O}(\log(\frac{1}{\epsilon}))\right] \geq 1-\frac{1}{16}
\end{align}

Thus we are able to recover the mixing times obtained in Theorem \ref{thm_a} and Theorem \ref{thm_b}, thus the effect of the error incurred by the off-line optimization step in our MAO algorithm is not prohibitive and we are able to recover mixing times of the same order of those obtained when we have access to the exact mode of our target distribution (i.e in the no-learning regime).

\subsection{Examples of distributions satisfying Assumption \ref{itm:assumptionB}}
We now prove a tighter concentration on a convex set $\mathcal{R}_{s,\alpha}$ for a subclass of target distributions  $\mathcal{E}(\alpha,m)$, the result is given as the following.
\begin{lem}
\label{lem_RS}
Consider the class of target distributions $\Pi_{\alpha}$ with densities $$p_{\alpha}(x)=\frac{e^{-\left\|x\right\|^{\alpha}}}{Z_{\alpha}}$$
there exists a poly-logarithmic function $\tau$ such that the ball $\mathcal{R}_{s,\alpha}=\mathbb{B}\left(0,\tau(s)d^{\frac{1}{\alpha}}\right)$ satisfies that
$$ \Pi_{\alpha}(\mathcal{R}_{s,\alpha})\geq 1-s \quad \textrm{for all} \quad s \in (0,1).$$
\end{lem}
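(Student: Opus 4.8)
The plan is to reduce the statement to a one–dimensional concentration bound for a Gamma random variable and then apply a Chernoff estimate. First I would pass to polar coordinates: for $X\sim\Pi_\alpha$ and $t>0$,
\[
\Pi_\alpha\bigl(\mathbb{B}(0,t)\bigr)=\frac{\int_0^t r^{d-1}e^{-r^\alpha}\,dr}{\int_0^\infty r^{d-1}e^{-r^\alpha}\,dr},
\]
and the substitution $u=r^\alpha$ converts the numerator into $\tfrac1\alpha\int_0^{t^\alpha}u^{d/\alpha-1}e^{-u}\,du$ and the denominator into $\tfrac1\alpha\Gamma(d/\alpha)$. Hence $\|X\|^\alpha$ is distributed as $\mathrm{Gamma}(k,1)$ with shape $k:=d/\alpha$ and unit rate, and
\[
\Pi_\alpha\bigl(\mathcal{R}_{s,\alpha}\bigr)=\mathbb{P}\bigl(G\le \tau(s)^\alpha d\bigr),\qquad G\sim\mathrm{Gamma}(k,1).
\]
So it suffices to exhibit $\tau(s)$, independent of $d$ and polylogarithmic in $1/s$, with $\mathbb{P}\bigl(G> \tau(s)^\alpha d\bigr)\le s$ for every $d\ge 1$.

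For the tail bound I would use the standard moment generating function Chernoff estimate: since $\mathbb{E}[e^{\theta G}]=(1-\theta)^{-k}$ for $\theta\in(0,1)$, optimizing over $\theta$ gives, for any $\mu>1$ and any real $k>0$,
\[
\mathbb{P}\bigl(G\ge \mu k\bigr)\le \exp\bigl(-k\,g(\mu)\bigr),\qquad g(\mu):=\mu-1-\log\mu .
\]
Writing $\mu=\alpha\,\tau(s)^\alpha$, so that $\mu k=\tau(s)^\alpha d$, the requirement $\mathbb{P}(G\ge\mu k)\le s$ becomes $g(\mu)\,d/\alpha\ge\log(1/s)$; since $d\ge 1$ it is enough to guarantee the $d$‑free condition $g(\mu)\ge\alpha\log(1/s)$.

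Finally I would define $\tau(s):=(\mu(s)/\alpha)^{1/\alpha}$, where $\mu(s)>1$ is the unique solution of $g(\mu)=\alpha\log(1/s)$ (unique because $g$ is strictly increasing on $(1,\infty)$ with $g(1)=0$ and $g(\mu)\to\infty$), and $\mu(s)>1$ holds because $\log(1/s)>0$ for $s\in(0,1)$, so the Chernoff bound applies. Using the elementary inequality $\log\mu\le\mu/2$ (valid for all $\mu\ge 1$) gives $g(\mu)\ge\mu/2-1$, hence $\mu(s)\le 2\alpha\log(1/s)+2$ and therefore, since $\alpha\ge 2$,
\[
\tau(s)\le\Bigl(2\log(1/s)+\tfrac{2}{\alpha}\Bigr)^{1/\alpha}\le 2\log(1/s)+1,
\]
so $\tau$ is dominated by the polynomial $p(x)=2x+1$ in $\log(1/s)$, i.e.\ polylogarithmic. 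The set $\mathcal{R}_{s,\alpha}=\mathbb{B}\bigl(0,\tau(s)d^{1/\alpha}\bigr)$ is a ball, hence convex and containing the mode $x^*=0$, which finishes the argument.

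The only mildly delicate point is uniformity in the dimension: one needs a single $\tau(s)$ that works for every $d$ simultaneously. This is handled by noting that the constraint $g(\mu)\ge\alpha\log(1/s)/d$ is most stringent at $d=1$, and that taking the radius larger than strictly necessary only increases $\Pi_\alpha(\mathcal{R}_{s,\alpha})$ — equivalently, with $\mu$ fixed the Gamma concentration around its mean $k=d/\alpha$ only sharpens as $d$ grows. Everything else is a routine change of variables together with the one‑line Chernoff bound.
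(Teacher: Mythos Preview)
Your proof is correct and follows essentially the same route as the paper: pass to spherical coordinates, substitute $u=r^{\alpha}$ to recognise that $\|X\|^{\alpha}\sim\Gamma(d/\alpha,1)$, and then invoke a standard exponential tail bound for the Gamma distribution. The only difference is cosmetic: the paper quotes the sub-Gamma tail inequality from Boucheron--Lugosi--Massart, obtaining the explicit $d$-dependent expression $\tau(s)^{\alpha}=\tfrac{1}{\alpha}+\tfrac{1}{d}\log(1/s)+\sqrt{\tfrac{2}{d\alpha}\log(1/s)}$, whereas you derive the Chernoff/Cram\'er bound from the MGF directly and then take the worst case $d=1$ to get a $d$-independent $\tau$; both lead to a polylogarithmic $\tau$ and the arguments are interchangeable.
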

\begin{proof}
We start by showing the results for the target distributions $\Pi_{\alpha}$. In order to prove the result of Lemma ~\ref{lem_RS} we will derive calculations of the probability over a collection of convex sets.
More precisely, let us at a first glance try and estimate the flow of probability  on a ball of radius $\rho$ i.e: 
$$I_{\rho}=\frac{1}{Z_{\alpha}}\displaystyle\int_{\mathcal{B}(0,\rho)} e^{-\left\|x\right\|^{\alpha}}dx $$
Noting that this is a radial integral we can pass into spherical coordinates by making the coordinate change yielding:
$$I_{\rho}=\frac{d\omega_d}{Z_{\alpha}}\displaystyle\int_{0}^{\rho} e^{-r^{\alpha}}r^{d-1}dr $$
Where $\omega_d$ is the volume of a unit ball in $\mathbb{R}^d$ $$\omega_d=\frac{\pi^{\frac{d}{2}}}{\Gamma(\frac{d}{2}+1)}$$
 we then make the variable change $u=r^{\alpha}$ to get: 
 $$I_{\rho}=\frac{d\omega_d}{\alpha Z_{\alpha}}\displaystyle\int_{0}^{\rho^{\alpha}} e^{-u}u^{\frac{d-\alpha}{\alpha}}du$$
 Let us at this stage calculate the value of $Z_{\alpha}$:
 $$Z_{\alpha}=\frac{d\omega_d}{\alpha}\displaystyle\int_{0}^{+\infty}e^{-u}u^{\frac{d-\alpha}{\alpha}}du=\frac{d\omega_d\Gamma(\frac{d}{\alpha})}{\alpha}.$$
 At this stage we recognize the form of a gamma distribution of $X\sim\Gamma(\frac{d}{\alpha},1)$, we have: 
  $$I_{\rho}=\mathbb{P}\left(X\leq\rho^{\alpha}\right)$$
  Now we calculate using  the tail behavior of sub-Gamma random variables (see Section 2.4 of \cite{boucheron2013concentration}) to obtain : 
  $$\rho^{\alpha}=\mathbb{E}(X)+t+\sqrt{\frac{2d}{\alpha}t}=\frac{d}{\alpha}+t+\sqrt{\frac{2d}{\alpha}t}$$
  $$\mathbb{P}\left(X\leq \rho^{\alpha}\right)\geq 1-e^{-t}$$
Now if we write $\rho=\tau(s)d^{\frac{1}{\alpha}}$ and we set $1-e^{-t}=1-s$ this yields in turn:
$$t=-\log\left(s\right)$$ 
and: 
$$\tau(s)^{\alpha}=\frac{1}{d}\left(\frac{d}{\alpha}+\log\left(1/s\right)+\sqrt{\frac{2d}{\alpha}\log(1/s)}\right)=\frac{1}{\alpha}+\frac{1}{d}\log\left(1/s\right)+\sqrt{\frac{2}{d\alpha}\log(1/s)}$$
i.e : 
\begin{align}
\label{eq:tau}
    \tau(s)=\left(
    \frac{1}{\alpha}+\frac{1}{d}\log\left(1/s\right)+\sqrt{\frac{2}{d\alpha}\log(1/s)}
    \right)^{\frac{1}{\alpha}}.
\end{align}
\end{proof}

\subsection{Proof of Proposition \ref{prop:feasiblestart}}\label{sec:proof_of_feasiblestart}
Since both $\mu_0$ and $\Pi$ have a density, we need to check that $\sup_{x\in \mathbb{R}^d}\frac{\mu_0(\mathrm{d}x)}{\Pi(\mathrm{d}x)}\le \beta$.
Without loss of generality, assume that $f(0)=0$. By integrating the condition \eqref{eq:nabla_f} twice, and using the facts that $f$ is minimized at $0$ and $f(0)$, it follows that $f(x)\le f_0(x)$ for every $x$.
Let $\mu_0(x)=\frac{e^{-f_0(x)}}{Z_0}$ and $\Pi(x)=\frac{e^{-f(x)}}{Z}$, then it is easy to see that
$\sup_{x\in \mathbb{R}^d}\frac{\mu_0(x)}{\Pi(x)}\le \frac{Z}{Z_0}$. Hence we only need to bound the normalizing constants. Using the $m$-strong convexity, it follows that $f(x)\ge \frac{m}{2}\|x\|^2$, and so 
\[Z=\int_{x\in \mathbb{R}^d}e^{-f(x)} \mathrm{d}x\le \int_{x\in \mathbb{R}^d}e^{-\frac{m}{2}\|x\|^2} \mathrm{d}x = \left(\frac{2\pi}{m}\right)^{d/2}.\]
For the other normalizing constant $Z_0$, using the fact that $\alpha\ge 2$, we have
\begin{align*}
Z_0&=\int_{x\in \mathbb{R}^d}e^{-f_0(x)} \mathrm{d}x= \int_{x\in \mathbb{R}^d}e^{-K_2 \left(\frac{\|x\|^2}{2}+\frac{\|x\|^{\alpha}}{\alpha(\alpha-1)}\right)} \mathrm{d}x \\
&\ge \int_{x\in \mathbb{R}^d}e^{-K_2 \left(\frac{1}{2}+2\frac{\|x\|^{\alpha}}{\alpha(\alpha-1)}\right)} \mathrm{d}x
\intertext{ by change of variables}
&=e^{-\frac{K_2}{2}} \left(\frac{2 K_2}{\alpha(\alpha-1)}\right)^{-\frac{d}{\alpha}}\int_{y\in \mathbb{R}^d}e^{-\|y\|^{\alpha}} \mathrm{d}y.
\end{align*}
In the proof of Lemma \ref{lem_RS}, we show that
$\int_{x\in \mathbb{R}^d}e^{-\|x\|^{\alpha}} \mathrm{d}x=\frac{d\cdot \pi^{\frac{d}{2}}\cdot \Gamma(\frac{d}{\alpha})}{\alpha \Gamma(\frac{d}{2}+1)}$.
The claim of the proposition now follows by rearrangement.

\section{Discussion}
\label{sec:discussion}
In  this  paper, we proposed a novel Metropolized algorithm (MAO), we  also derived  non-asymptotic  bounds  on  the  mixing  time  of  our  Metropolized  algorithm on a class of thin tailed distributions.
Our algorithm id based on a two-phase scheme: (1) a proposal step followed by (2) an accept-reject
step. Our results show that our algorithm MAO yields better mixing times than that of MALA on the class of thin tailed potentials, our numerical results highlight the fact that MAO mixes significantly faster than MALA, and in some cases MAO is able to reach a stationary state whereas MALA fails to do so. Furthermore, we have leveraged the framework outlined in~\cite{chenhmc} to improve the dependence on the warmness of MAO from $\log\beta$ to $\log\log\beta$. Moreover, we have proved that our mixing time bounds remain unchanged when our off-line Optimization step fails to capture the true mode of the target distribution, thus further highlighting the merits of MAO over MALA. 

Several fundamental questions arise from our work. All of our results
are upper bounds on mixing time, and the question on the sharpness of our mixing times, and the optimal choice of the MAO step size remain an open question for future work. One potential path worth inspecting, is deriving explicit dimension dependant warm distributions for MAO, that may yield improved dependency on the dimension $d$.

Another open question is that of relaxing the assumptions on the class of target distribution, namely that of strong convexity and by extension that of convexity, thus relying only on imposing  growth assumption on the target distribution. It is an interesting question to leverage the large body of literature on Optimization schemes namely the recent work on relative smoothness and relative strong convexity~\cite{Maddison} to extend our results to a larger class of distributions such as heavy tailed distributions. We believe that MAO has a variety of applications and can be refined to outperform state of the art sampling algorithms, which we leave for future research.
\bibliographystyle{unsrtnat}
\bibliography{references} 

\begin{thebibliography}{24}
\providecommand{\natexlab}[1]{#1}
\providecommand{\url}[1]{\texttt{#1}}
\expandafter\ifx\csname urlstyle\endcsname\relax
  \providecommand{\doi}[1]{doi: #1}\else
  \providecommand{\doi}{doi: \begingroup \urlstyle{rm}\Url}\fi

\bibitem[Roberts and Tweedie(1996{\natexlab{a}})]{tweediemala}
Gareth~O. Roberts and Richard~L. Tweedie.
\newblock {Exponential convergence of Langevin distributions and their discrete
  approximations}.
\newblock \emph{Bernoulli}, 2\penalty0 (4):\penalty0 341 -- 363,
  1996{\natexlab{a}}.
\newblock \doi{bj/1178291835}.
\newblock URL \url{https://doi.org/}.

\bibitem[Dwivedi et~al.(2019)Dwivedi, Chen, Wainwright, and Yu]{dwivedi2018log}
Raaz Dwivedi, Yuansi Chen, Martin~J Wainwright, and Bin Yu.
\newblock Log-concave sampling: Metropolis-hastings algorithms are fast.
\newblock \emph{The Journal of Machine Learning Research}, 20\penalty0
  (183):\penalty0 1--42, 2019.

\bibitem[Chewi et~al.(2021)Chewi, Lu, Ahn, Cheng, Le~Gouic, and
  Rigollet]{chewi2020}
Sinho Chewi, Chen Lu, Kwangjun Ahn, Xiang Cheng, Thibaut Le~Gouic, and Philippe
  Rigollet.
\newblock Optimal dimension dependence of the metropolis-adjusted langevin
  algorithm.
\newblock In \emph{Conference on Learning Theory}, pages 1260--1300. PMLR,
  2021.

\bibitem[Chen et~al.(2020)Chen, Dwivedi, Wainwright, and Yu]{chenhmc}
Yuansi Chen, Raaz Dwivedi, Martin~J. Wainwright, and Bin Yu.
\newblock Fast mixing of metropolized hamiltonian monte carlo: Benefits of
  multi-step gradients.
\newblock \emph{Journal of Machine Learning Research}, 21\penalty0
  (92):\penalty0 1--72, 2020.
\newblock URL \url{http://jmlr.org/papers/v21/19-441.html}.

\bibitem[Metropolis et~al.(1953)Metropolis, Rosenbluth, Rosenbluth, Teller, and
  Teller]{metropolis1953equation}
Nicholas Metropolis, Arianna~W Rosenbluth, Marshall~N Rosenbluth, Augusta~H
  Teller, and Edward Teller.
\newblock Equation of state calculations by fast computing machines.
\newblock \emph{The Journal of Chemical Physics}, 21\penalty0 (6):\penalty0
  1087--1092, 1953.

\bibitem[Carpenter et~al.(2017)Carpenter, Gelman, Hoffman, Lee, Goodrich,
  Betancourt, Brubaker, Guo, Li, and Riddell]{carpenter2017stan}
Bob Carpenter, Andrew Gelman, Matthew~D Hoffman, Daniel Lee, Ben Goodrich,
  Michael Betancourt, Marcus Brubaker, Jiqiang Guo, Peter Li, and Allen
  Riddell.
\newblock Stan: A probabilistic programming language.
\newblock \emph{Journal of Statistical Software}, 76\penalty0 (1), 2017.

\bibitem[Smith(2014)]{smith2014mamba}
BJ~Smith.
\newblock Mamba: {M}arkov chain {M}onte {C}arlo ({MCMC}) for {B}ayesian
  analysis in julia, 2014.
\newblock URL \url{https://mambajl.readthedocs.io/en/latest/}.
\newblock Software available at mambajl.readthedocs.io.

\bibitem[Ma et~al.(2019)Ma, Chen, Jin, Flammarion, and Jordan]{Ma}
Yi-An Ma, Yuansi Chen, Chi Jin, Nicolas Flammarion, and Michael Jordan.
\newblock Sampling can be faster than optimization.
\newblock \emph{Proceedings of the National Academy of Sciences}, 116:\penalty0
  201820003, 09 2019.
\newblock \doi{10.1073/pnas.1820003116}.

\bibitem[Gelman et~al.(1997)Gelman, Gilks, and Roberts]{Roberts97}
A.~Gelman, W.~R. Gilks, and G.~O. Roberts.
\newblock {Weak convergence and optimal scaling of random walk Metropolis
  algorithms}.
\newblock \emph{The Annals of Applied Probability}, 7\penalty0 (1):\penalty0
  110 -- 120, 1997.
\newblock \doi{10.1214/aoap/1034625254}.
\newblock URL \url{https://doi.org/10.1214/aoap/1034625254}.

\bibitem[Roberts and Rosenthal(2001)]{roberts2001optimal}
Gareth~O Roberts and Jeffrey~S Rosenthal.
\newblock Optimal scaling for various {M}etropolis-{H}astings algorithms.
\newblock \emph{Statistical Science}, 16\penalty0 (4):\penalty0 351--367, 2001.

\bibitem[Lov{\'a}sz et~al.(1993)]{lovasz1993randomsurvey}
L{\'a}szl{\'o} Lov{\'a}sz et~al.
\newblock Random walks on graphs: {A} survey.
\newblock \emph{Combinatorics, Paul erdos is eighty}, 2\penalty0 (1):\penalty0
  1--46, 1993.

\bibitem[Aldous and Fill(2002)]{aldous2002reversible}
David Aldous and James~Allen Fill.
\newblock Reversible {M}arkov chains and random walks on graphs, 2002.
\newblock Unfinished monograph, recompiled 2014, available at
  \url{http://www.stat.berkeley.edu/~aldous/RWG/book.html}.

\bibitem[Tierney(1994)]{tierney1994markov}
Luke Tierney.
\newblock Markov chains for exploring posterior distributions.
\newblock \emph{The Annals of Statistics}, pages 1701--1728, 1994.

\bibitem[Smith and Roberts(1993)]{smith1993bayesian}
Adrian~FM Smith and Gareth~O Roberts.
\newblock Bayesian computation via the {G}ibbs sampler and related {M}arkov
  chain {M}onte {C}arlo methods.
\newblock \emph{Journal of the Royal Statistical Society: Series B
  (Methodological)}, 55\penalty0 (1):\penalty0 3--23, 1993.

\bibitem[Roberts et~al.(2004)Roberts, Rosenthal, et~al.]{roberts2004general}
Gareth~O Roberts, Jeffrey~S Rosenthal, et~al.
\newblock {General state space {M}arkov chains and {MCMC} algorithms}.
\newblock \emph{Probability Surveys}, 1:\penalty0 20--71, 2004.

\bibitem[Dalalyan(2016)]{dalalyan2016theoretical}
Arnak~S Dalalyan.
\newblock Theoretical guarantees for approximate sampling from smooth and
  log-concave densities.
\newblock \emph{Journal of the Royal Statistical Society: Series B (Statistical
  Methodology)}, 2016.

\bibitem[Cheng and Bartlett(2017)]{cheng2017convergence}
Xiang Cheng and Peter Bartlett.
\newblock {Convergence of Langevin MCMC in KL-divergence}.
\newblock \emph{arXiv preprint arXiv:1705.09048}, 2017.

\bibitem[Bubeck(2015)]{Bubeck1}
Sébastien Bubeck.
\newblock Convex optimization: Algorithms and complexity.
\newblock \emph{Foundations and Trends® in Machine Learning}, 8:\penalty0
  231--357, 01 2015.
\newblock \doi{10.1561/2200000050}.

\bibitem[Nesterov(2003)]{nesterov}
Yurii Nesterov.
\newblock \emph{Introductory lectures on convex optimization: A basic course},
  volume~87.
\newblock Springer Science \& Business Media, 2003.

\bibitem[Roberts and Tweedie(1996{\natexlab{b}})]{roberts1996exponential}
Gareth~O Roberts and Richard~L Tweedie.
\newblock Exponential convergence of {L}angevin distributions and their
  discrete approximations.
\newblock \emph{Bernoulli}, pages 341--363, 1996{\natexlab{b}}.

\bibitem[Lu et~al.(2016)Lu, Freund, and Nesterov]{Lu}
Haihao Lu, Robert Freund, and Yurii Nesterov.
\newblock Relatively-smooth convex optimization by first-order methods, and
  applications.
\newblock \emph{SIAM Journal on Optimization}, 28, 10 2016.
\newblock \doi{10.1137/16M1099546}.

\bibitem[Maddison et~al.(2021)Maddison, Paulin, Teh, and Doucet]{Maddison}
Christopher Maddison, Daniel Paulin, Yee Teh, and Arnaud Doucet.
\newblock Dual space preconditioning for gradient descent.
\newblock \emph{SIAM Journal on Optimization}, 31:\penalty0 991--1016, 01 2021.
\newblock \doi{10.1137/19M130858X}.

\bibitem[Cover and Thomas(2006)]{Cover}
Thomas~M. Cover and Joy~A. Thomas.
\newblock \emph{Elements of Information Theory (Wiley Series in
  Telecommunications and Signal Processing)}.
\newblock Wiley-Interscience, USA, 2006.
\newblock ISBN 0471241954.

\bibitem[Boucheron et~al.(2013)Boucheron, Lugosi, and
  Massart]{boucheron2013concentration}
St{\'e}phane Boucheron, G{\'a}bor Lugosi, and Pascal Massart.
\newblock \emph{Concentration inequalities: A nonasymptotic theory of
  independence}.
\newblock Oxford university press, 2013.

\end{thebibliography}
\end{document}